\theoremstyle{plain}
\newtheorem{theorem}{Theorem}[section]
\newtheorem{proposition}[theorem]{Proposition}
\theoremstyle{definition}
\theoremstyle{remark}
\title{Speeding up PCA with priming}
\author{%
  Bálint Máté \\
  Department of Computer Science\\
  University of Geneva\\
  1227 Carouge, Switzerland \\
  \texttt{balint.mate@unige.ch} \\
  \And
  François Fleuret \\
  Department of Computer Science\\
  University of Geneva\\
  1227 Carouge, Switzerland \\
  \texttt{francois.fleuret@unige.ch} \\
}
\DeclareMathOperator{\Var}{Var}
\DeclareMathOperator{\spn}{span}
\begin{document}

\maketitle

\begin{abstract}
We introduce primed-PCA (pPCA), a two-step algorithm for speeding up the approximation of principal components. 
This algorithm first runs any approximate-PCA method to get an initial estimate of the principal components (priming), and then applies an exact PCA in the subspace they span. Since this subspace is of small dimension in any practical use, the second step is extremely cheap computationally. Nonetheless, it improves accuracy significantly for a given computational budget across datasets. In this setup, the purpose of the priming is to narrow down the search space, and prepare the data for the second step, an exact calculation.
We show formally that pPCA improves upon the priming algorithm under very mild conditions, and we provide experimental validation on both synthetic and real large-scale datasets showing that it systematically translates to improved performance. 
In our experiments we prime pPCA by several approximate algorithms and report an average speedup by a factor of 7.2 over Oja's rule, and a factor of 10.5 over EigenGame.
\end{abstract}
\section{Introduction}
Principal Component Analysis is a widely used tool both within and  outside of computer science. Introduced more than a century ago by \citet{PCA}, it has been used for compression and feature extraction, and has led to many important works and variants \citep{EigenFace,kernel-PCA}.

Let $X \in \mathbb{R}^{n\times d}$ be a centered dataset of cardinality $n$ and dimension $d$. 
In terms of linear algebra, the principal directions are the eigenvectors of the covariance matrix $X^TX$.
Unfortunately, if $d$ and $n$ are as high as in modern datasets, then traditional approaches, like computing the full-SVD of the covariance matrix gets computationally challenging. 
This has led to several attempts trying to circumvent this problem by using approximate/heuristic alternatives to full-SVD for finding the first few principal components of datasets of larger scale. Recently, \citet{gemp2020eigengame} introduced the EigenGame-algorithm and managed to perform approximate PCA on datasets of dimension $>10M$.

The contributions of this paper are:

In Section \ref{sec:ppca}, we introduce a family of approximate-PCA algorithms that performs a one-time, cheap full-PCA step on the output of the priming algorithm. 

In Section \ref{sec:theory}, we analyse this full-PCA step using elementary linear algebra and derive the mild theoretical conditions under which it improves the performance of the priming algorithm. 

In Section \ref{sec:experiments}, we study it empirically on several datasets by priming it by the power rule \citep{Rutishauser1970SimultaneousIM}, Oja's algorithm \citep{oja-simplified-neuron-model-1982} and EigenGame \citep{gemp2020eigengame} and demonstrate the improvement in accuracy and convergence speed. Whenever the size of the dataset allows we adopt the metric of ``Longest Correct Eigenvector Streak'' of \citet{gemp2020eigengame} that accounts for proper estimation of individual eigenvectors.

\begin{figure}[h]
\centering

\begin{minipage}{.54\textwidth}
\centering
 \includegraphics[width=\textwidth,trim= 20 10 40 10]{./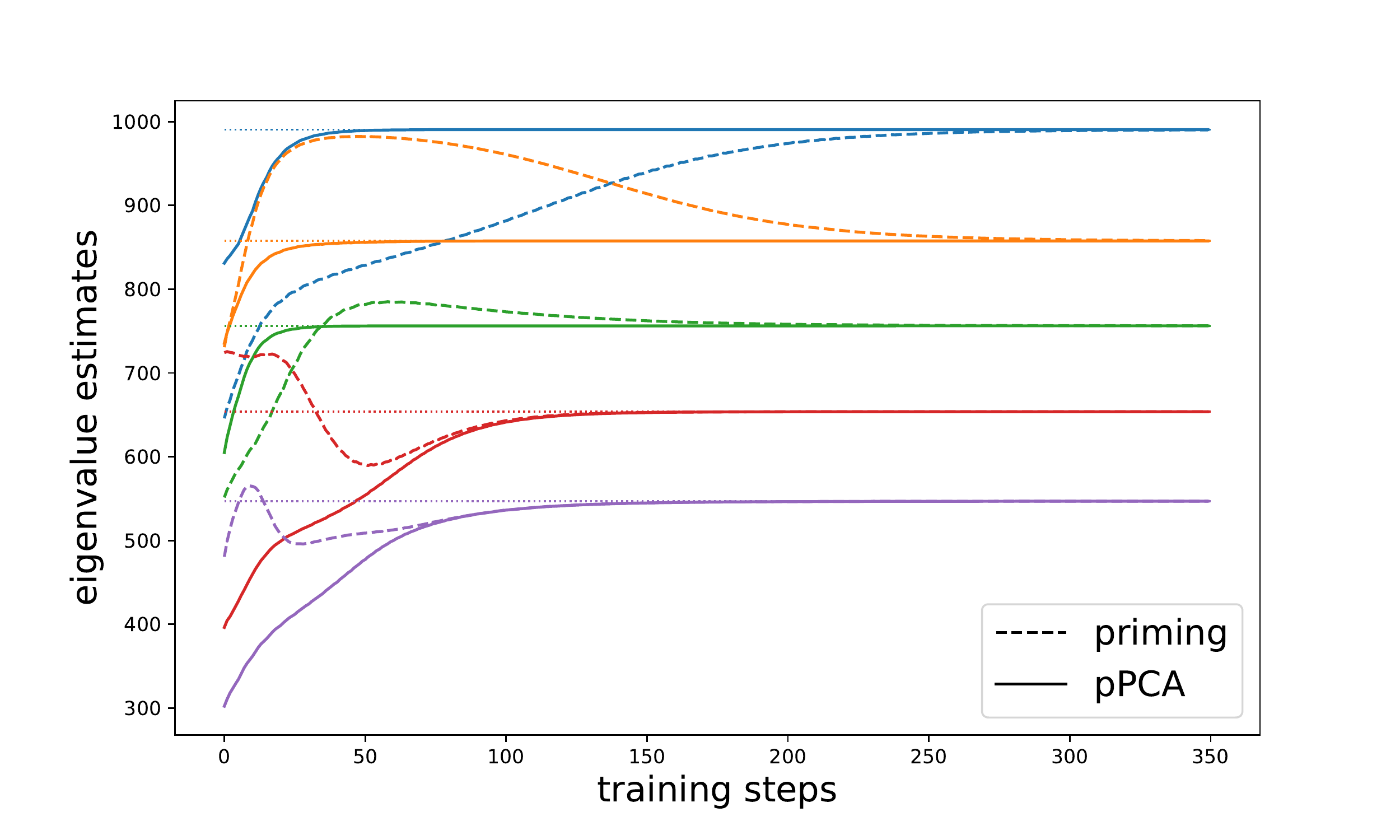}
\end{minipage}
\begin{minipage}{.45\textwidth}
\centering
\vspace{-2mm}
\begin{algorithm}[H]
      \label{alg:EigPCA}
      \textbf{Input}: Dataset of dimension $d$, $\{\textbf{x}_i\}_{i=1}^n$\\
      \textbf{Parameter}: $k>0$, $l>0$, \textsc{priming}  \\
      \textbf{Output}:  First $k$ principal components
      \begin{algorithmic}[1] 
        \State $v_1,...,v_{k+l} \gets \mathrm{\textsc{priming}}(\{\textbf{x}_i\}_{i=1}^n,k+l)$
        \State $V \gets \texttt{span}(v_1,...,v_{k+l})$
        \State $\tilde{\textbf{x}}_i \gets \operatorname{proj}_V (\textbf{x}_i)$
        \State $e_1,...,e_k \gets \textbf{full-PCA}(\{\tilde{\textbf{x}}_i\}_{i=1}^n,k)$
        \State \textbf{return} $e_1,...,e_k$
        \end{algorithmic}
        \caption{Primed PCA ($\text{pPCA}_l$)}
      \end{algorithm}
\end{minipage}
  \caption{(Left) The results of an example run of the proposed algorithm on the synthetic data with exponentially decaying spectrum. Thin, dotted lines denote the eigenvalues of the covariance matrix. Dashed lines denote the eigenvalue estimates of the priming algorithm, while solid lines denote eigenvalue estimates of pPCA. (Right) Pseudocode for primed-PCA.}
  \end{figure}

\section{Related Work}
\subsection{Power method}
The power method, introduced by \citet{Rutishauser1970SimultaneousIM}, initialises a random vector $x_0 \in \mathbb{R}^d$ of unit norm
and iteratively computes $x_{i+1}=\tfrac{Mx_i}{\|Mx_i\|}$ until the first eigendirection of $M=X^TX$ dominates. The algorithm terminates when $||x_i - x_{i+1}|| < \epsilon$ for some small $\epsilon$.
To capture multiple principal components, \citet{SharmaPCA} propose to simply repeat the above algorithm and orthogonalize after every training step to stay in the complement of the span of the higher order directions.

\subsection{Oja's rule}
Oja's learning rule \citep{oja-simplified-neuron-model-1982} considers the output of a single neuron $y=\textbf{x}^T\textbf{w}=\textbf{w}^T\textbf{x}$
with update rule $\Delta \textbf{w}=\alpha (\textbf{x}y-y^2\textbf{w})=\alpha (\textbf{x}\textbf{x}^T\textbf{w}-(\textbf{w}^T\textbf{x})(\textbf{x}^T\textbf{w})\textbf{w})$.
Averaging over all points $C=\tfrac{1}{n}\sum_i\mathbf{x}_i\mathbf{x}_i^T$,  and plugging in the fixed-point condition ($\Delta \textbf{w}=0$) one gets
$C \textbf{w} -  (\textbf{w}^T  C\textbf{w}) \textbf{w}=0 $, an eigenvector-eigenvalue equation for the covariance matrix $C$, i.e. the weights following this dynamics end up as the coordinates of principal directions.
To find multiple principal components, the algorithm initializes $w_1,...,w_m$ weight vectors and uses the update rule
$\Delta \textbf{w}_m=\alpha y_m\big(\textbf{x}-\sum_{l\leq m} y_l\textbf{w}_l\big)$.
This generalized version of Oja's rule is sometimes called Sanger's rule \citep{Oja1992PrincipalCM,Sanger}.

\subsection{EigenGame}\label{EigenGame}
EigenGame \citep{gemp2020eigengame} interprets PCA in a game-theoretical framework, where vectors on the unit sphere $\mathcal S^{d-1} \subset \mathbb{R}^d$ correspond to strategies of players, playing the following multiplayer game. Let $v_1,...v_k \in \mathbb{R}^d (k<d)$ denote the players. The utility function of the first player is
\begin{equation}
    U_1(v_1)=v_1^T(X^TX)v_1
\end{equation}
i.e. player 1 is trying to tune the vector $v_1 \in \mathcal S^{d-1}$\footnote{$S^{d-1}$ denotes the (d-1) dimensional sphere, i.e. the $d$-dimensional vectors of unit norm.} to capture the maximum variance of the data. By definition, $v_1$'s goal is to find the first principal component of $X$. 
To make $v_2$  find the second principal component, the authors set the utility function of $v_2$ to
\begin{equation}
U_2(v_2)=v_2^T(X^TX)v_2-\frac{(v_2^T(X^TX)v_1)^2}{v_1^T(X^TX)v_1}
\end{equation}
where the first term rewards $v_2$ if it finds a direction of high variance but the second term cancels the reward from its component parallel to $v_1$.
Similarly, the utility function of $v_j(1<j\leq k)$ is
\begin{equation}
    U_j(v_j)=v_j^T(X^TX)v_j-\sum_{1\leq i<j}\frac{(v_j^T(X^TX)v_i)^2}{v_i^T(X^TX)v_i}
\end{equation}
Finding the Nash-equilibrium of the game defined by the utility functions $U_1,...,U_k$ is equivalent to finding the principal components of $X$.

This Gram-Schmidt-like setup defines a hierarchy between the players. Intuitively, $v_1$ aims to maximise it's variance without having to care about the other players, while all other players are also trying to maximise their variance, but have the additional constraint to stay orthogonal to the players with lower indices.

\section{Priming PCA}
\label{sec:ppca}
\subsection{\textsc{priming}}
Our algorithm makes use of an already existing approximate PCA algorithm (priming), not necessarily one of those we mentioned above. To avoid the cumbersome "any approximate PCA algorithm" in the rest of the paper, let \textsc{priming} be a placeholder for the priming algorithm to which any PCA algorithm can be assigned.

During the training process of \textsc{priming}, it can happen that the exact principal directions $e_1,...e_k$ are not yet properly captured by the predicted directions $v_1,...,v_k$, but they already lie (up to some error term) in $\texttt{span}( v_1,...,v_k )$.

\subsection{PCA as post-processing} 
\textbf{Question.}
Are $v_1,...,v_k$ the best approximations of $e_1,...e_k$ in $\texttt{span}( v_1,...,v_k )$?

\noindent Whenever the answer to this question is not affirmative, it makes sense to further optimize $v_1,...v_k$ within $\texttt{span}(v_1,...,v_k)$. Since usually $k \ll d$, full-PCA is feasible after projecting  the data to this $k$-dimensional subspace. We term the resulting algorithm, the combination of \textsc{priming} and full-PCA, primed-PCA (pPCA), and the algorithm for the initial approximation will be referred to as the priming algorithm or \textsc{priming}.

We expect that the full-PCA will ease numerical issues and inaccuracies related to stochastic gradient descent and speed up convergence.
In this paper we investigate how full-PCA improves the accuracy and speeds up \textsc{priming}. 

\subsection{Extra components}\label{ExtraComponents}
Knowing that after running \textsc{priming}, we will post-process the output and not accept it as the final prediction of the principal components, allows us to modify \textsc{priming} in a way that makes the job of full-PCA step easier.

For instance, if we are interested in the first $k$ principal components of $X$, we could run \textsc{priming} to search for $k+l$ principal directions, project onto the $(k+l)$-dimensional $\texttt{span}(v_1,...,v_k,...v_{k+l})$ and do full-PCA to extract the first $k$ components. Since the computational cost of running \textsc{priming} (usually) scales quadratically in the number of directions, $l$ has to be small. The question, of course, is whether or not the  performance gained from a few additional components compensates  for increased computational costs. In the rest of the paper, we refer to pPCA with $l$ additional directions as $\text{pPCA}_l$.

  \begin{figure*}
    \centering
    \begin{subfigure}[t]{0.3\textwidth}
      \includegraphics[width=\textwidth,trim=50 0 30 0]{./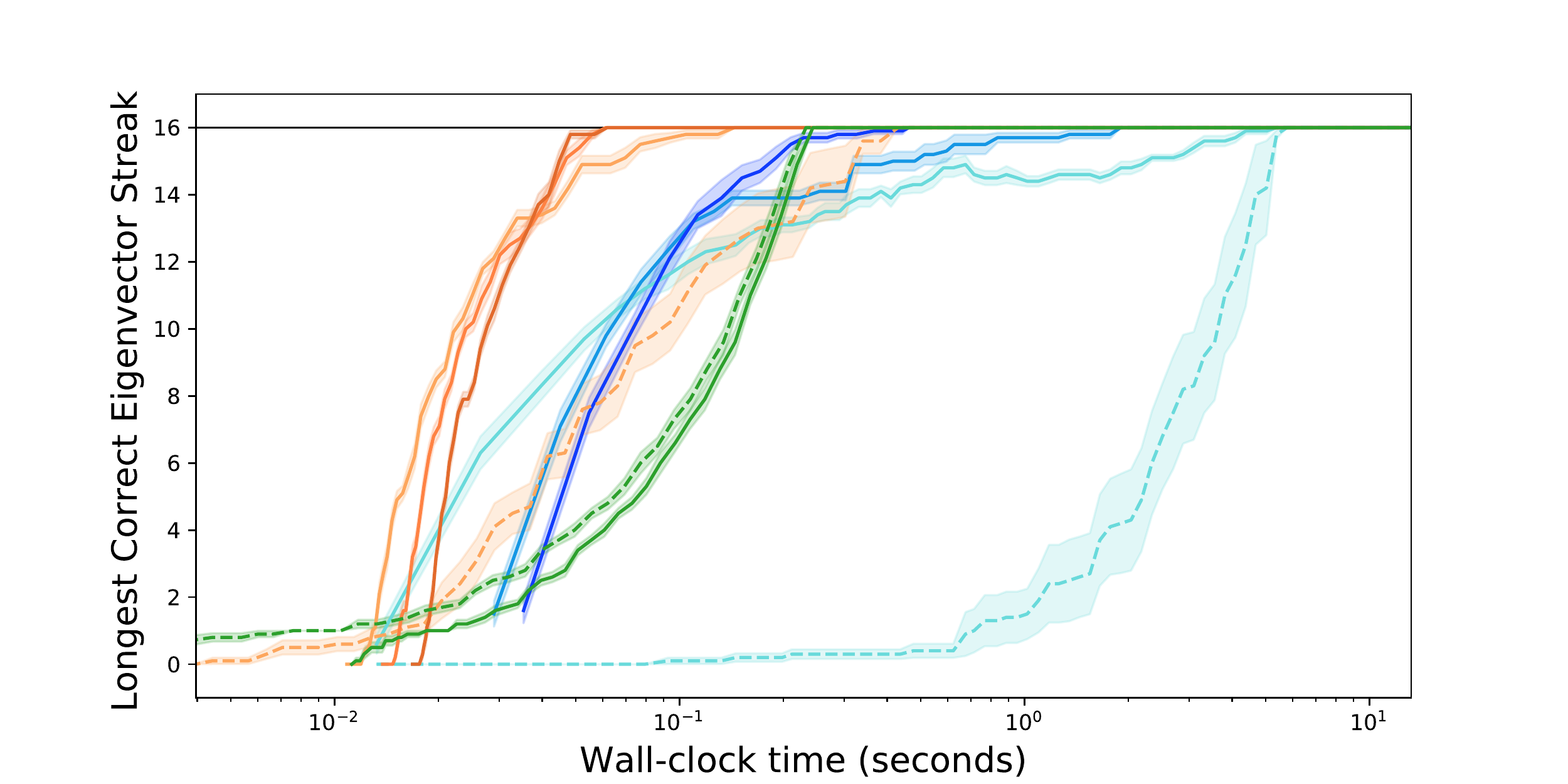}
      \caption{Exponential Synthetic data}
    \end{subfigure}
    \begin{subfigure}[t]{0.3\textwidth}
      \includegraphics[width=\textwidth,trim=50 0 30 0]{./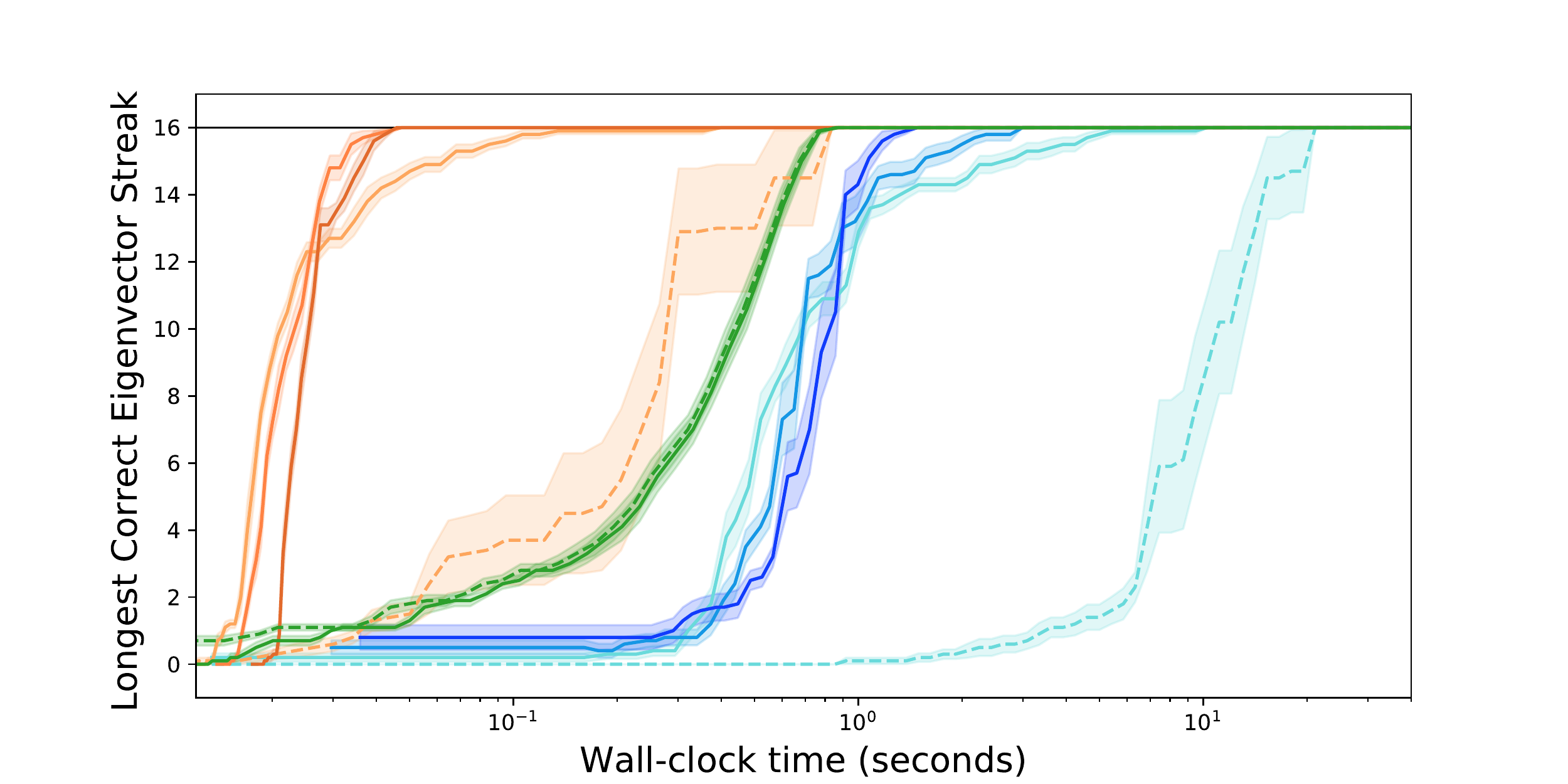}
      \caption{Linear Synthetic data}
    \end{subfigure}
    \begin{subfigure}[t]{0.3\textwidth}
      \centering
      \includegraphics[width=\textwidth,trim=50 0 30 0]{./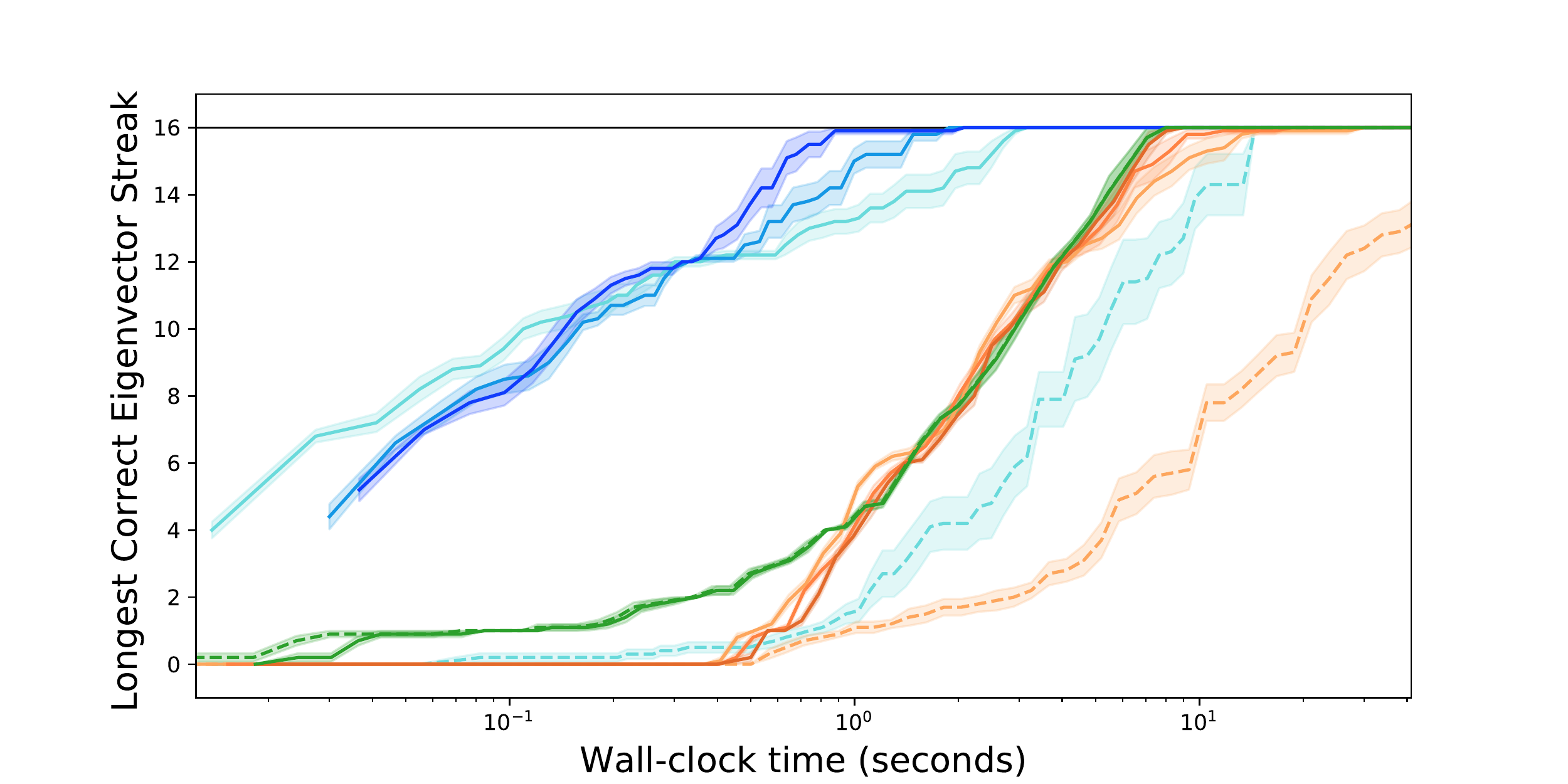}
      \caption{MNIST}
    \end{subfigure}
    \begin{subfigure}[t]{0.3\textwidth}
      \includegraphics[width=\textwidth,trim=50 0 30 0]{./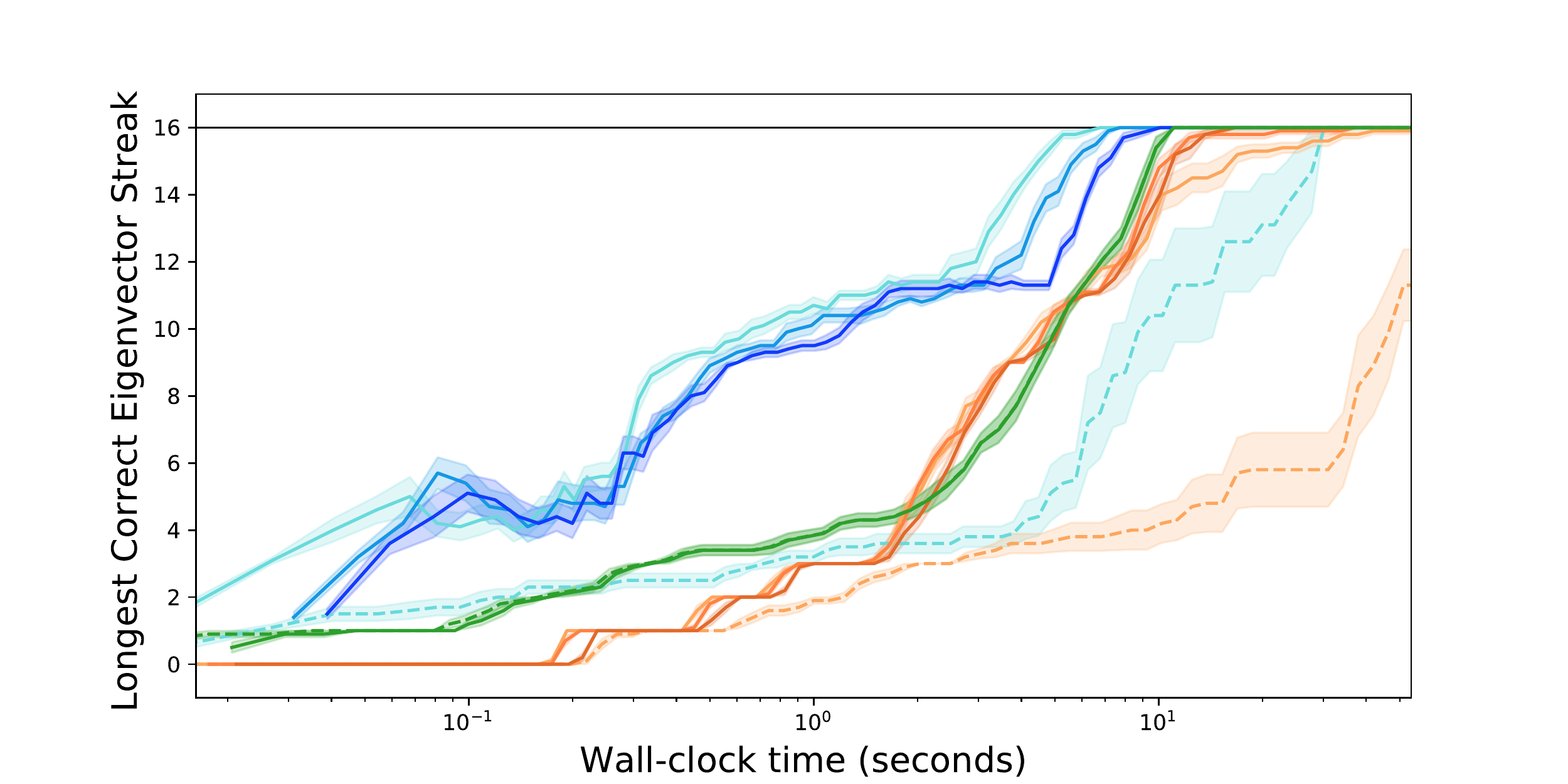}
      \caption{CIFAR10}
    \end{subfigure}
    \begin{subfigure}[t]{0.3\textwidth}
      \includegraphics[width=\textwidth,trim=50 0 30 0]{./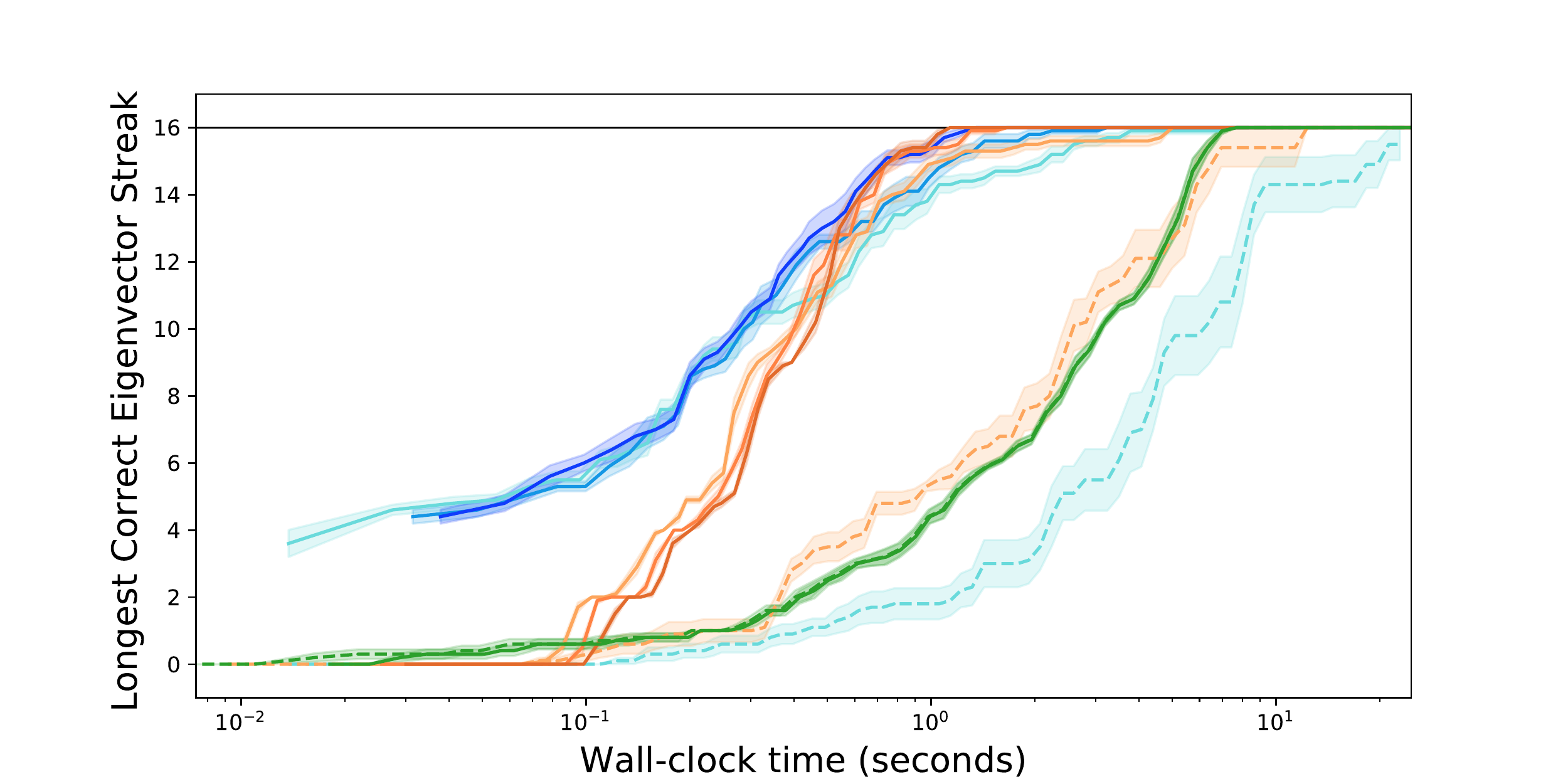}
      \caption{NIPS bag of words}
    \end{subfigure}
    \begin{subfigure}[t]{0.3\textwidth}
      \centering
      \includegraphics[width=3.6cm]{./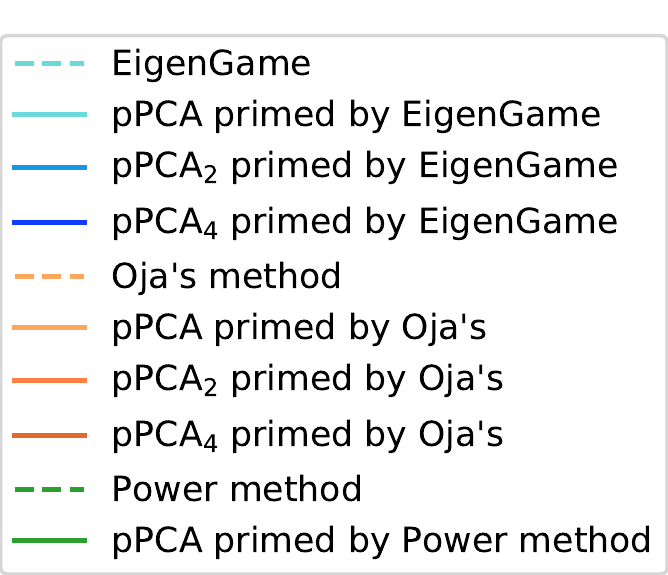}
    \end{subfigure}
    \caption{Plots showing the effect of the full-PCA step and the extra components on the small-scale experiments for $V=\pi/8$. 
    Solid lines are pPCA runs, dashed lines denote the 3 baselines (Power Method, Oja's, EigenGame).
    The pPCA curves have been primed the baseline of the same color.
     Shaded regions denote $\pm$ standard error of the mean. Plots for $V\in \{ \pi/16, ... ,\pi/1024\}$ are available in the appendix.
     }
    \label{plotFigure1}
  \end{figure*}
    \begin{figure*}
      \centering
      \begin{subfigure}[b]{.4\textwidth}
      \includegraphics[width=\textwidth,trim=50 0 30 0]{./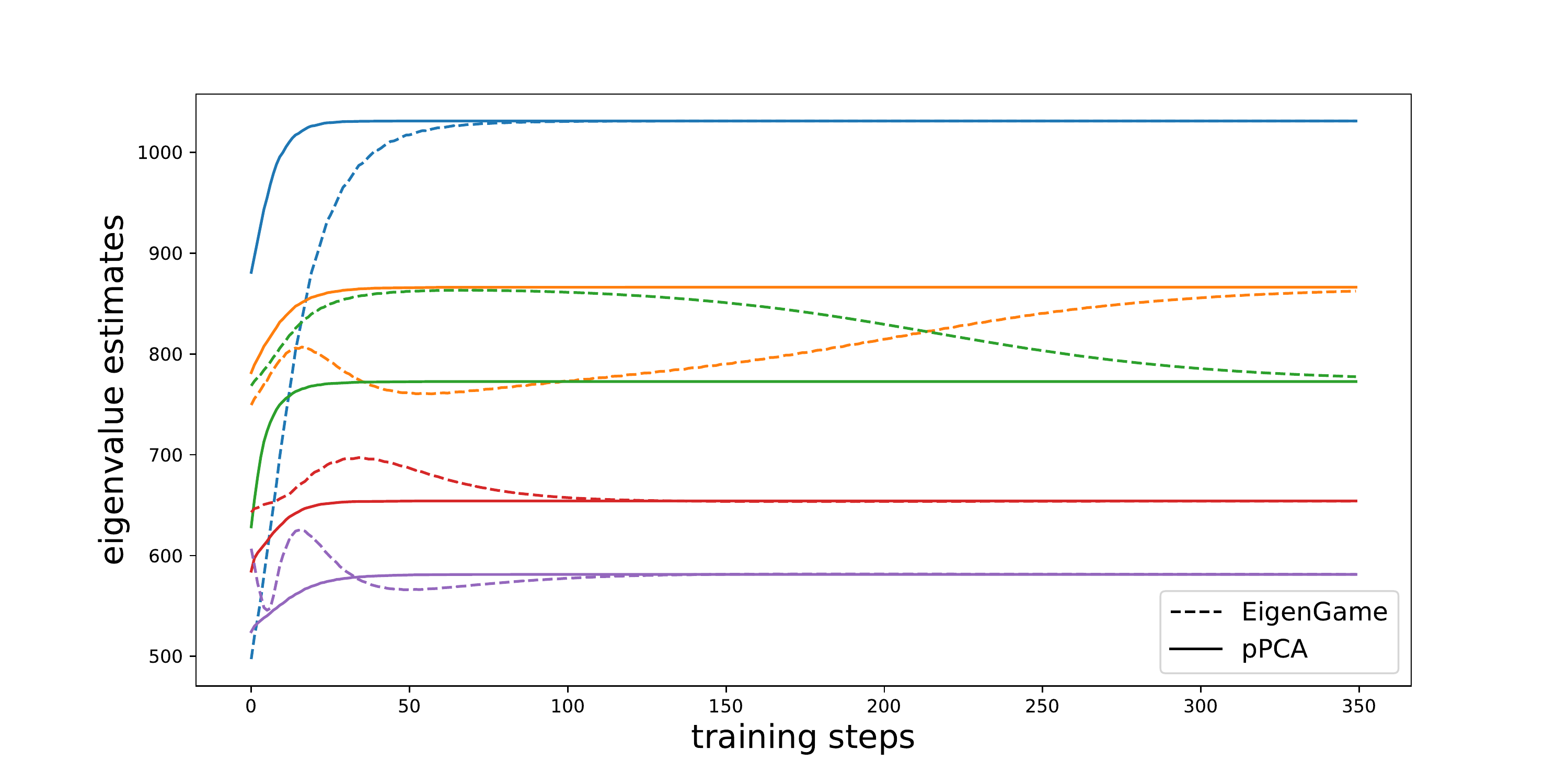}
      \caption{Exponential Synthetic data}
      \label{exp}
      \end{subfigure}
      \begin{subfigure}[b]{.4\textwidth}
        \includegraphics[width=\textwidth,trim=50 0 30 0]{./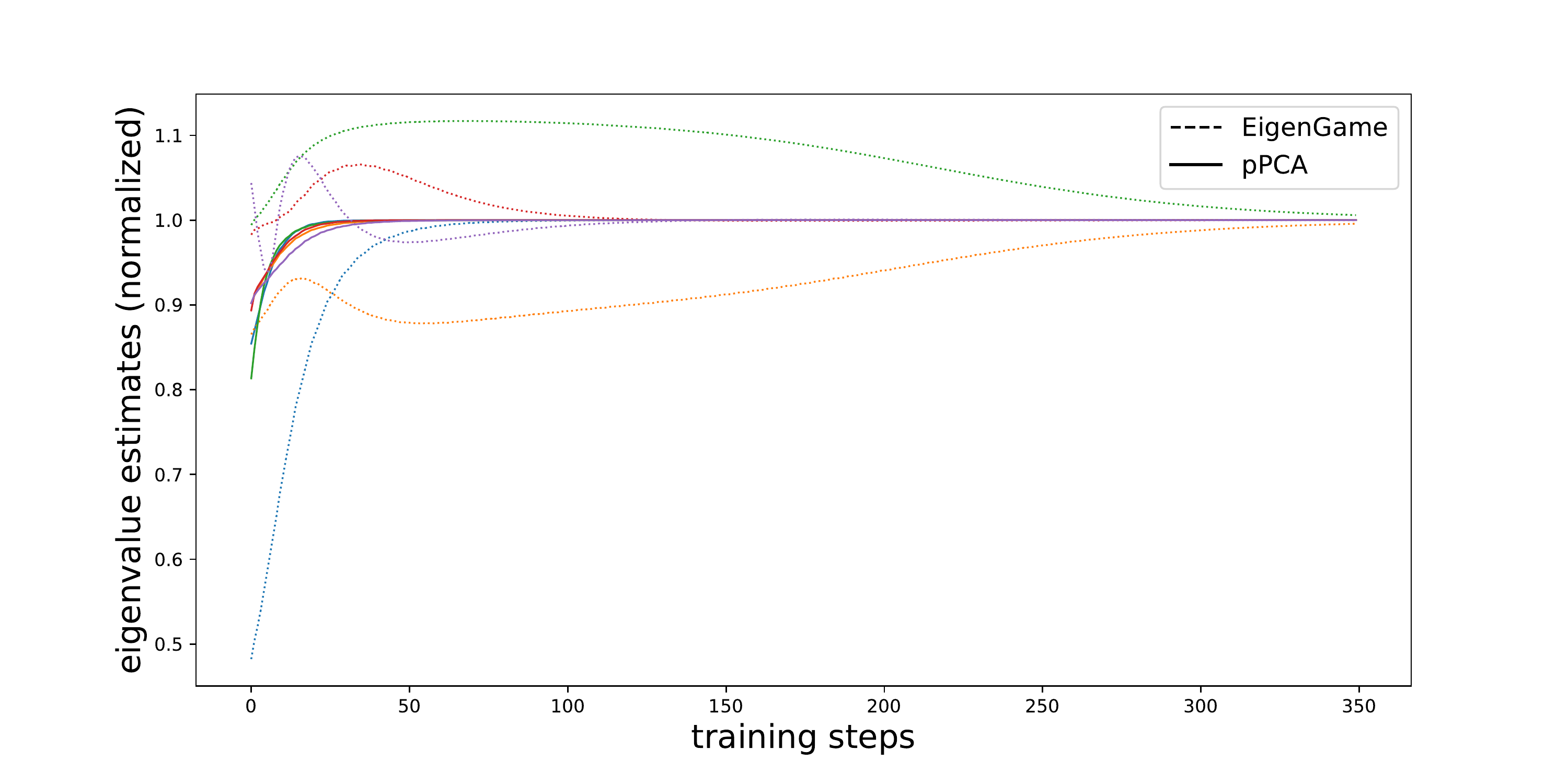}
        \caption{Exponential Synthetic data, normalized}
        \label{exp_norm}
        \end{subfigure}
      \caption{Figure (\subref{exp}) shows the estimates of the eigenvalues by pPCA and EigenGame on the Exponential Synthetic dataset with $k=5$. Figure (\subref{exp_norm}) is obtained by normalizing the curves of (\subref{exp})  by the average value of their respective final 5 datapoints.}
      \label{plotFigure3}
    \end{figure*}

  \begin{figure*}
    \centering
  \begin{subfigure}{.4\textwidth}
   \includegraphics[width=\textwidth,trim=50 0 30 0]{./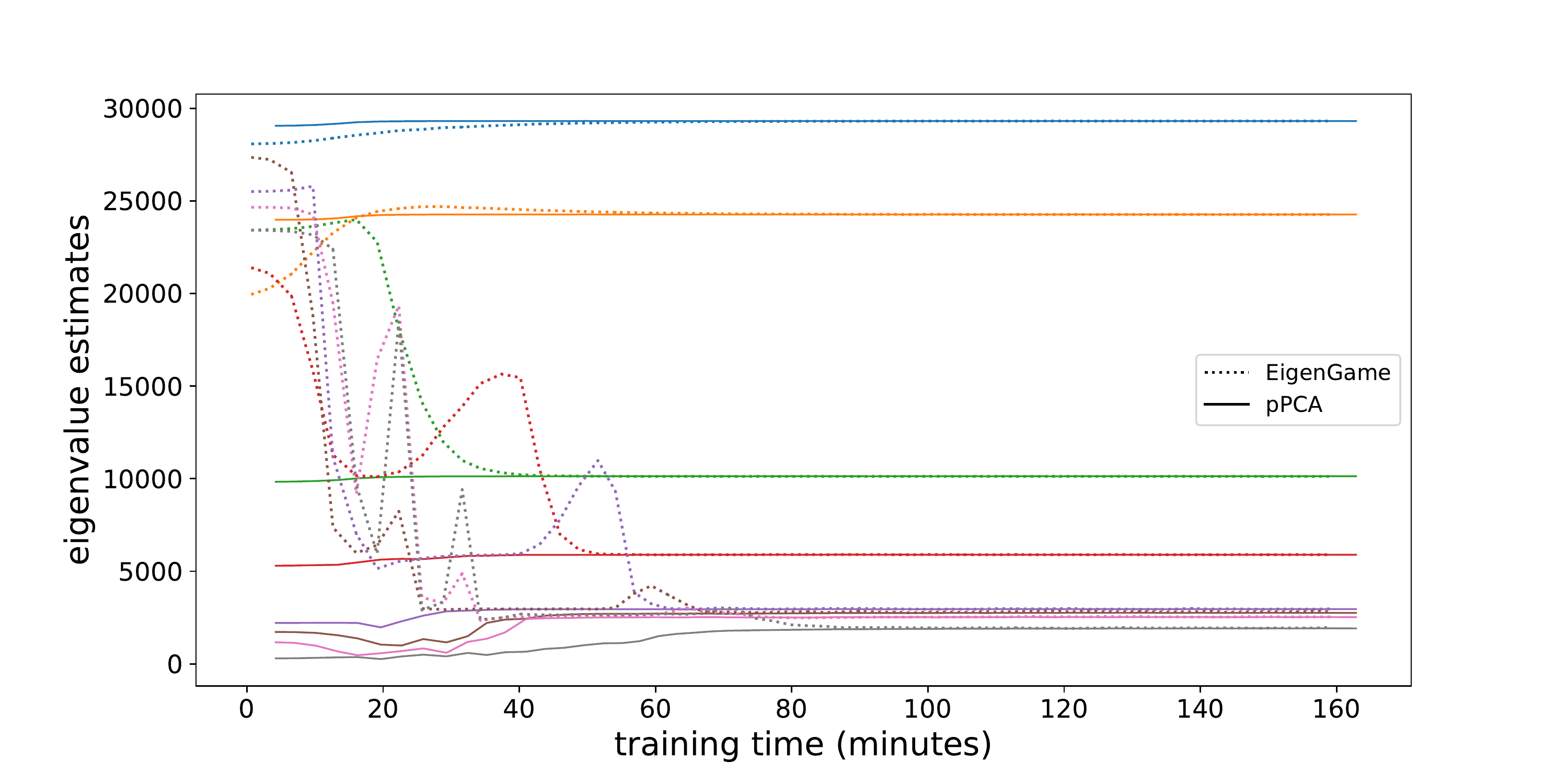}
  \caption{Large Lobster Image Dataset}
  \label{lobster}
  \end{subfigure}
  \begin{subfigure}{.4\textwidth}
     \includegraphics[width=\textwidth,trim=50 0 30 0]{./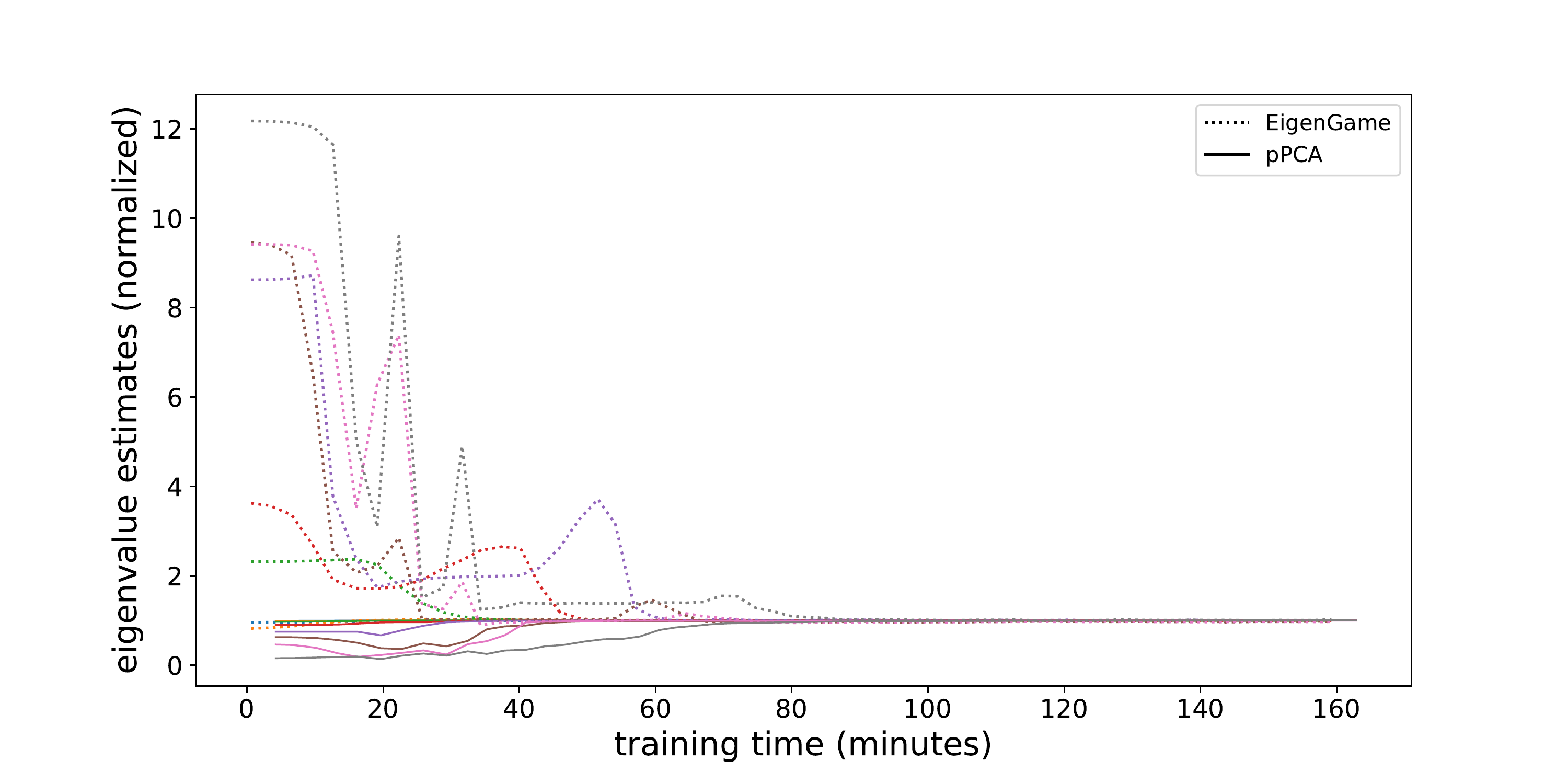}
    \caption{Large Lobster Image Dataset, normalized}
    \label{lobster_normalized}
    \end{subfigure}
    \begin{subfigure}{.4\textwidth}
       \includegraphics[width=\textwidth,trim=50 0 30 0]{./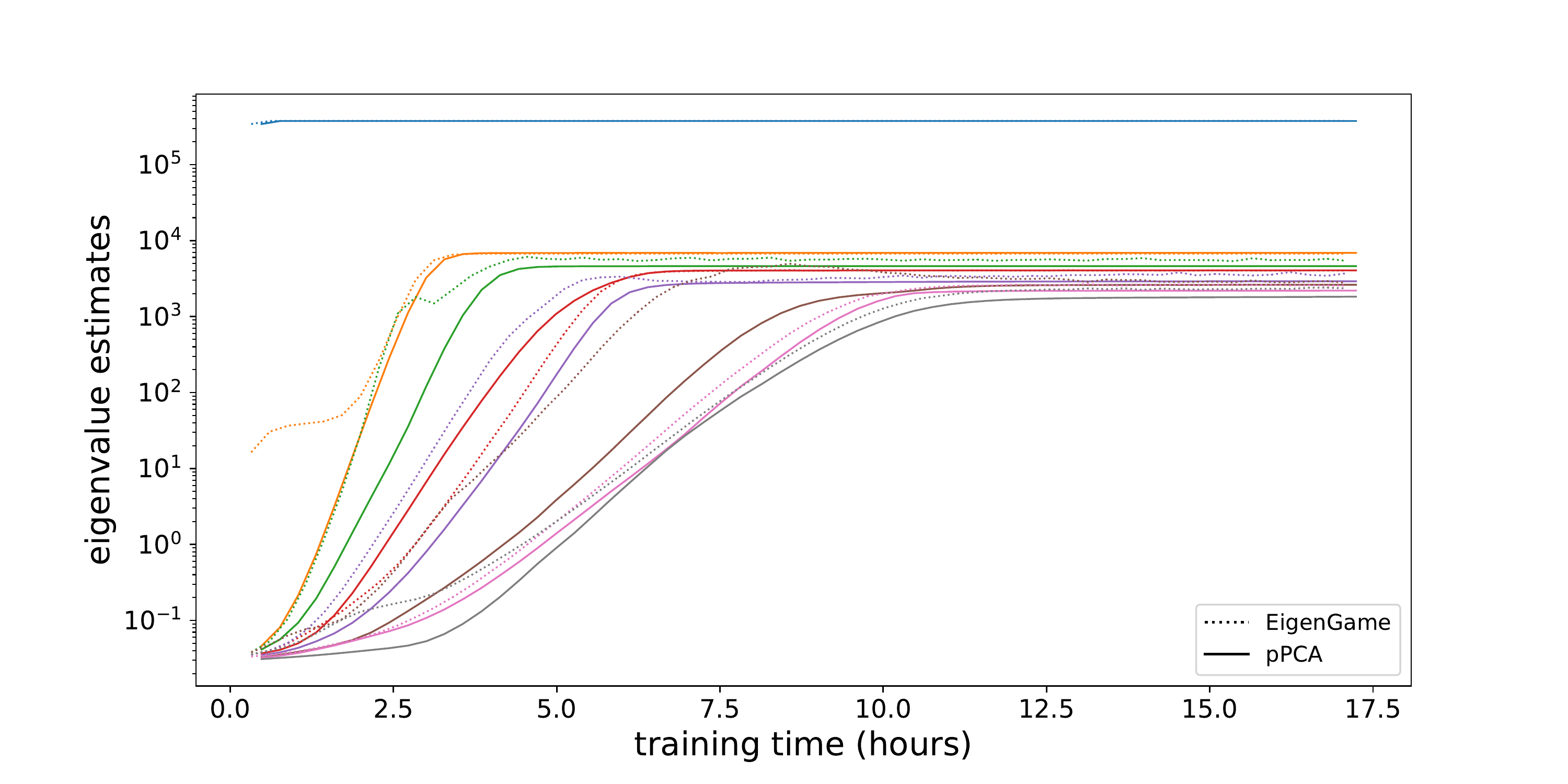}
      \caption{ResNet activations}
      \label{resnet}
    \end{subfigure}
    \begin{subfigure}{.4\textwidth}
     \includegraphics[width=\textwidth,trim=50 0 30 0]{./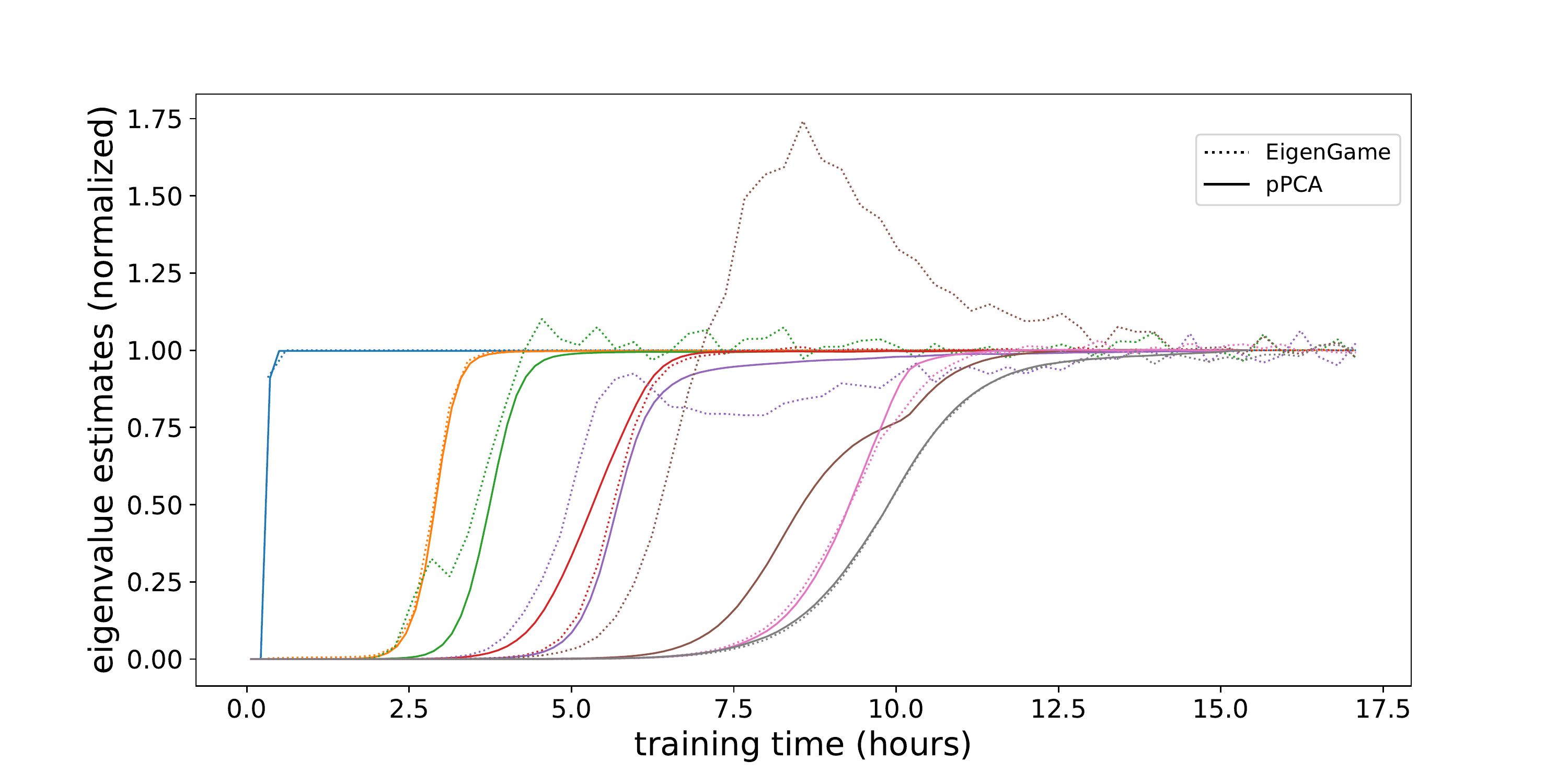}
    \caption{ResNet activations, normalized}
    \label{resnet_normalized}
    \end{subfigure}
  \caption{Plots showing the effect of the full-PCA on the large-scale experiments. Figures (\subref{lobster}) and (\subref{resnet}) show the estimates of the eigenvalues by pPCA and EigenGame. Figures (\subref{lobster_normalized}) and (\subref{resnet_normalized}) are obtained by normalizing the curves of (\subref{lobster}) and (\subref{resnet}).}
  \label{plotFigure2}
  \end{figure*}

\section{Theoretical analysis}\label{sec:theory}
It is intuitive to expect that the extra full-PCA step on the projected data should improve (or at least not hurt) the accuracy of the predicted principal directions. In this section, we analyse under what conditions  this intuitive expectation holds up.
\subsection{A counterexample} \label{counterexp}
Consider the $3$-dimensional, centered dataset of 6 elements, $X=\{\pm 3 (1,0,0), \pm 2(0,1,0), \pm(0,0,1)\}$. Suppose we are interested in finding the first 2 principal components using the process described above.
The covariance matrix is given by
\begin{equation}
    \tfrac{1}{6}X^TX=\tfrac{1}{3}\begin{pmatrix}
9 & 0 & 0\\
0 & 4 & 0 \\
0 & 0 & 1
\end{pmatrix}
\end{equation}
and the first two principal components are $e_1=(1,0,0), e_2=(0,1,0)$.
Now let us suppose that \textsc{priming} results in the vectors
$e_1^{\mathrm{\textsc{priming}}}= (\epsilon,0,\sqrt{1-\epsilon^2}), e_2^{\mathrm{\textsc{priming}}}=(0,1,0)$.
for some small $\epsilon \ll 1$. That is, the second principal direction is perfectly recovered. On the other hand, the first principal component is estimated to be a linear combination of the first and third principal directions almost having no contribution from the first one. 

Now, projecting the data onto $\texttt{span}(e_1^{\mathrm{\textsc{priming}}},e_2^{\mathrm{\textsc{priming}}})$  gives us
\begin{equation}
     \tilde X=\{\pm 3\epsilon (\epsilon,0,\sqrt{1-\epsilon^2}), \pm 2(0,1,0),\pm\sqrt{1-\epsilon^2}(\epsilon,0,\sqrt{1-\epsilon^2})\}
\end{equation}
Or, in the basis of $\{e_1^{\mathrm{\textsc{priming}}},e_2^{\mathrm{\textsc{priming}}}\}$,
$\tilde X=\{\pm 3\epsilon (1,0), \pm 2(0,1), \pm\sqrt{1-\epsilon^2}(1,0)\}$.
The covariance matrix of the projected data is then
\begin{equation}
    \tfrac{1}{6}\tilde X^T\tilde X=\tfrac{1}{3}\begin{pmatrix}
9 \epsilon^2 +2(1-\epsilon^2) & 0 \\
0 & 4  \\
\end{pmatrix} 
\end{equation}
If $\epsilon$ is small enough, the full-PCA after the projection will predict the principal directions to be
$e_1^{pPCA}=e_2^{\mathrm{\textsc{priming}}}$ and  $e_2^{pPCA}=e_1^{\mathrm{\textsc{priming}}}$.
i.e it changes the ordering of the principal directions.
Unfortunately,
$\langle e_1^{\mathrm{\textsc{priming}}},e_1\rangle^2=\langle(\epsilon,0,\sqrt{1-\epsilon^2}),(1,0,0)\rangle^2=\epsilon^2$ and
$\langle e_1^{pPCA},e_1\rangle^2=\langle (0,1,0),(1,0,0)\rangle^2=0$.
In other words, we had a better approximation of the first principal direction before doing the full-PCA step.

It is important to note the absurdity of this example. The starting assumption that
$e_1^{\mathrm{\textsc{priming}}}=(\epsilon,0,\sqrt{1-\epsilon^2})$.
would mean that \textsc{priming} almost completely ignored the first two  principal components and aligned itself with the third, smallest one. This, of course, can happen but it is not the expected behavior from an algorithm that is designed to find the principal components. The typical scenario is that \textsc{priming} finds $e_1$ up to some small error, and the remaining variance is picked up by the other players with lower indices. This is the case where the final full-PCA step will be useful.

Nonetheless, the above example demonstrates that we cannot just state
that full-PCA step cannot decrease the accuracy of predicted principal components.

\noindent The reason why the above example fails is that the subspace spanned by the vectors $e_i^{\mathrm{\textsc{priming}}}$ is almost orthogonal to the first principal direction. Projecting the data onto this subspace ``forgets" that the original data has had high variance along $e_1$ and $e_1$ becomes impossible to recover because the lower principal components dominate the projected data.

\subsection{When does the full-PCA step help?}
In this section we derive the theoretical conditions under which the full-PCA step is useful. The main result of the section is Theorem \ref{thm:main} which roughly states if the priming algorithm converges then situations like the one in the Section \ref{counterexp} can only happen in the early phase of the priming algorithm where the eigenvector estimates are not yet "aligned enough" with the actual eigenvectors.

We begin the discussion with fixing the notation,
\begin{center}
\renewcommand{\arraystretch}{1.3}
\begin{tabular}{ c|l } 
notation & meaning \\

\hline
 $X,  X^TX$ & the dataset and its covariance matrix \\ 
 $\{e_1,...,e_k\}$ & the first $k$ principal directions of $X$ (eigenvectors of $X^TX$) \\
$e_1^{\mathrm{\textsc{priming}}},...,e_k^{\mathrm{\textsc{priming}}}$ & \textsc{priming}'s estimates of the principal directions \\
$S, \pi_S$ & $\spn (e_1^{\mathrm{\textsc{priming}}},...,e_k^{\mathrm{\textsc{priming}}})$, projection operator onto $S$ \\
$\tilde X, \{\tilde e_1,...,\tilde e_k\}$ & the projection of $X$ and $\{e_1,...,e_k\}$ onto $S$ \\
$\mathbb{S}^{d-1}$ & the $(d-1)$ dimensional sphere, vectors of unit norm
\end{tabular}
\end{center}

The variance of the data $X$ in a  direction $v \in \mathbb S^{d-1}$ is
$\Var_v(X)=v^T(X^TX)v$.
Write now $\tilde X$ as the matrix product $\tilde X=X P_S$, where $P_S$ is the projection matrix from $\mathbb{R}^d$ onto $S$.
The variance of $\tilde X$ in a given direction $v \in S \cap \mathbb S^{d-1}$ is then
\begin{align*}
    \Var_v(\tilde X)&=v^T(\tilde X^T\tilde X)v 
                        =v^T(P^T_SX^T X P_S)v
                        =(P_S v)^T(X^T X)(P_Sv)
                       =v^T(X^T X)v
\end{align*}
i.e. the quadratic form of the variance of $\tilde X$ is just that of $X$ after restricting its domain to $S$,
$\Var_{(\_)}(\tilde X)=\Var_{(\_)}(X)\big |_S$.
\begin{proposition}
  \label{pr1}
If $\pi_S(e_1)$ maximises $\Var_{(\_)}(X)$ on $S$, then the full-PCA step on $S$ can not decrease the accuracy of the predicted first principal component.
\end{proposition}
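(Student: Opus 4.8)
The plan is to let the hypothesis do almost all of the work: it essentially pins down the output of the full-PCA step, after which a one-line Cauchy--Schwarz argument closes the gap. Throughout I measure the accuracy of a predicted first principal component $v\in S\cap\mathbb S^{d-1}$ by $\langle v,e_1\rangle^2$, the same quantity used in the counterexample of Section \ref{counterexp}; squaring absorbs the sign ambiguity of eigenvectors.

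First I would identify $e_1^{pPCA}$, the output of the full-PCA step. By definition it is a unit vector maximising $\Var_{(\_)}(\tilde X)$, and since $\tilde X^T\tilde X=P_S^TX^TXP_S$ annihilates $S^{\perp}$, its leading eigenvector lies in $S$; hence $e_1^{pPCA}$ maximises $v\mapsto\Var_v(\tilde X)$ over $v\in S\cap\mathbb S^{d-1}$. By the identity $\Var_{(\_)}(\tilde X)=\Var_{(\_)}(X)\big|_S$ established just before the statement, this means $e_1^{pPCA}$ maximises $\Var_{(\_)}(X)$ on $S$. The hypothesis says that $\pi_S(e_1)$, once normalised, is such a maximiser (the hypothesis only makes sense when $\pi_S(e_1)\neq 0$, since one cannot normalise the zero vector), so we may take $e_1^{pPCA}=\pi_S(e_1)/\lVert\pi_S(e_1)\rVert$.

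Next I would invoke the best-approximation property of orthogonal projection. For any unit vector $v\in S$ we have $\langle v,e_1\rangle=\langle v,\pi_S(e_1)\rangle$ because $e_1-\pi_S(e_1)\perp S$, and Cauchy--Schwarz gives $\langle v,e_1\rangle^2\le\lVert\pi_S(e_1)\rVert^2$, with equality exactly when $v$ is parallel to $\pi_S(e_1)$ (using $\langle\pi_S(e_1),e_1\rangle=\lVert\pi_S(e_1)\rVert^2$). Applying this to $v=e_1^{pPCA}$, which attains equality, and to $v=e_1^{\textsc{priming}}$, which is a unit vector in $S$, yields $\langle e_1^{\textsc{priming}},e_1\rangle^2\le\lVert\pi_S(e_1)\rVert^2=\langle e_1^{pPCA},e_1\rangle^2$, which is exactly the claim.

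Since the estimate itself is trivial, the real care goes into the degenerate cases rather than into any substantive inequality. One must note that the hypothesis presupposes $\pi_S(e_1)\neq 0$, and that if the leading eigenvalue of $\tilde X^T\tilde X$ is not simple then ``the'' full-PCA output is defined only up to a tie-break; in that case the conclusion should be read as ``there is an admissible full-PCA output --- namely $\pi_S(e_1)/\lVert\pi_S(e_1)\rVert$ --- that does not decrease accuracy,'' or, equivalently, one assumes a simple leading eigenvalue. Everything else follows from Cauchy--Schwarz together with the orthogonality $e_1-\pi_S(e_1)\perp S$.
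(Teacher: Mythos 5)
Your proof is correct and follows essentially the same route as the paper's: identify the full-PCA output as (the normalisation of) $\pi_S(e_1)$ under the hypothesis, then observe that this is the best approximation of $e_1$ among unit vectors of $S$. The paper simply asserts the second step (``in all other cases, accuracy is improved''), whereas you justify it via $e_1-\pi_S(e_1)\perp S$ and Cauchy--Schwarz and also flag the degenerate cases ($\pi_S(e_1)=0$, non-simple leading eigenvalue) that the paper leaves implicit.
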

\begin{proof}
If $\pi_S(e_1)$ maximises ${\Var}_{(\_)}(X)$ on $S$, then the full-PCA step on $S$ returns $\pi_S(e_1)$ as the first principal component. If \textsc{priming} already output $\pi_S(e_1)$ as the first principal component, then this has no effect on the accuracy, in all other cases, accuracy is improved.
\end{proof}
In essence, the only thing that can go wrong for the first principal component is that $S$ is such that the variance of $\tilde X$ in the direction of $\pi_S(e_1)$ is smaller than in the direction of, say, $\pi_S(e_2)$.

\begin{proposition}
  \label{pr2}
  If the assumption of Proposition \ref{pr1} are satisfied , and
  $\pi_S(e_2)$ maximises ${\Var}_{(\_)}(X)$ on $\pi_S(e_1)^\perp$,
    then the full-PCA step on $S$ can not decrease the accuracy of the predicted first and second principal components.
\end{proposition}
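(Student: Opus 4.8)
The plan is to follow the template of the proof of Proposition \ref{pr1}, after isolating the one elementary fact that powers both statements: \emph{within the subspace $S$, the best attainable estimate of an eigenvector $e_j$ is the normalisation of its projection $\pi_S(e_j)$.} I would record this first. For any unit vector $v\in S$, self-adjointness of $\pi_S$ gives $\langle v,e_j\rangle=\langle\pi_S v,e_j\rangle=\langle v,\pi_S(e_j)\rangle$, so Cauchy--Schwarz yields $\langle v,e_j\rangle^2\le\|\pi_S(e_j)\|^2$, with equality exactly when $v=\pm\pi_S(e_j)/\|\pi_S(e_j)\|$. Hence, whenever the full-PCA step outputs the normalisation of $\pi_S(e_j)$ as its $j$-th component, the accuracy $\langle e_j^{pPCA},e_j\rangle^2$ attains the maximum $\|\pi_S(e_j)\|^2$ over $S$, and in particular is $\ge\langle e_j^{\mathrm{\textsc{priming}}},e_j\rangle^2$ since $e_j^{\mathrm{\textsc{priming}}}\in S\cap\mathbb{S}^{d-1}$. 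The whole task therefore reduces to checking that the variance hypotheses force the full-PCA step to output the normalisations of $\pi_S(e_1)$ and $\pi_S(e_2)$, in that order.

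For the first component this is exactly Proposition \ref{pr1}: the first hypothesis makes the full-PCA step return the normalisation of $\pi_S(e_1)$ and the accuracy of the first component does not decrease. It remains to treat the second component. Having extracted $e_1^{pPCA}$ (a positive multiple of $\pi_S(e_1)$, so $(e_1^{pPCA})^\perp=\pi_S(e_1)^\perp$), the full-PCA step on $\tilde X$ selects the unit vector maximising $\Var_{(\_)}(\tilde X)$ on $(e_1^{pPCA})^\perp$; because $\tilde X$ is supported on $S$ this maximiser lies in $S\cap\pi_S(e_1)^\perp$, and by the identity $\Var_{(\_)}(\tilde X)=\Var_{(\_)}(X)\big|_S$ it is the unit vector of $S\cap\pi_S(e_1)^\perp$ maximising $\Var_{(\_)}(X)$. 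The second hypothesis says the normalisation of $\pi_S(e_2)$ maximises $\Var_{(\_)}(X)$ over the \emph{larger} set $\pi_S(e_1)^\perp$ and, being a projection onto $S$, lies in $S\cap\pi_S(e_1)^\perp$; so it is also the maximiser over that smaller set, and $e_2^{pPCA}$ is the normalisation of $\pi_S(e_2)$. Applying the lemma with $j=2$ gives $\langle e_2^{pPCA},e_2\rangle^2=\|\pi_S(e_2)\|^2\ge\langle e_2^{\mathrm{\textsc{priming}}},e_2\rangle^2$, and combined with the first component this is the claim.

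I do not anticipate a genuine obstacle — the mathematical content is Cauchy--Schwarz plus the variance identity already in the text — but some bookkeeping about degenerate and tied cases deserves care, just as it is (implicitly) needed in Proposition \ref{pr1}. If $\pi_S(e_1)$ or $\pi_S(e_2)$ vanishes, the corresponding maximisation hypothesis is vacuous or trivial and nothing is lost, since then $\langle e_j^{\mathrm{\textsc{priming}}},e_j\rangle^2\le\|\pi_S(e_j)\|^2=0$; and when a variance maximiser is non-unique, I would read the hypotheses (as Proposition \ref{pr1} does) as the statement that $\pi_S(e_1)$, resp.\ $\pi_S(e_2)$, is precisely what the full-PCA step returns. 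Strengthening the second hypothesis to ``$\pi_S(e_2)$ maximises $\Var_{(\_)}(X)$ on $S\cap\pi_S(e_1)^\perp$'' would make the a-fortiori step unnecessary without affecting anything else.
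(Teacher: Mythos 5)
Your proof is correct and follows essentially the same route as the paper's: Proposition \ref{pr1} handles the first component, and the second hypothesis forces the full-PCA step to return (the normalisation of) $\pi_S(e_2)$ as the second component, which cannot decrease accuracy. You are in fact more careful than the paper, which leaves implicit both the Cauchy--Schwarz argument showing that $\pi_S(e_j)/\|\pi_S(e_j)\|$ is the best approximation of $e_j$ attainable within $S$, and the a-fortiori step passing from the hypothesis on $\pi_S(e_1)^\perp$ to the set $S\cap\pi_S(e_1)^\perp$ over which full-PCA actually maximises.
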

\begin{proof}
The proof is just a repeated use of Proposition \ref{pr1}. As the assumption of Proposition \ref{pr1} is satisfied, full-PCA returns $\pi_S(e_1)$ as the first principal component, not decreasing the accuracy of the prediction. Let us now consider the second principal direction. Since the eigenvectors of a covariance matrix are orthogonal, and  $\pi_S(e_2)$ maximises ${\Var}_{(\_)}(X)$ on $\pi_S(e_1)^\perp$ by assumption, the repeated use of Proposition \ref{pr1} on $\pi_S(e_1)^\perp$ implies that the full-PCA step outputs $\pi_S(e_2)$ as the second eigenvector, not decreasing the accuracy of the prediction.
\end{proof}

Continuing this line of reasoning, always requiring the projection of the next principal component to maximise the variance on the orthogonal complement of the previous eigenvectors, we arrive at following sequence of conditions
  
\begin{proposition}
  \label{pr3}
  If $V$ is such that
  \begin{itemize}\setlength\itemsep{1pt}\setlength\topsep{0pt}
    \item $\pi_S(e_1)$ maximises ${\Var}_{(\_)}(X)$ on $S$,
    \item $\pi_S(e_2)$ maximises ${\Var}_{(\_)}(X)$ on $\pi_S(e_1)^\perp$,
    \item[] \dots
    \item $\pi_S(e_i)$ maximises ${\Var}_{(\_)}(X)$ on the orthogonal complement of $\spn(\pi_S(e_1),...,\pi_S(e_i))$
  \end{itemize}
    then the full-PCA step on $S$ can not decrease the accuracy of the predicted $1^{st},2^{nd},....,i^{th}$ principal components.
\end{proposition}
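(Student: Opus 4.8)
The plan is to run an induction on $i$ in which Proposition \ref{pr1} is invoked as a black box at each step, applied not to $S$ itself but to a shrinking chain of subspaces of $S$. The base case $i=1$ is literally Proposition \ref{pr1}, and the case $i=2$ is Proposition \ref{pr2}, whose proof already exhibits the pattern to be iterated.

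For the inductive step I would assume that \textbf{full-PCA} on $\tilde X$ has already returned $\pi_S(e_1),\dots,\pi_S(e_{i-1})$ (each normalised to unit length) as its first $i-1$ principal components without decreasing the accuracy of those predictions. By the greedy characterisation of PCA, the $i$-th component produced by \textbf{full-PCA} is a unit vector maximising $\Var_{(\_)}(\tilde X)$ over $S\cap W^\perp$, where $W:=\spn(\pi_S(e_1),\dots,\pi_S(e_{i-1}))$; here I use that the first $i-1$ outputs are exact eigenvectors of $\tilde X^T\tilde X$, so $W$ and $W^\perp$ are invariant subspaces, and that eigenvectors of $\tilde X^T\tilde X$ with nonzero eigenvalue lie in $S$. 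By the identity $\Var_{(\_)}(\tilde X)=\Var_{(\_)}(X)\big|_S$ established above, this is the same as maximising $\Var_{(\_)}(X)$ over the unit sphere of $S\cap W^\perp$. Now $\pi_S(e_i)$ lies in $S$ trivially, and it lies in $W^\perp$ because the hypothesis ``$\pi_S(e_i)$ maximises $\Var_{(\_)}(X)$ on the orthogonal complement of $\spn(\pi_S(e_1),\dots,\pi_S(e_{i-1}))$'' is only meaningful once $\pi_S(e_i)$ is a member of that complement; and by that same hypothesis $\pi_S(e_i)$ maximises $\Var_{(\_)}(X)$ over \emph{all} unit vectors of $W^\perp$, hence a fortiori over the smaller set $S\cap W^\perp$. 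Therefore \textbf{full-PCA} returns $\pi_S(e_i)$ (normalised) as its $i$-th component; should the relevant eigenvalues be tied we simply select this maximiser, which is all that is asked.

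It remains to check that the accuracy of the $i$-th prediction has not decreased, and here I would reuse the argument of Proposition \ref{pr1}: for any unit vector $v\in S$ one has $\langle v,e_i\rangle=\langle v,\pi_S(e_i)\rangle\le\|\pi_S(e_i)\|$ by Cauchy--Schwarz, the first step because $e_i-\pi_S(e_i)\perp S$, with equality exactly for $v=\pi_S(e_i)/\|\pi_S(e_i)\|$. Since \textsc{priming}'s $i$-th output $e_i^{\mathrm{\textsc{priming}}}$ is a unit vector in $S$, this yields $\langle e_i^{\mathrm{\textsc{priming}}},e_i\rangle^2\le\|\pi_S(e_i)\|^2=\langle e_i^{pPCA},e_i\rangle^2$, so the accuracy is not decreased (and is strictly increased unless \textsc{priming} already output $\pm\pi_S(e_i)/\|\pi_S(e_i)\|$). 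This closes the induction.

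\textbf{Where the work is.} Almost everything is routine; the one point needing care is the greedy reduction: one must justify that running \textbf{full-PCA} on the projected data $\tilde X$ amounts to successively maximising $\Var_{(\_)}(\tilde X)$ over orthogonal complements taken \emph{inside $S$}, and then reconcile that constrained maximum with the unconstrained maximum over $W^\perp$ appearing in the hypothesis --- the two coincide precisely because the hypothesis places the unconstrained maximiser $\pi_S(e_i)$ already inside $S$. A minor bookkeeping nuisance is the possibility of repeated eigenvalues (handled by ``select a maximiser'') and the degenerate case $\pi_S(e_i)=0$, neither of which affects the ``cannot decrease'' conclusion.
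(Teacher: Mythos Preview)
Your proposal is correct and follows the same approach as the paper: the paper's proof is a single sentence stating that this is ``again just a repeated application of Proposition~\ref{pr1} as in the proof of Proposition~\ref{pr2},'' and your induction is precisely that repeated application made explicit. Your version is in fact more careful than the paper's, since you spell out via Cauchy--Schwarz why $\pi_S(e_i)/\|\pi_S(e_i)\|$ is the best unit-norm approximant of $e_i$ in $S$ (the paper's Proposition~\ref{pr1} asserts but does not prove this), and you flag the subtlety of reconciling the in-$S$ maximisation with the unconstrained hypothesis over $W^\perp$.
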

\begin{proof}
This is again just a repeated application of Proposition \ref{pr1} as in the proof of Proposition \ref{pr2}.
\end{proof}
It is important to note that even though the conditions of Propositions \ref{pr3} look cumbersome, the outputs of every approximate PCA algorithm satisfies them if trained long enough.
\begin{theorem}
\label{thm:main}
Let $e_i^{\textsc{priming}}(\tau)$ denote the output of \textsc{priming} at timestep $\tau$. If \textsc{priming} converges, i.e. if 
\begin{equation}
    \label{eq:cnd}\lim_{\tau \to \infty} e_i^{\textsc{priming}}(\tau) \rightarrow e_i \qquad \textnormal{for all } 0<i\leq k
\end{equation}
then there exists a $\tau^*$ such that if $\tau>\tau^*$ then the conditions of Proposition \ref{pr3} are satisfied and applying the full-PCA step on $\{e_1^{\mathrm{\textsc{priming}}}(\tau),...,e_k^{\mathrm{\textsc{priming}}}(\tau)\}$ improves performance with respect to the metric of the longest correct eigenvector streak for any $V$.
\end{theorem}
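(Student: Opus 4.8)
The plan is to turn the hypothesis \eqref{eq:cnd} into a statement about subspaces: if the individual estimates $e_i^{\textsc{priming}}(\tau)$ converge to the true eigenvectors $e_i$, then the priming span $S(\tau)=\spn(e_1^{\textsc{priming}}(\tau),\dots,e_k^{\textsc{priming}}(\tau))$ converges to the true eigenspace $E=\spn(e_1,\dots,e_k)$, and this convergence can be pushed through the full-PCA step so that for large $\tau$ its output is within the metric's tolerance of $(e_1,\dots,e_k)$ and therefore realises the maximal longest-correct-eigenvector streak.

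First I would record two consequences of \eqref{eq:cnd}. (i) For the limits $e_i$ to be well-defined \emph{individual} eigenvectors rather than merely an eigenspace, the top eigenvalues must be simple and separated: $\lambda_1>\lambda_2>\dots>\lambda_k>\lambda_{k+1}$; I would state this explicitly as the standing assumption (it is implicit in \eqref{eq:cnd}). (ii) Since $e_1,\dots,e_k$ are orthonormal and $e_i^{\textsc{priming}}(\tau)\to e_i$, for $\tau$ large the Gram matrix of the $e_i^{\textsc{priming}}(\tau)$ is arbitrarily close to the identity, hence invertible, so $S(\tau)$ is genuinely $k$-dimensional and the orthogonal projector $P_{S(\tau)}$ converges to $P_E$ in operator norm — a short computation with the Gram matrix, or a standard principal-angles argument.

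Next I would analyse the full-PCA step as a perturbation. By definition the full-PCA output on $S(\tau)$ is the ordered eigenbasis $u_1(\tau),\dots,u_k(\tau)$ of the restricted quadratic form $v\mapsto v^T P_{S(\tau)}^T(X^TX)P_{S(\tau)} v$. Because $P_{S(\tau)}^T(X^TX)P_{S(\tau)}\to P_E^T(X^TX)P_E$, whose nonzero spectrum is exactly $\lambda_1>\dots>\lambda_k$ with eigenvectors $e_1,\dots,e_k$, the simplicity and separation of these eigenvalues together with standard perturbation theory for symmetric matrices (Davis–Kahan, or just continuity of simple eigenvectors) give $u_i(\tau)\to e_i$ for every $i\le k$. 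In particular $\langle u_i(\tau),e_i\rangle^2\to 1$ for all $i\le k$ simultaneously, so there is a $\tau^*$ past which every $u_i(\tau)$ lies within whatever fixed tolerance the longest-correct-eigenvector metric uses; the full-PCA output then has streak $k$, the maximum, so it is never below the streak of the priming output and is strictly above it whenever priming has not yet fully converged. The same argument applies verbatim with $k$ replaced by $k+l$ and $E$ by $\spn(e_1,\dots,e_{k+l})$, which is why the conclusion holds irrespective of the number of extra components, i.e. ``for any $V$''.

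Finally I would connect this to Proposition \ref{pr3}: in the limit $S=E$ one has $\pi_E(e_i)=e_i=u_i$, so the chain of maximality conditions in Propositions \ref{pr1}--\ref{pr3} holds with equality, and for $\tau>\tau^*$ both $\pi_{S(\tau)}(e_i)$ and the true maximiser $u_i(\tau)$ lie within tolerance of $e_i$. The main obstacle — and the step I would be most careful with — is exactly this last point: the conditions of Proposition \ref{pr3} are stated as exact equalities ($\pi_S(e_i)$ \emph{is} the maximiser on the relevant orthogonal complement), which is not an open condition, so one cannot literally conclude it holds on a neighbourhood of $E$ from continuity alone. The clean fix is either to restate Propositions \ref{pr1}--\ref{pr3} and the metric with an explicit tolerance $\varepsilon$ and prove the approximate versions, or — as in the argument above — to bypass Proposition \ref{pr3} altogether by showing directly that the full-PCA eigenvectors converge to the $e_i$ and hence eventually attain the maximal streak; I would present the latter as the primary route and use the former only to justify the phrasing ``the conditions of Proposition \ref{pr3} are satisfied''.
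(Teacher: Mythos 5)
Your proposal is correct, and it takes a genuinely different route from the paper's proof. The paper argues in three lines: the conditions of Proposition \ref{pr3} hold at the limit point $\{e_1,\dots,e_k\}$, they are ``open conditions (only involving inequalities)'' and hence hold on a neighbourhood $U$, and convergence forces the iterates into $U$. You instead bypass Proposition \ref{pr3}: you pass from convergence of the vectors to convergence of the projector $P_{S(\tau)}\to P_E$ via the Gram matrix, and then apply eigenvector perturbation theory to $P_{S(\tau)}(X^TX)P_{S(\tau)}$ to conclude $u_i(\tau)\to e_i$ directly, so the full-PCA output eventually attains the maximal streak. What your route buys is precisely the point you flag at the end: the conditions of Proposition \ref{pr3} are \emph{equalities} (``$\pi_S(e_i)$ is the maximiser''), not inequalities, so the paper's openness claim does not follow from continuity alone. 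Your worry is substantiated by a second-order computation: for $\Lambda=\mathrm{diag}(\lambda_1,\lambda_2,\lambda_3)$ and $S=\spn\bigl(e_1+\epsilon\alpha e_3,\ e_2+\epsilon\beta e_3\bigr)$, the top eigenvector of $\Lambda|_S$ acquires an $e_2$-component $\tfrac{\epsilon^2\alpha\beta\lambda_3}{\lambda_1-\lambda_2}$ while $\pi_S(e_1)$ acquires $-\epsilon^2\alpha\beta$, so the two vectors differ at order $\epsilon^2$ for generic $\alpha\beta\neq 0$ and the hypothesis of Proposition \ref{pr1} fails on a dense set of nearby subspaces. Your perturbation argument repairs this, since an $O(\epsilon^2)$ discrepancy between $u_i(\tau)$ and $\pi_{S(\tau)}(e_i)$ is irrelevant to the streak metric once both angles to $e_i$ fall below the threshold. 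Two small cautions: you need the explicit spectral-gap assumption $\lambda_1>\dots>\lambda_k>\max(\lambda_{k+1},0)$ you state in (i) for the limiting eigenvectors of $P_E(X^TX)P_E$ to be simple and isolated from its kernel; and ``for any $V$'' in the theorem refers to the angular threshold of the streak metric, not to the number of extra components $l$ --- your argument covers the intended reading anyway because the angles tend to zero, but note that $\tau^*$ then necessarily depends on $V$, and your claim of \emph{strict} improvement whenever priming has not fully converged is slightly too strong (priming can already achieve the maximal streak without exact convergence).
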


\begin{proof}
If $e_i^{\textsc{priming}}(\tau) = e_i$ for all $i$, i.e. if \textsc{priming} already found the eigenvectors then the implication of the theorem obviously holds. As the conditions of \ref{pr3} are an open condition (only involving inequalities), they are satisfied on an open set. This means that there exists an open neighborhood $U$ of $\{e_1,...,e_k\}$ on which these conditions  of \ref{pr3} are satisfied. Since 
$$\lim_{\tau \to \infty} e_i^{\textsc{priming}}(\tau) \rightarrow e_i \qquad \textnormal{for all } 0<i\leq k$$
The vectors $\{e_1^{\textsc{priming}}(\tau),...,e_k^{\textsc{priming}}(\tau)\}$ enter and stay in $U$ if $\tau$ is large enough.
\end{proof}
\section{Experiments and results} \label{sec:experiments}

We test the proposed algorithm on several dataset of varying size. Here we present them in order of increasing complexity. Doing full-PCA in the original data space is doable for the small-scale datasets (synthetic, MNIST, CIFAR10 and  ``NIPS bag of words"), but infeasible for the Large Lobster Image Dataset and ResNet activations. 

We train with SGD using Nesterov momentum with a factor of 0.9 \citep{Nesterov1983AMF}. Everything is implemented in Pytorch, the  experiments on the small-scale datasets are executed on a NVIDIA RTX 3090 while the experiments on the large-scale datasets are executed on an NVIDIA A100.

\begin{table*}[ht!]
    \centering
    \vspace{10mm}
    \begin{tabular}{l| c c c c c }
      \multicolumn{1}{c|}{ } &{\begin{tabular}{c}Exponential \\Synthetic\end{tabular}}& {\begin{tabular}{c}Linear \\Synthetic \end{tabular}} & {MNIST} & {CIFAR10} & {\begin{tabular}{c}NIPS \\bag of words\end{tabular}}\\ \noalign{\hrule height1pt}
      \multicolumn{1}{c|}{dimensions} & {50} & {50} & {768} & {3092} & {11 463}\\
      \multicolumn{1}{c|}{points} & {5000} & {5000} & {60 000} & {50 000} & {5812}\\
      \cmidrule(lr){1-6}
       \multicolumn{6}{c}{$V=\pi/8$} \\
       \cmidrule(lr){1-6}
        EigenGame             & $4.55 \pm 0.67$         &  $12.24\pm4.14$        &  $9.33\pm3.64$         &  $15.79\pm8.33$        &  $7.54\pm2.19$                    \\
        + pPCA (Ours)         & $3.48 \pm 0.87$         &  $4.04\pm2.30$         &  $\mathbf{2.21\pm0.59}$         &  $\mathbf{5.02\pm0.73}$         &  $\mathbf{3.81\pm1.37}$            \\
         Oja's method         & $0.23 \pm 0.09$         &  $0.32\pm0.18$         &  n.a.                  &  n.a.                  &  $5.67\pm2.15$            \\
        + pPCA (Ours)         & $\mathbf{0.08 \pm 0.03}$         &  $\mathbf{0.11\pm0.09}$         &  $11.53\pm5.74$        &  n.a.                  &  $2.68\pm1.71$            \\
        Power method          & $0.21 \pm 0.01$         &  $0.67\pm0.07$         &  $6.32\pm0.80$         &  $9.63\pm1.13$         &  $4.47\pm0.38$            \\
        + pPCA (Ours)         & $0.23 \pm 0.01$         &  $0.68\pm0.07$         &  $6.33\pm0.80$         &  $9.63\pm1.13$         &  $4.48\pm0.38$            \\
       \cmidrule(lr){1-6}
       \multicolumn{6}{c}{$V=\pi/32$} \\
       \cmidrule(lr){1-6}
       EigenGame                      & $5.81 \pm 0.83$         &  n.a.                  &  n.a.                   &  $28.37\pm10.09$       &  n.a.            \\
       + pPCA (Ours)            & $4.58 \pm 0.95$         &  $7.45\pm3.00$         &  $\mathbf{3.44\pm0.57}$          &  $\mathbf{6.92\pm0.79}$         &  ${6.57\pm2.31}$            \\
       Oja's method                   & $0.34 \pm 0.09$         &  $0.58\pm0.24$         &  n.a.                   &  n.a.                  &  $10.03\pm2.98$            \\
       + pPCA (Ours)            & $\mathbf{0.15 \pm 0.05}$         &  $\mathbf{0.19\pm0.10}$         &  $19.74\pm7.31$         &  n.a.                  &  $4.54\pm1.98$            \\
       Power method                   & $0.22 \pm 0.01$         &  n.a.                  &  $6.38\pm0.80$          &  $9.73\pm1.13$         &  $\mathbf{4.52\pm0.38}$            \\
       + pPCA (Ours)            & $0.23 \pm 0.01$         &  $0.69\pm0.07$         &  $6.38\pm0.80$          &  $9.74\pm1.13$         &  $4.54\pm0.38$            \\
      \cmidrule(lr){1-6}
    \end{tabular}
    \caption{Time to reach an Eigenvalues streak of length 16 with different threshold values $V \in \{\pi/8,\pi/32 \}$ for the algorithms on the small-scale datasets. All experiments were repeated 10 times, mean and variance values are reported. All values have units of seconds, if one the 10 runs did not reach a streak of length 16 during training, then  ``n.a.'' is reported. A more detailed table including pPCA$_2$ and pPCA$_4$ can be found in the appendix.} 
    \label{results_small}
  \end{table*}
\subsection{Small-scale Datasets}
We run EigenGame for each of the datasets 10 times with learning rates $\{1,10^{-1},10^{-2},10^{-3},10^{-4},10^{-5},10^{-6}\}$. We then choose the learning rate with the smallest angular error when summing over runs, timesteps and principal components. The pPCA methods primed by EigenGame are then executed also 10 times with the learning rate that best fitted EigenGame. 
Similarly, we run Oja's algorithm for each of the datasets 10 times with learning rates $\{10^{-3},10^{-4},10^{-5},10^{-6}\}$. We then choose the learning rate with the smallest angular error when summing over runs, timesteps and principal components. The pPCA methods primed by Oja's method are then executed with this learning rate.
We run the power method with termination conditions $\epsilon \in \{10^{-3},10^{-4},10^{-5},10^{-6},10^{-7}\}$, choose the best $\epsilon$ and run pPCA on top of it.
Due to the sequential nature of the power method, we don't expect much improvement from the full-PCA step in this setup. In all these experiments, we are interested in finding the first 16 principal components.

\paragraph{Synthetic data}
We generate synthetic datasets along the lines of the synthetic experiments of the EigenGame paper \citep{gemp2020eigengame}. The data consists of 5000 points in 50 dimensions with a spectrum that decays exponentially (resp. linearly) over 3 orders of magnitudes, from 1000 to 1. We train with a batch size of 1000.
\paragraph{MNIST and CIFAR10.}
We flatten the training images of MNIST \citep{lecun-mnisthandwrittendigit-2010} and CIFAR10 \citep{Krizhevsky2009MastersThesis} that  results in a dataset of cardinality 60 000 (resp. 50 000) and dimensionality 784 (resp. 3092). We use a batch size of 1000.

\paragraph{NIPS bag of words}
This dataset contains the frequency of  11 463 words in 5812 NIPS papers published between 1987 and 2015 \citep{perrone2016poisson}. Each entry in the 11 463-by-5812 matrix holds the number of occurrences of a given word in the corresponding paper. We train with a batch size of 1000.

\subsection{Large-scale Datasets}
For the large-scale experiments we only train EigenGame and run pPCA on top of it (without additional components).

\paragraph{ResNet activations}
Following the original EigenGame paper, we build a dataset from the activation patterns of a pretrained ResNet-152 \citep{he2015deep} evaluated on the validation set of ImageNet \citep{deng2009imagenet}. Each datapoint is constructed from the outputs of all residual blocks of the ResNet-152. This results in a dataset of dimension $d\sim 13.1M$ and cardinality $n = 50 000$. We use a batch size of 1024 and train for 600 training steps ($\sim 12$ epochs) with a learning rate of $10^{-6}$ to find the first 8 principal components.
The time-cost of the full-PCA step (in particular, of the projection) is comparable to the time cost of 1 training epoch. In this experiment, we only train for $12$ epochs, therefore the the cost of full-PCA is not negligible. Circumventing this issue, we only project 10\% of the data for computing full-PCA, reducing the overhead for full-PCA from $\sim95$ to $\sim9.5$ minutes.

\paragraph{Large Lobster Image Dataset}
The Large Lobster Image Dataset \citep{LobsterPaper} contains 6654 images of 238 southern rock lobsters taken at a lobster processor in Tasmania over the course of 6 days. The images have a resolution of $1944 \times 2592$ and 3 color channels. After flattening, we end up with $d\sim15.6M$  dimensional vectors. See Figure \ref{fig:lobsters} for examples. We use a batch size of 1024 and train for 300 training steps ($\sim 43$ epochs) with a learning rate of $10^{-4}$ to find the first 8 principal components.
\begin{figure}[ht]
  \centering
  \begin{subfigure}{0.24\textwidth}
    \centering
    \includegraphics[width=\textwidth,trim=80 30 80 0]{./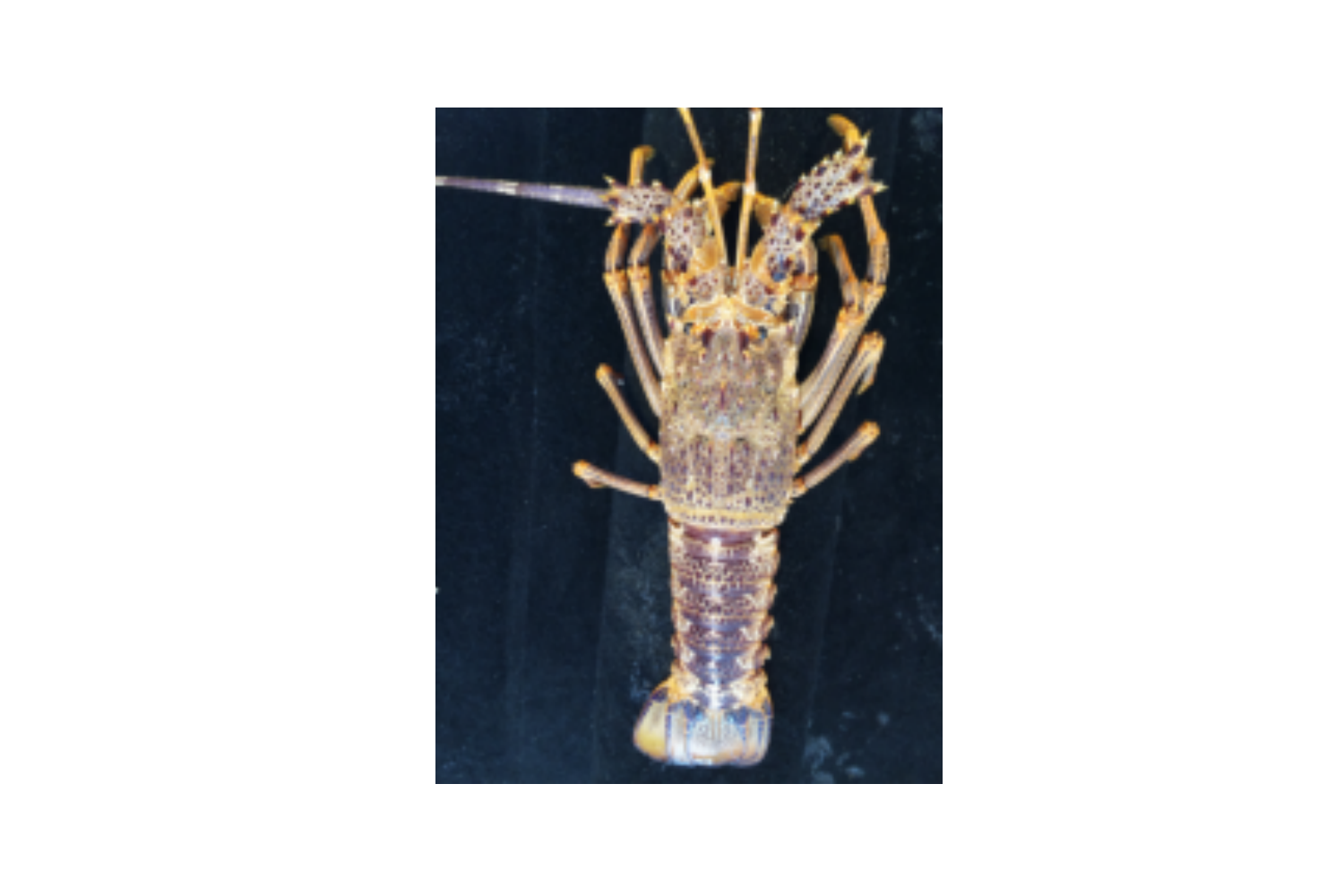}
    \caption{Lobster \#2 on day 2}
  \end{subfigure}
  \begin{subfigure}{0.24\textwidth}
    \centering
    \includegraphics[width=\textwidth,trim=80 30 80 0]{./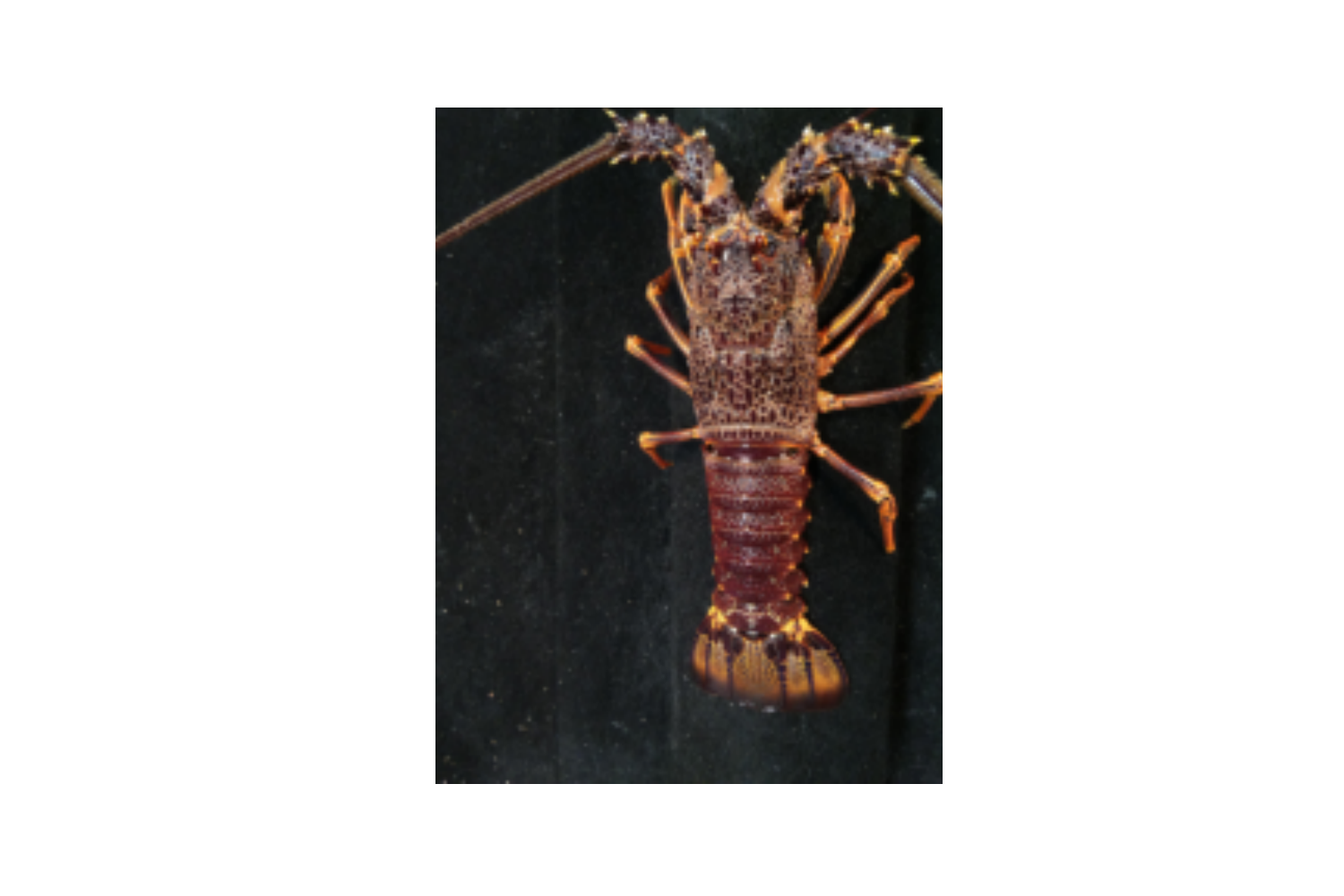}
    \caption{Lobster \#17 on day 5}
  \end{subfigure}
  \begin{subfigure}{0.24\textwidth}
    \centering
    \includegraphics[width=\textwidth,trim=80 30 80 0]{./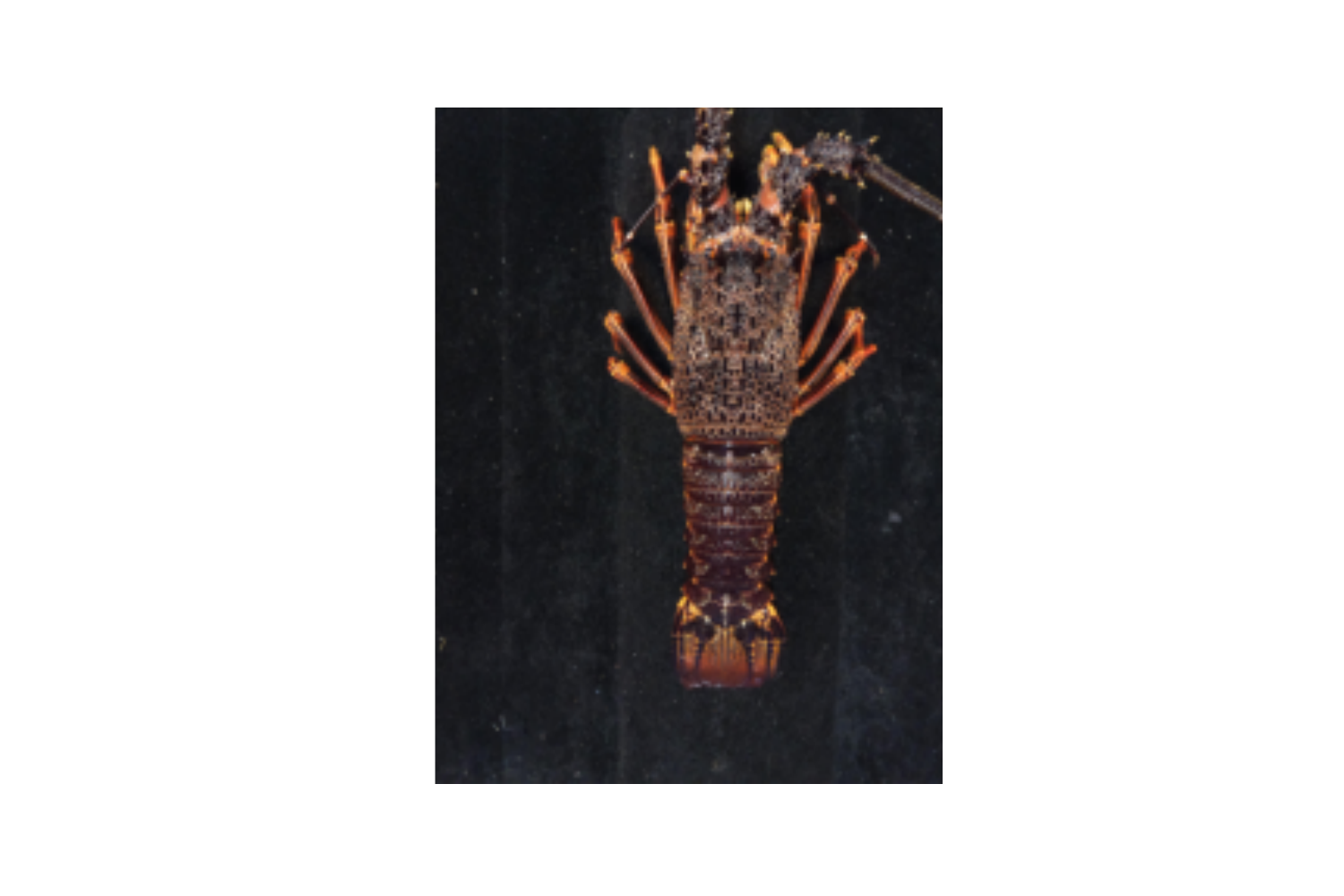}
    \caption{Lobster \#220 on day 6}
  \end{subfigure}
  \begin{subfigure}{0.24\textwidth}
    \centering
    \includegraphics[width=\textwidth,trim=80 30 80 0]{./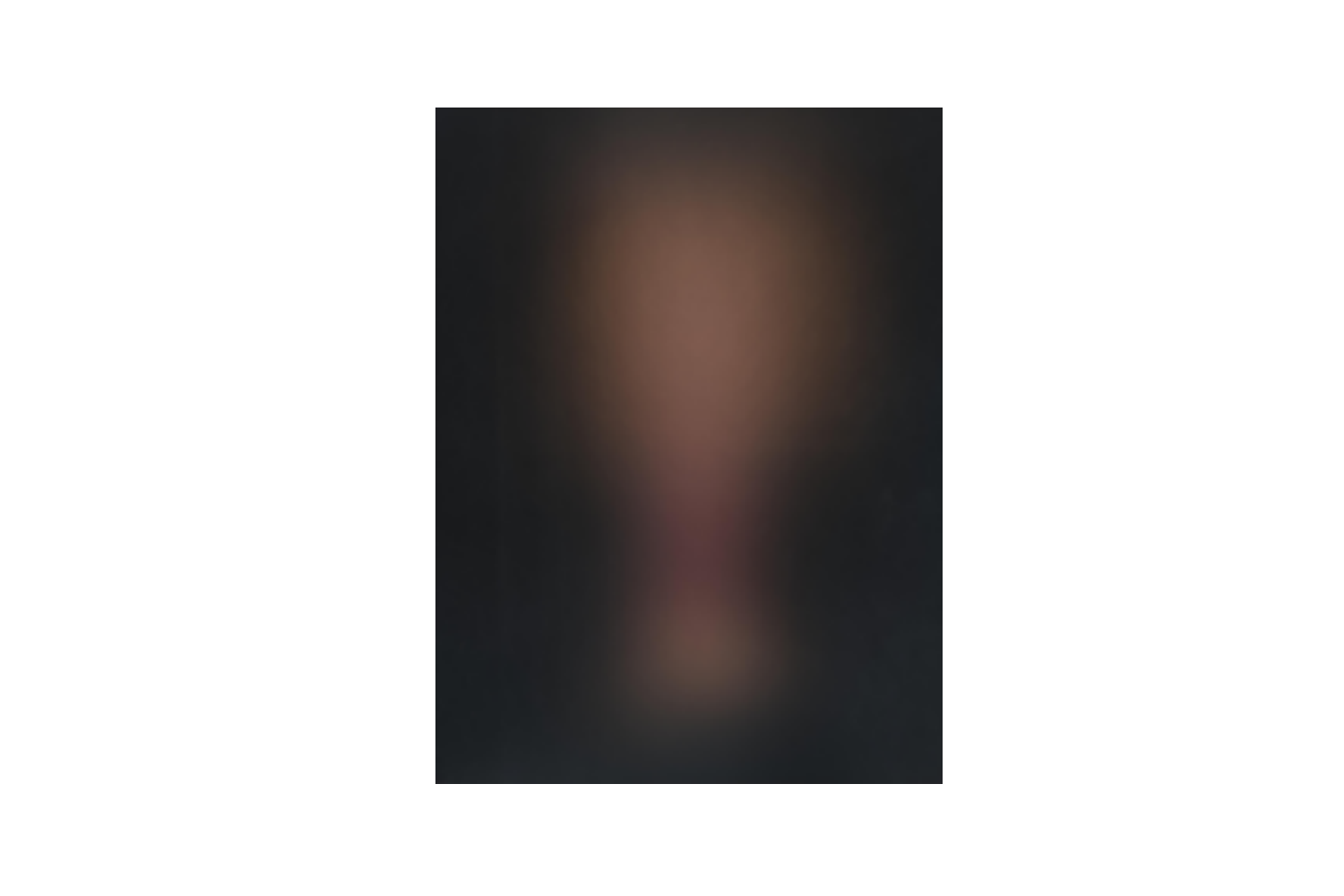}
    \caption{Average of the dataset}
  \end{subfigure}
  \caption{The Large Lobster Image Dataset}
  \label{fig:lobsters}
\end{figure}

\subsection{Measure of performance}

\paragraph{Small-scale}
To evaluate the small-scale experiments we adopt the demanding metric of ``Longest Correct Eigenvector Streak'' of \citet{gemp2020eigengame} which reflects the hierarchy of principal directions. Given the ground truth principal directions $\{e_1,...e_k\}$, the approximate directions coming from the learning algorithm $\{v_1,...,v_k\}$ and a threshold value $0<V<\pi/2$, we compute if the angle between $v_i$ and $e_i$ is smaller than $V$ for all $i \in \{1,...,k\}.$ Then, the number of consecutive pairs from index 1 that are within angle $V$ of each other is by definition the Longest Correct Eigenvector Streak. For instance, if for a given $V$ the angles are $[V/2,V/3,2V,V/2,...]$, then the Longest Correct Eigenvector Streak for this $V$ is 2.
In our experiments, we evaluate all runs with $V \in \{\pi/8,\pi/16,...,\pi/1024\}$.

\paragraph{Large-scale}
The remaining 2 experiments (Lobsters and ResNet activations) need to evaluated differently, since the real principal directions are not available. This means that we can only evaluate performance indirectly. We plot the variance\footnote{The variance captured by a vector $v$ is $v^T(X^TX)v$} captured by the result of EigenGame and pPCA. When the algorithm converges to the actual principal directions of the data, the variance will converge to the corresponding eigenvalue of the covariance matrix. Convergence of the variance is therefore a hint (but not evidence!) of convergence of the corresponding eigenvalues/principal directions.

\subsection{Results}

\paragraph{Small-scale}
Figure \ref{plotFigure1} shows the plots of the Longest Correct Eigenvector Streak for the small-scale experiments described above. We reuse the threshold  from the EigenGame paper \citep{gemp2020eigengame} and set it to $V=\pi/8$ for the plots of Figure \ref{plotFigure1}.
In the Appendix, plots are available for all threshold values $V \in \{\pi/16,...,\pi/1024\}$. 
A qualitative comparison is available in Table \ref{results_small}.

\paragraph{Large-scale}
Figure \ref{plotFigure2} displays the results for the large-scale experiments. Figure \ref{PrincipalLobsters} in the Appendix shows the eigenvectors found by both methods after 30 and 300 training steps of the Large Scale Lobster Experiment. After 300 training steps the methods converge to vectors that are indistinguishable by the human eye. Looking at the results after 30 training steps, we can also conclude the pPCA already found the first 4 components, while EigenGame only the first 2. Note that this is exactly what Figure \ref{plotFigure2} suggests.

\section{Conclusion}
We introduced primed-PCA, a method for computing the first $k$ principal directions by combining an already existing algorithm (called the priming) with a cheap, post-processing full-PCA step.  We have demonstrated on several datasets that pPCA  greatly improves upon the priming in terms of convergence speed and worked out the algebraic condition that guarantees that in most cases this is expected to happen. We have also demonstrated on small-scale datasets that using extra components improves performance further when we use Oja's algorithm or EigenGame for priming.

\subsection*{Future Work} 
As discussed in Subsection \ref{ExtraComponents}, the fact that after running the priming we will post-process the results allows for modifications of the priming algorithm. 
Since none of the existing approximate-PCA was not designed to be followed by a post-processing step, it is possible that their current form is not the optimal preprocessing step for the full-PCA.
It could also be investigated how the optimal value of additional players, $l$, changes as the parameters of the dataset, $d$ and $n$, or the number of principal components of interest, $k$, vary.

\section{Funding disclosure}
B{\'{a}}lint  M{\'{a}}t{\'{e}} was supported by the Swiss National Science Foundation under grant number FNS-193716 ``Robust Deep Density Models for High-Energy Particle Physics and Solar Flare Analysis (RODEM)".

\nocite{*}
\bibliographystyle{plainnat}
\bibliography{biblio}

\appendix
\newpage
\section{Additional numerical results}
\begin{table*}[ht!]
    \centering
    \vspace{10mm}
    \begin{tabular}{l| c c c c c }
      \multicolumn{1}{c|}{ } &{\begin{tabular}{c}Exponential \\Synthetic\end{tabular}}& {\begin{tabular}{c}Linear \\Synthetic \end{tabular}} & {MNIST} & {CIFAR10} & {\begin{tabular}{c}NIPS \\bag of words\end{tabular}}\\ \noalign{\hrule height1pt}
      \multicolumn{1}{c|}{dimensions} & {50} & {50} & {768} & {3092} & {11 463}\\
      \multicolumn{1}{c|}{points} & {5000} & {5000} & {60 000} & {50 000} & {5812}\\
      \cmidrule(lr){1-6}
       \multicolumn{6}{c}{$V=\pi/8$} \\
       \cmidrule(lr){1-6}
        EigenGame                      & $4.55 \pm 0.67$         &  $12.24\pm4.14$        &  $9.33\pm3.64$         &  $15.79\pm8.33$        &  $7.54\pm2.19$                    \\
        \quad + pPCA (Ours)        & $3.48 \pm 0.87$         &  $4.04\pm2.30$         &  $2.21\pm0.59$         &  $5.02\pm0.73$         &  $3.81\pm1.37$            \\
        \quad + $\text{pPCA}_2$ (Ours) & $0.89 \pm 0.58$         &  $1.64\pm0.62$         &  $1.15\pm0.37$         &  $3.17\pm1.54$         &  $1.66\pm0.54$            \\
        \quad + $\text{pPCA}_4$ (Ours) & $0.27 \pm 0.11$         &  $1.10\pm0.20$         &  $\mathbf{0.78\pm0.43}$&  $\mathbf{2.45\pm0.91}$&  $1.24\pm0.62$            \\
        Oja's method                   & $0.23 \pm 0.09$         &  $0.32\pm0.18$         &  n.a.                  &  n.a.                  &  $5.67\pm2.15$            \\
        \quad + pPCA (Ours)            & $0.08 \pm 0.03$         &  $0.11\pm0.09$         &  $11.53\pm5.74$        &  n.a.                  &  $2.68\pm1.71$            \\
        \quad + $\text{pPCA}_2$ (Ours) & $\mathbf{0.05\pm0.01}$  &  $\mathbf{0.03\pm0.01}$&  $8.03\pm3.32$         &  $14.38\pm7.24$        &  $1.07\pm0.30$            \\
        \quad + $\text{pPCA}_4$ (Ours) & $\mathbf{0.05\pm0.01}$  &  $0.04\pm0.00$         &  $6.83\pm0.86$         &  $11.58\pm2.02$        &  $\mathbf{0.91\pm0.14}$            \\
        Power method                   & $0.21 \pm 0.01$         &  $0.67\pm0.07$         &  $6.32\pm0.80$         &  $9.63\pm1.13$         &  $4.47\pm0.38$            \\
        \quad + pPCA (Ours)            & $0.23 \pm 0.01$         &  $0.68\pm0.07$         &  $6.33\pm0.80$         &  $9.63\pm1.13$         &  $4.48\pm0.38$            \\
       \cmidrule(lr){1-6}
       \multicolumn{6}{c}{$V=\pi/32$} \\
       \cmidrule(lr){1-6}
       EigenGame                      & $5.81 \pm 0.83$         &  n.a.                  &  n.a.                   &  $28.37\pm10.09$       &  n.a.            \\
       \quad + pPCA (Ours)            & $4.58 \pm 0.95$         &  $7.45\pm3.00$         &  $3.44\pm0.57$          &  $6.92\pm0.79$         &  $6.57\pm2.31$            \\
       \quad + $\text{pPCA}_2$ (Ours) & $2.35 \pm 0.61$         &  $2.92\pm0.93$         &  $2.18\pm0.57$          &  $5.09\pm1.42$         &  $3.18\pm0.38$            \\
       \quad + $\text{pPCA}_4$ (Ours) & $0.80 \pm 0.34$         &  $1.94\pm0.44$         &  $\mathbf{1.58\pm0.57}$ &  $\mathbf{4.61\pm1.10}$&  $2.59\pm0.59$            \\
       Oja's method                   & $0.34 \pm 0.09$         &  $0.58\pm0.24$         &  n.a.                   &  n.a.                  &  $10.03\pm2.98$            \\
       \quad + pPCA (Ours)            & $0.15 \pm 0.05$         &  $0.19\pm0.10$         &  $19.74\pm7.31$         &  n.a.                  &  $4.54\pm1.98$            \\
       \quad + $\text{pPCA}_2$ (Ours) & $0.09 \pm 0.02$         &  $\mathbf{0.05\pm0.01}$&  $13.07\pm4.45$         &  $26.32\pm9.60$        &  $1.90\pm0.43$            \\
       \quad + $\text{pPCA}_4$ (Ours) & $\mathbf{0.07\pm0.01}$  &  $\mathbf{0.05\pm0.01}$&  $11.96\pm1.44$         &  $18.97\pm3.80$        &  $\mathbf{1.61\pm0.21}$            \\
       Power method                   & $0.22 \pm 0.01$         &  n.a.                  &  $6.38\pm0.80$          &  $9.73\pm1.13$         &  $4.52\pm0.38$            \\
       \quad + pPCA (Ours)            & $0.23 \pm 0.01$         &  $0.69\pm0.07$         &  $6.38\pm0.80$          &  $9.74\pm1.13$         &  $4.54\pm0.38$            \\
      \cmidrule(lr){1-6}
       \multicolumn{6}{c}{$V=\pi/128$} \\
       \cmidrule(lr){1-6}
       EigenGame                      & n.a.                    &  n.a.                  &  n.a.                   &  n.a.                  &  n.a.            \\
       \quad + pPCA (Ours)            & n.a.                    &  n.a.                  &  n.a.                   &  n.a.                  &  n.a.            \\
       \quad + $\text{pPCA}_2$ (Ours) & $3.05\pm0.71$           &  $4.84\pm1.57$         &  n.a.                   &  n.a.                  &  n.a.            \\
       \quad + $\text{pPCA}_4$ (Ours) & $1.60\pm0.35$           &  $2.93\pm0.63$         &  n.a.                   &  n.a.                  &  n.a.            \\
       Oja's method                   & $0.46\pm0.10$           &  $0.92\pm0.29$         &  n.a.                   &  n.a.                  &  n.a.            \\
       \quad + pPCA (Ours)            & $0.23\pm0.06$           &  $0.29\pm0.11$         &  $29.88\pm8.00$         &  n.a.                  &  n.a.            \\
       \quad + $\text{pPCA}_2$ (Ours) & $0.14\pm0.03$           &  $0.07\pm0.02$         &  $19.45\pm4.69$         &  $40.01\pm11.90$       &  n.a.           \\
       \quad + $\text{pPCA}_4$ (Ours) & $\mathbf{0.09\pm0.01}$  &  $\mathbf{0.06\pm0.01}$&  $17.60\pm1.71$         &  $27.88\pm4.82$        &  n.a.            \\
       Power method                   & $0.22\pm 0.01$          &  n.a.                  &  n.a.                   &  n.a.                  &  n.a.            \\
       \quad + pPCA (Ours)            & $0.23\pm 0.01$          &  $0.69\pm0.07$         &  $\mathbf{6.46\pm0.80}$ &  $\mathbf{9.80\pm0.94}$&  $\mathbf{6.39\pm 0.58}$ \\           
      \cmidrule(lr){1-6 }  
    \end{tabular}
    \caption{Time to reach an Eigenvalues streak of length 16 with different threshold values $V \in \{\pi/8,\pi/32,\pi/128 \}$ for the algorithms on the small-scale datasets. All experiments were repeated 10 times, mean and variance values are reported. All values have units of seconds, if one the 10 runs did not reach a streak of length 16 during training, then  ``n.a.'' is reported.}
    \label{results_big}
  \end{table*}
\newpage
\section{Additional plots}
\begin{figure*}
  \begin{subfigure}[t]{\textwidth}
    \centering
    \includegraphics[width=.9\textwidth]{./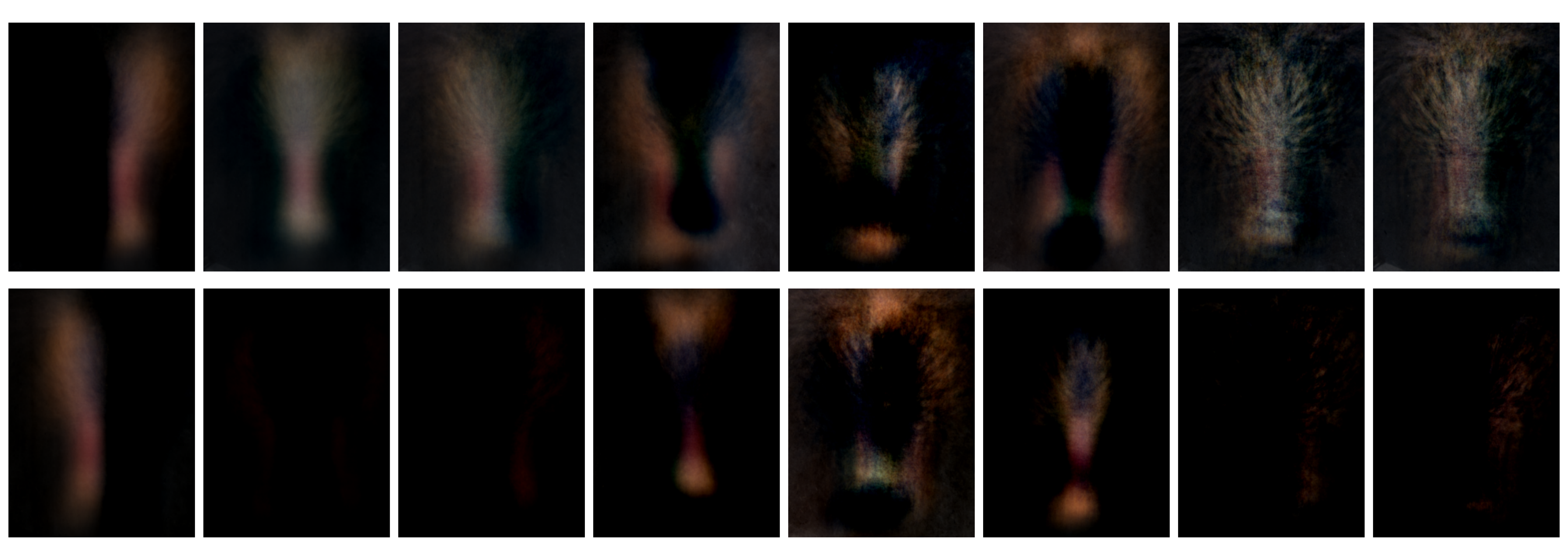}
   \caption{EigenGame players after 30 training steps}
  \end{subfigure}
  \begin{subfigure}[t]{\textwidth}
    \centering
    \includegraphics[width=.9\textwidth]{./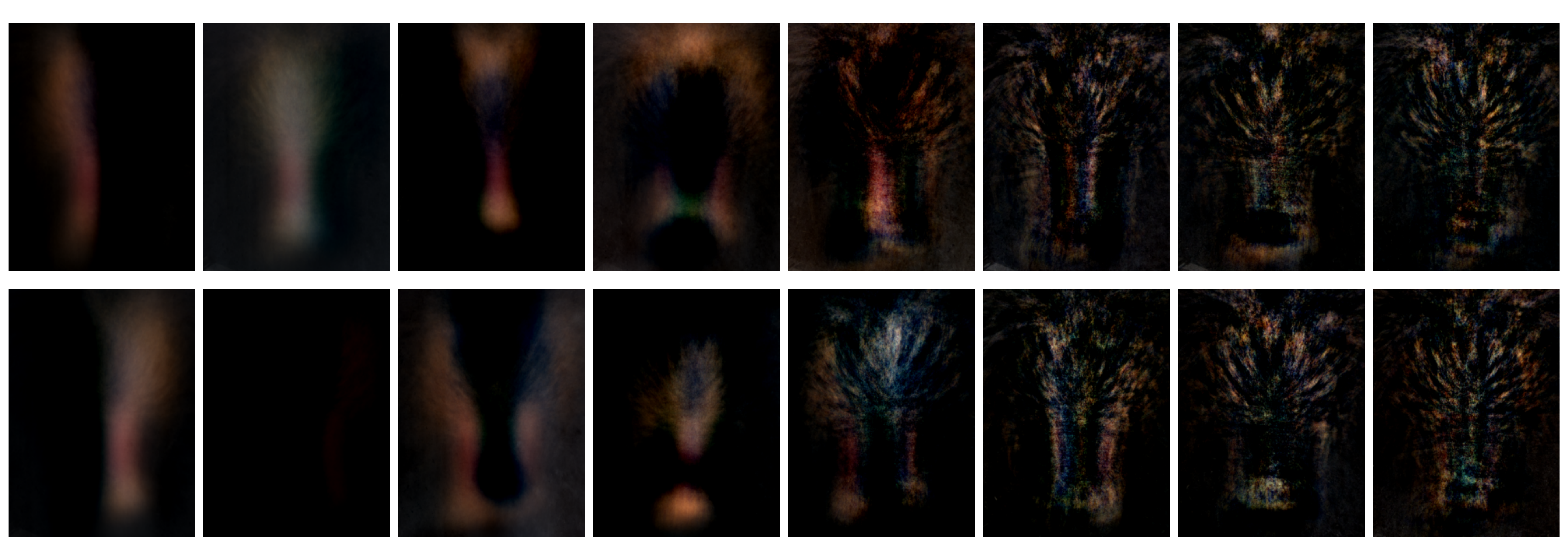}
   \caption{Principal directions predicted by pPCA after 30 training steps}
  \end{subfigure}
  \begin{subfigure}[t]{\textwidth}
    \centering
    \includegraphics[width=.9\textwidth]{./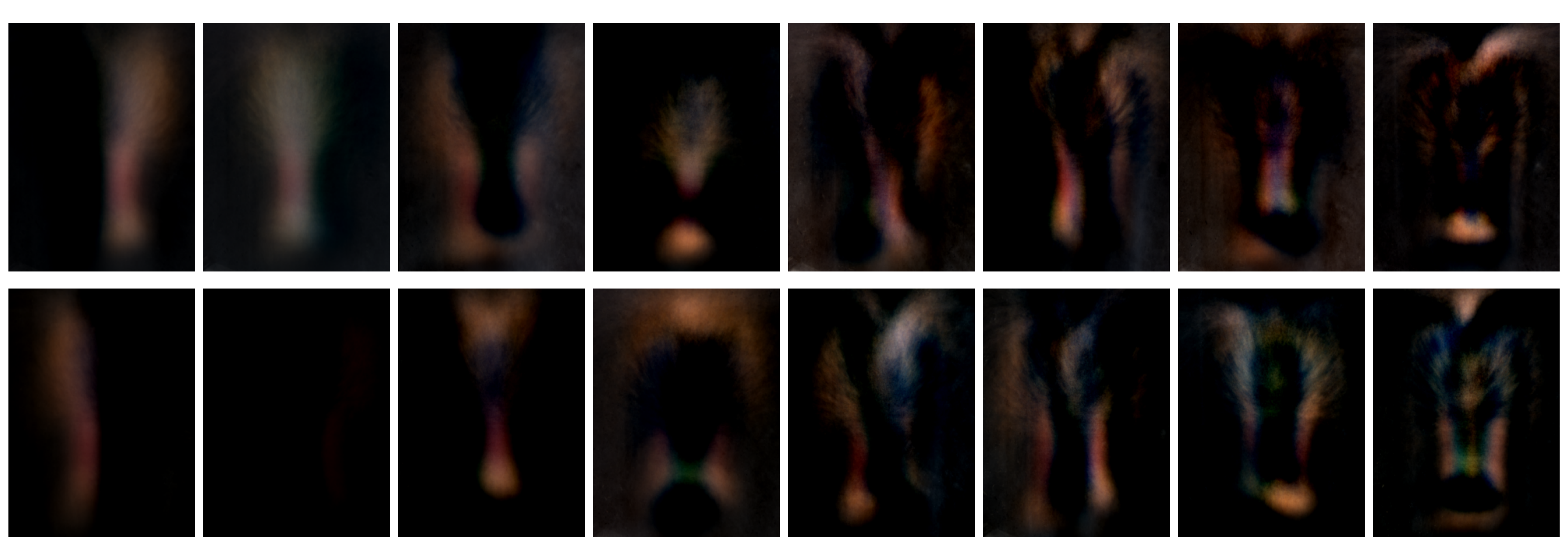}
   \caption{EigenGame players after 300 training steps}
  \end{subfigure}
  \begin{subfigure}[t]{\textwidth}
    \centering
    \includegraphics[width=.9\textwidth]{./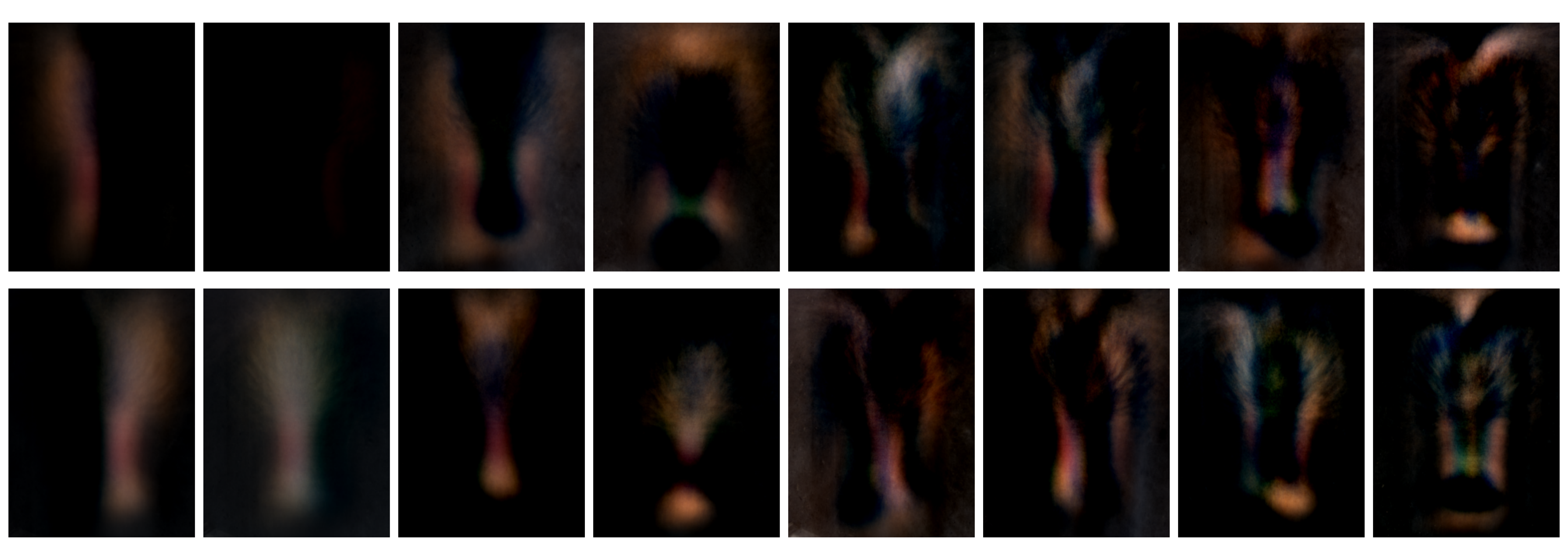}
   \caption{Principal directions predicted by pPCA after 300 training steps}
  \end{subfigure}
  \caption{Each of the subplots show the first 8 principal components of the Large Lobster Image Dataset organised into 8 columns found by the respective algorithms. The two lines show the positive and negative parts of the vectors. After 30 training steps pPCA already finds the first 4 vectors, while EigenGame only predicts  the first 2 correctly. After 300 training steps, the methods have the same output for all 8 principal directions.}
  \label{PrincipalLobsters}
\end{figure*}
\begin{figure*}
    \begin{subfigure}[t]{0.45\textwidth}
      \centering
      \includegraphics[width=4.5cm,trim=30 0 0 0]{./plots/legend.pdf}
    \end{subfigure}
    \begin{subfigure}[t]{0.5\textwidth}
      \includegraphics[width=\textwidth,trim=30 0 0 0]{./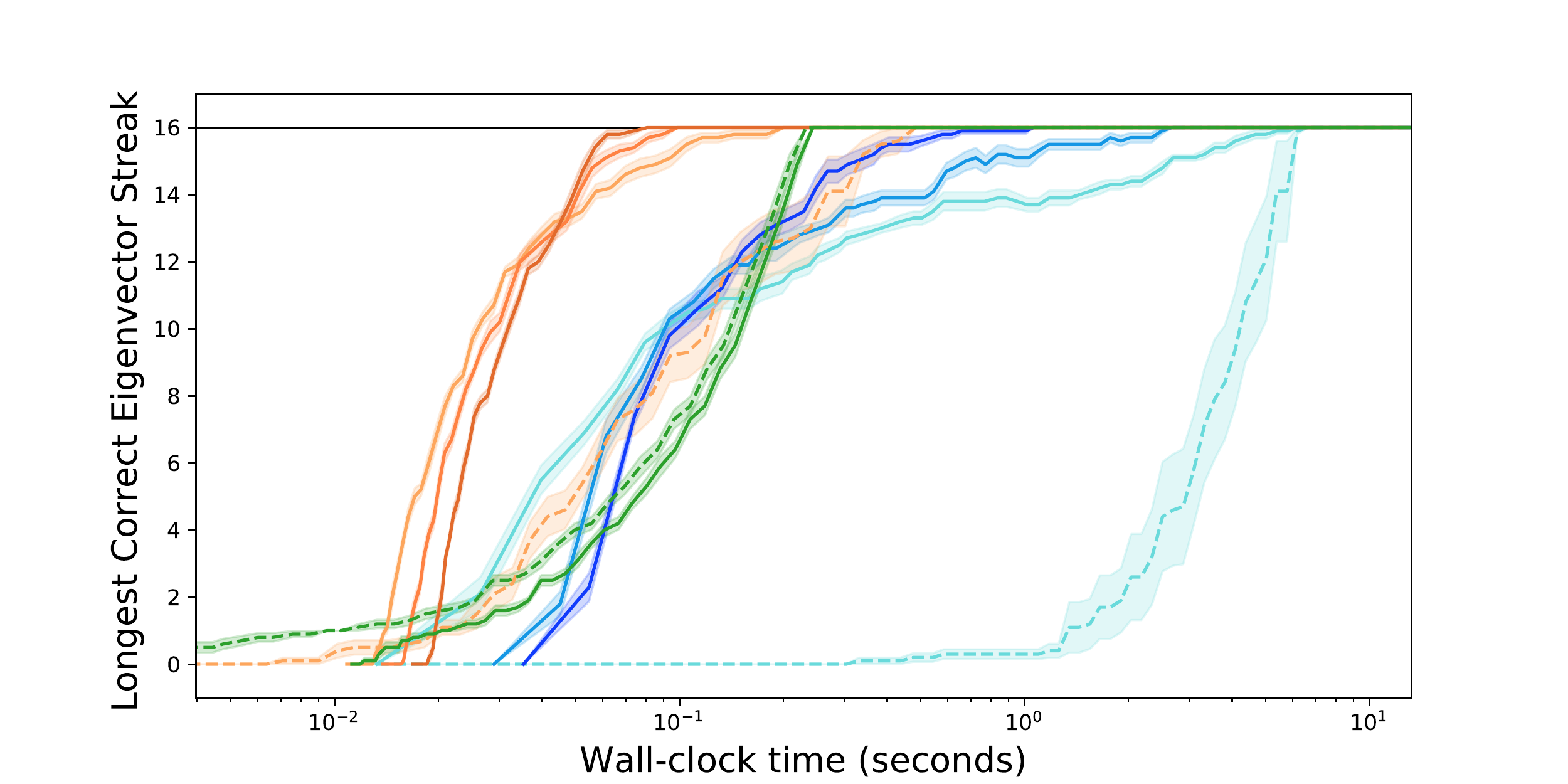}
      \caption{$V=\pi/16$}
    \end{subfigure}
    \begin{subfigure}[t]{0.5\textwidth}
      \includegraphics[width=\textwidth,trim=30 0 0 0]{./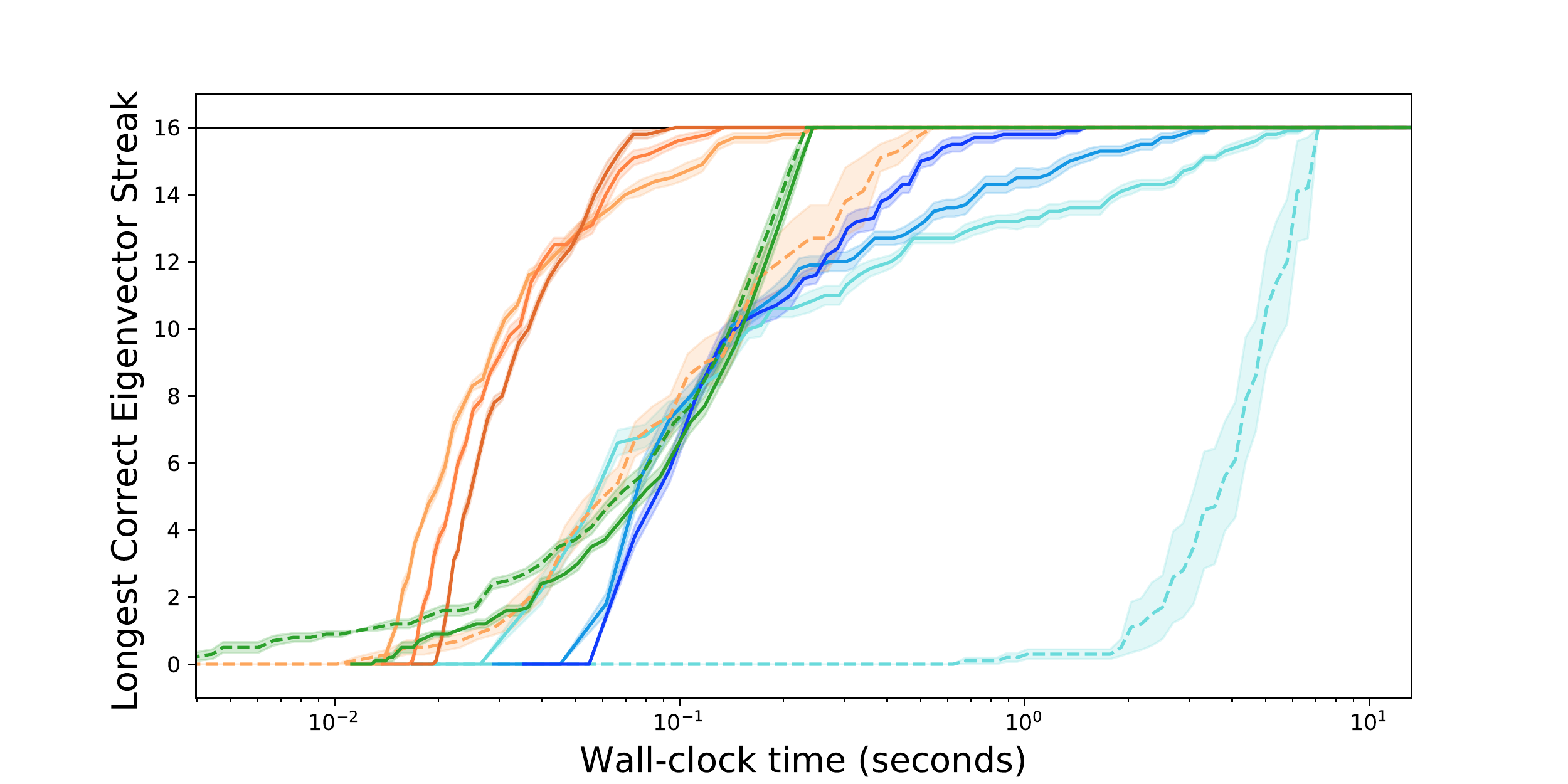}
      \caption{$V=\pi/32$}
    \end{subfigure}
    \begin{subfigure}[t]{0.5\textwidth}
      \includegraphics[width=\textwidth,trim=30 0 0 0]{./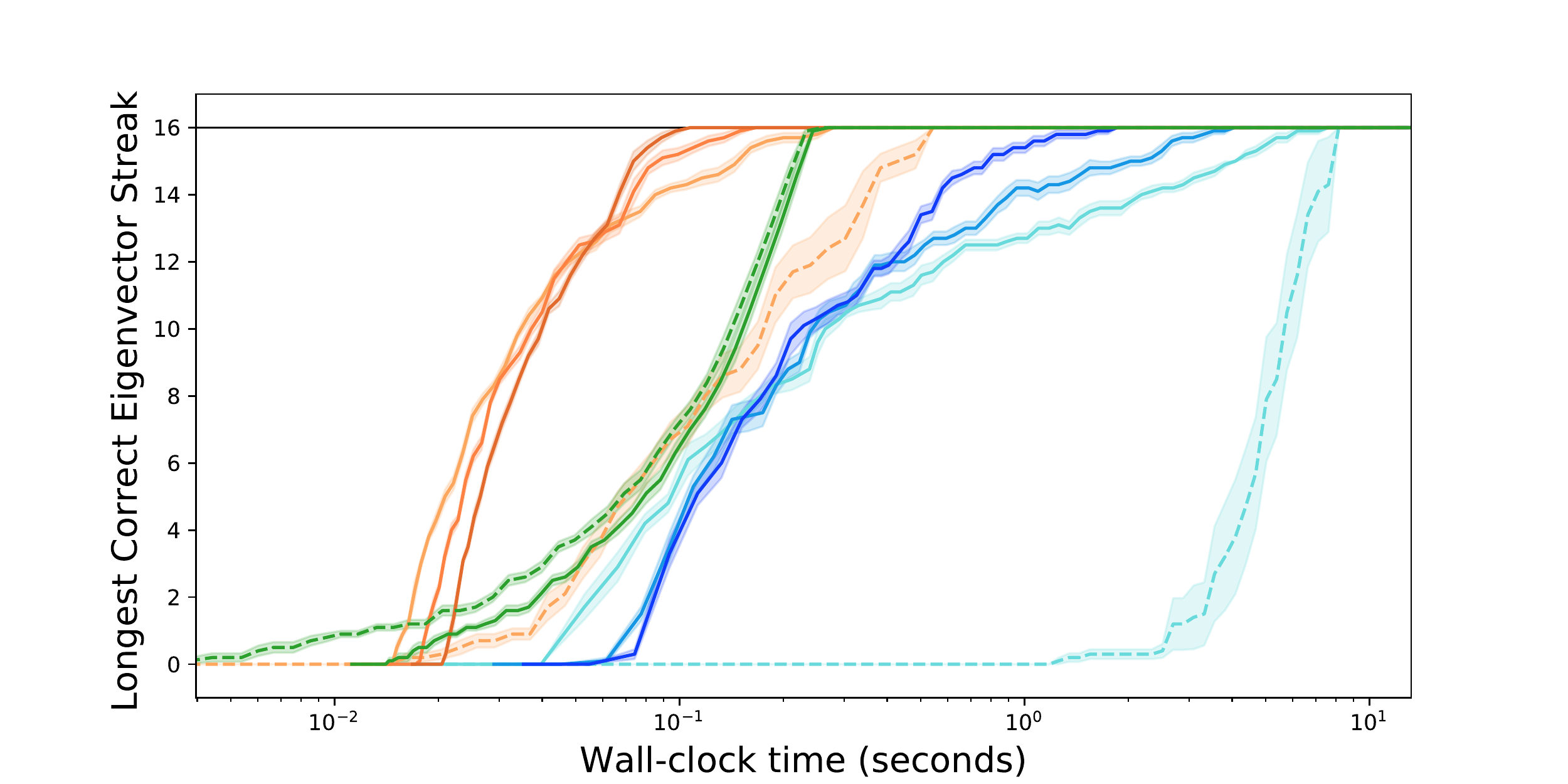}
      \caption{$V=\pi/64$}
    \end{subfigure}
    \begin{subfigure}[t]{0.5\textwidth}
      \includegraphics[width=\textwidth,trim=30 0 0 0]{./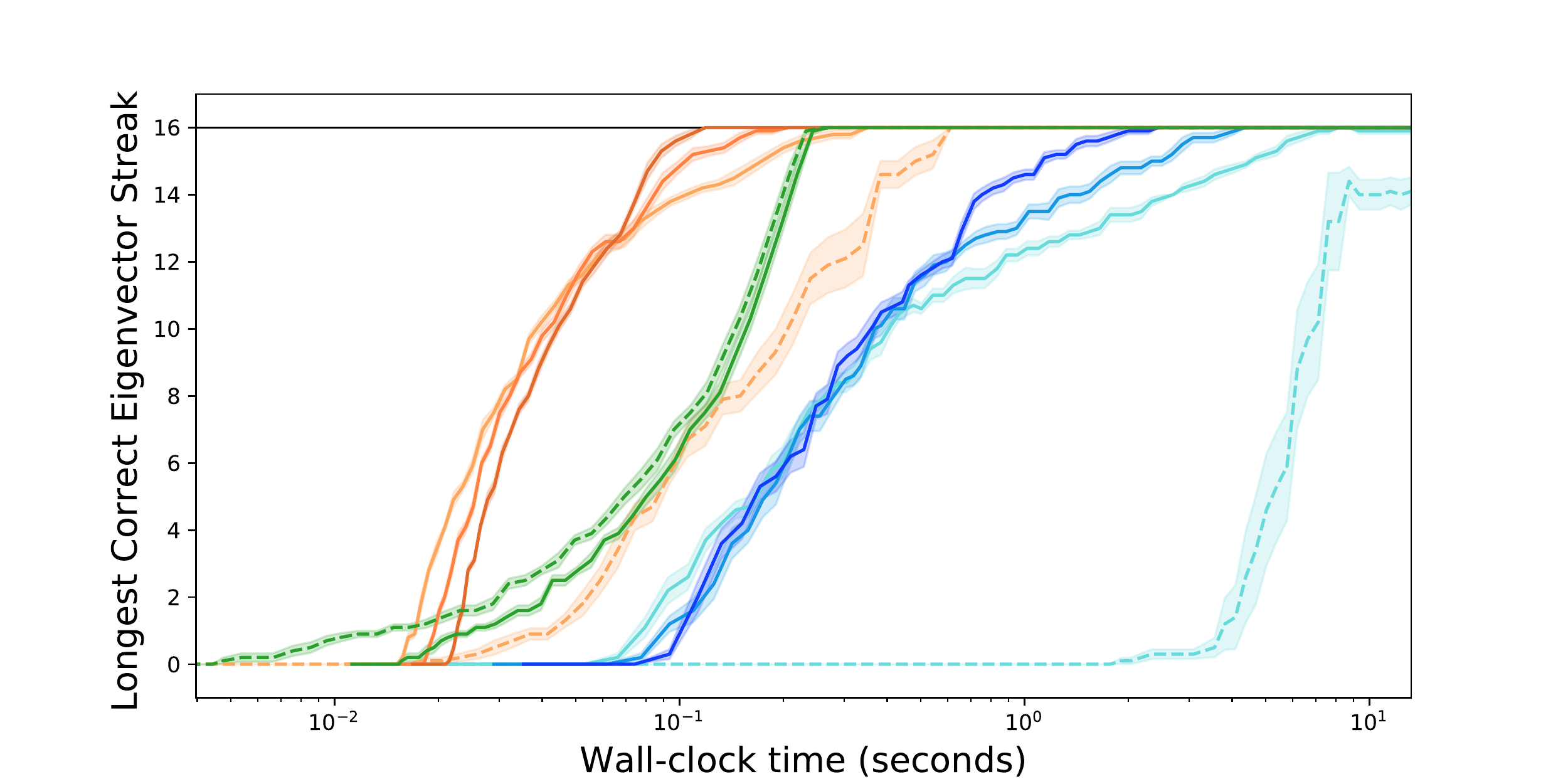}
      \caption{$V=\pi/128$}
    \end{subfigure}
    \begin{subfigure}[t]{0.5\textwidth}
      \includegraphics[width=\textwidth,trim=30 0 0 0]{./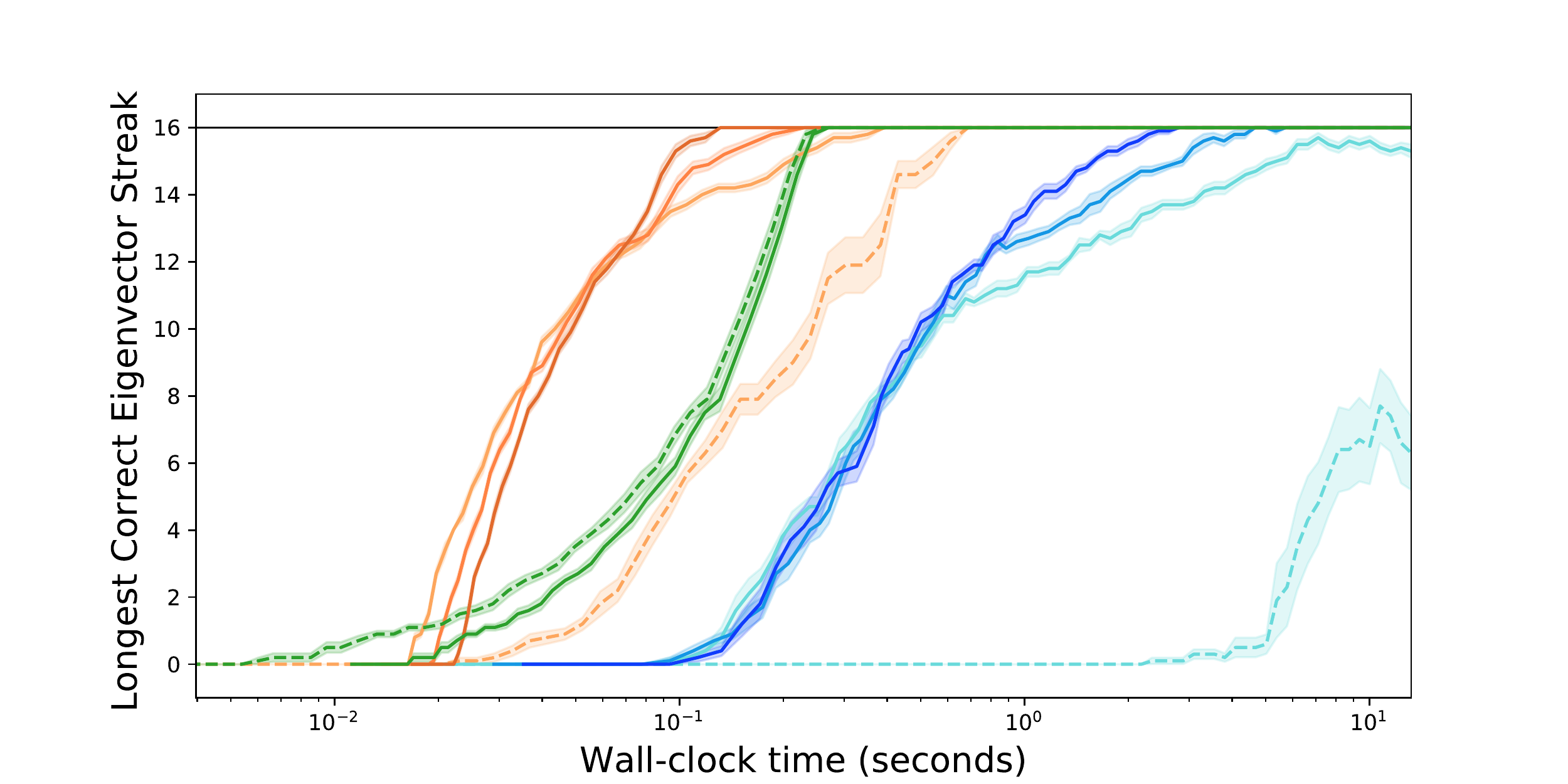}
      \caption{$V=\pi/256$}
    \end{subfigure}
    \begin{subfigure}[t]{0.5\textwidth}
      \includegraphics[width=\textwidth,trim=30 0 0 0]{./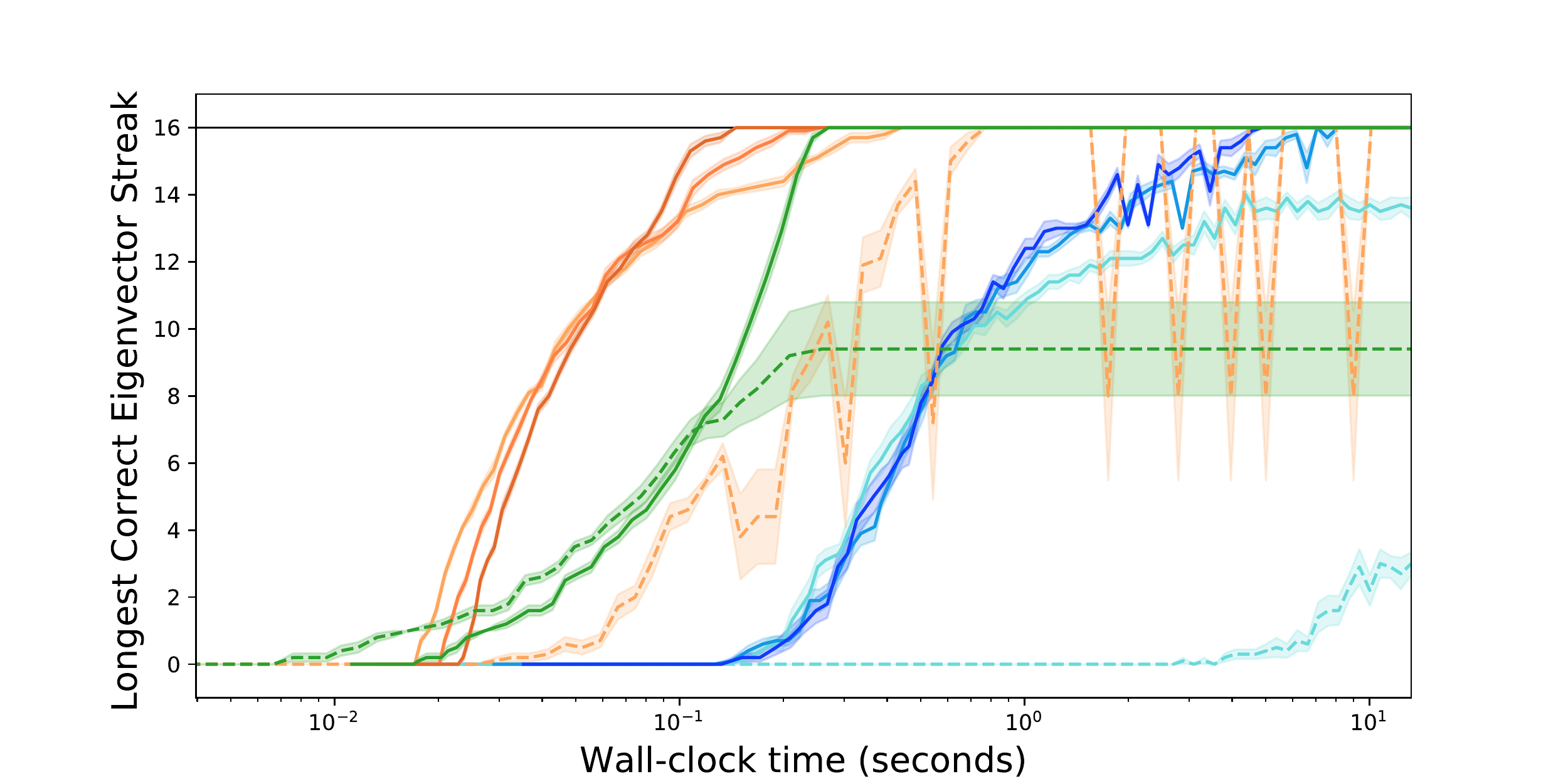}
      \caption{$V=\pi/512$}
    \end{subfigure}
    \begin{subfigure}[t]{0.5\textwidth}
      \includegraphics[width=\textwidth,trim=30 0 0 0]{./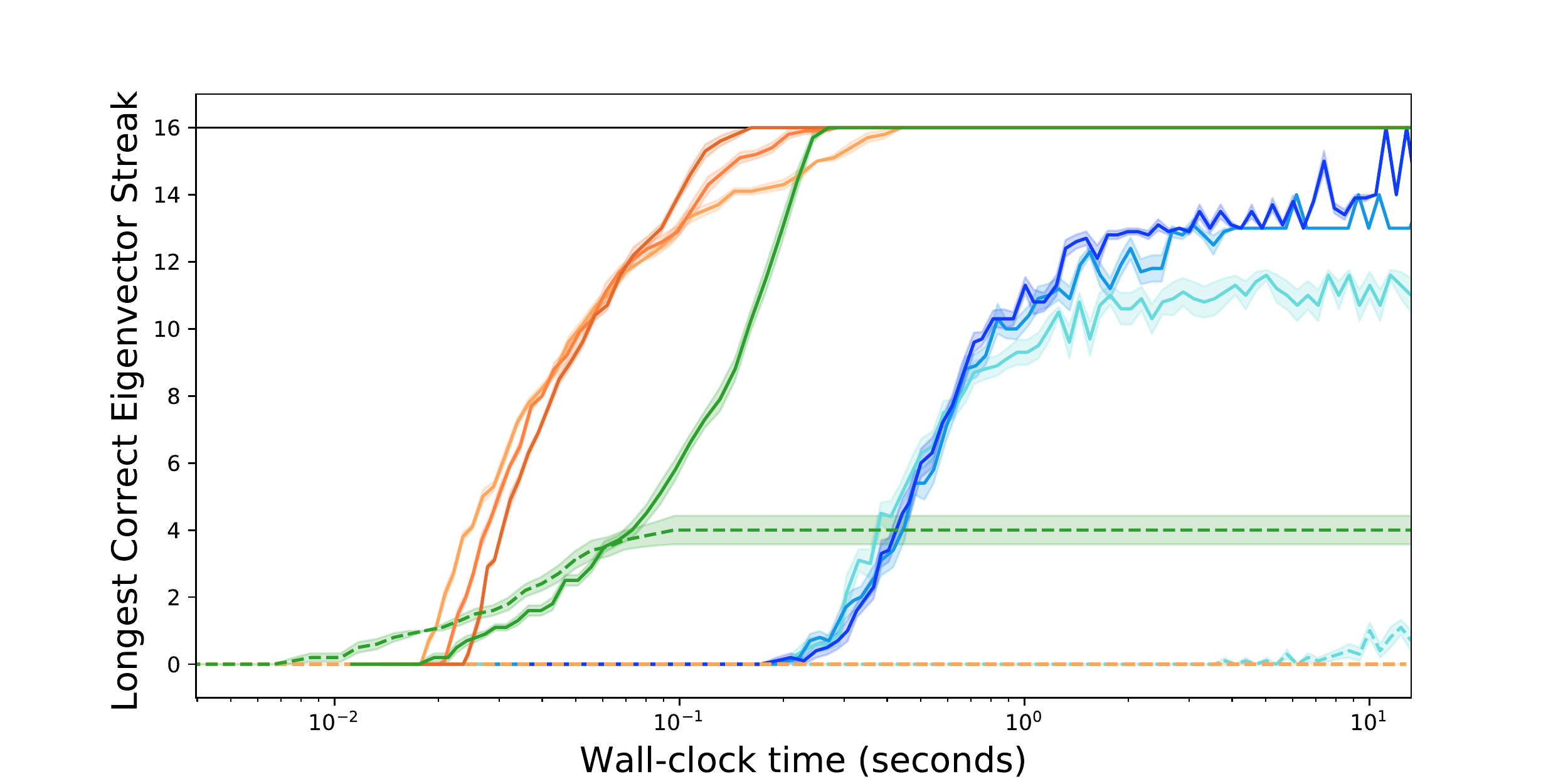}
      \caption{$V=\pi/1024$}
    \end{subfigure}
    \caption{Synthetic data with exponential spectrum}
\end{figure*}

\begin{figure*}
  \begin{subfigure}[t]{0.5\textwidth}
    \centering
    \includegraphics[width=4.5cm,trim=30 0 0 0]{./plots/legend.pdf}
  \end{subfigure}
    \begin{subfigure}[t]{0.5\textwidth}
      \includegraphics[width=\textwidth,trim=30 0 0 0]{./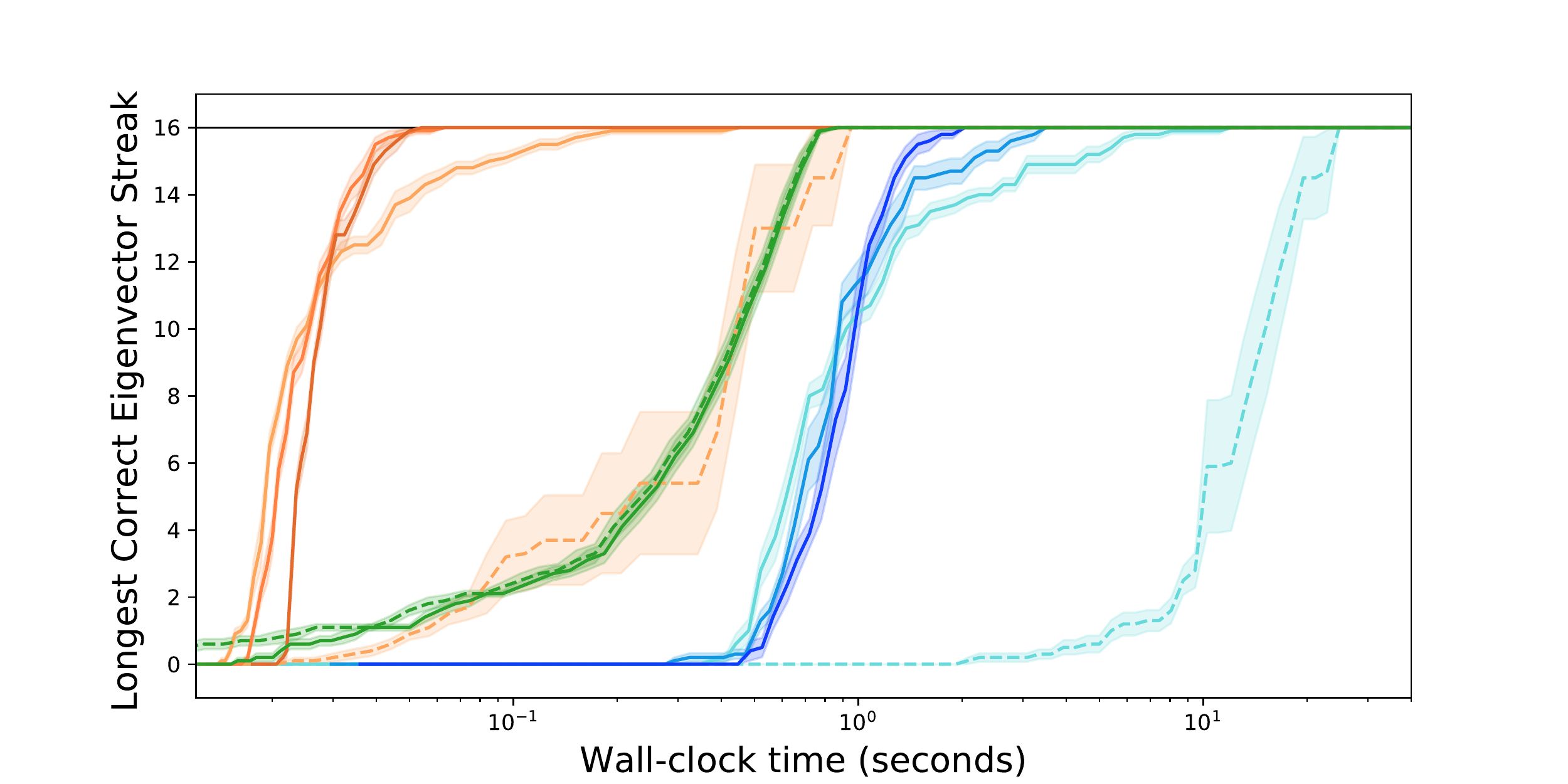}
      \caption{$V=\pi/16$}
    \end{subfigure}
    \begin{subfigure}[t]{0.5\textwidth}
      \includegraphics[width=\textwidth,trim=30 0 0 0]{./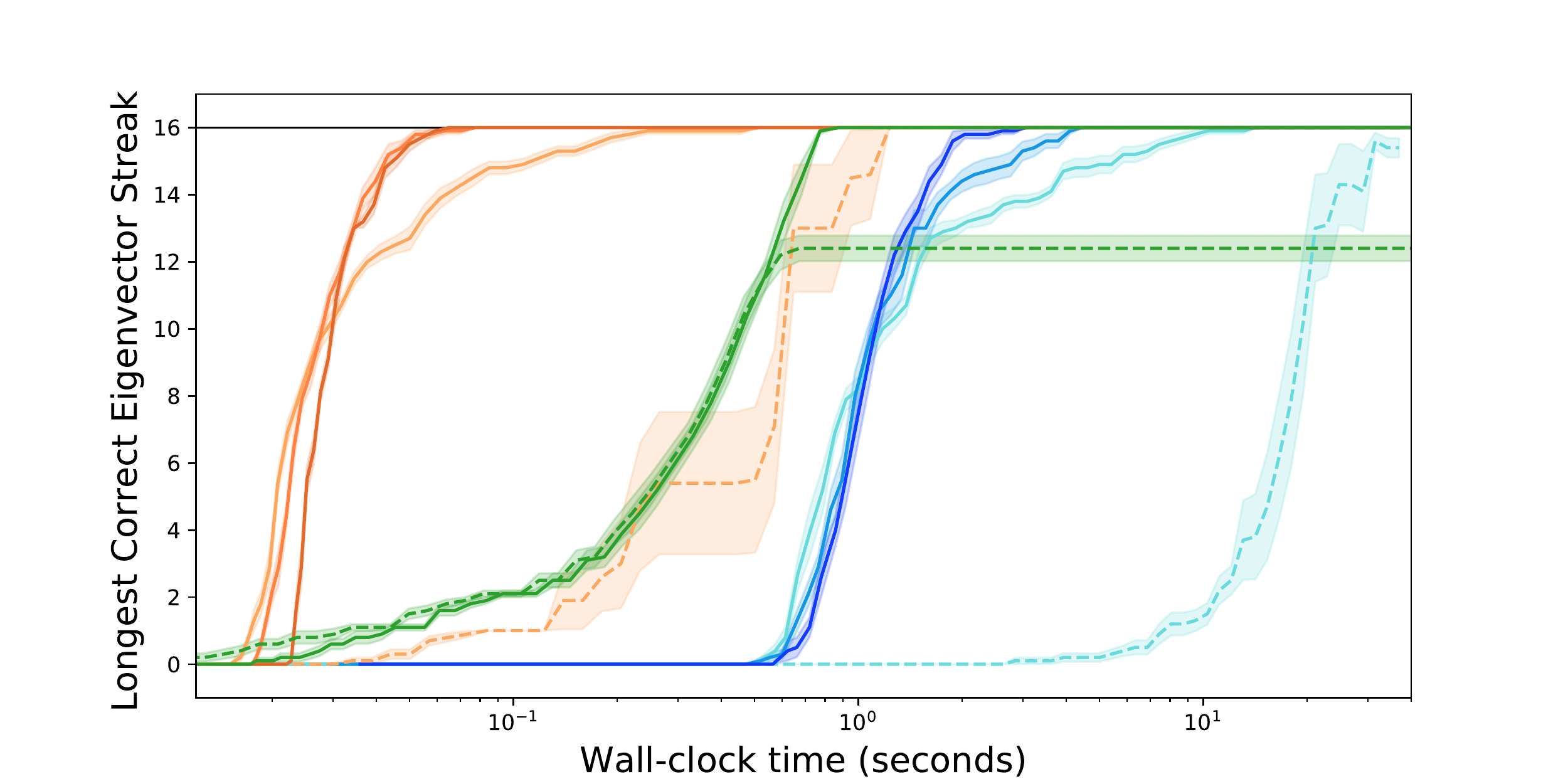}
      \caption{$V=\pi/32$}
    \end{subfigure}
    \begin{subfigure}[t]{0.5\textwidth}
      \includegraphics[width=\textwidth,trim=30 0 0 0]{./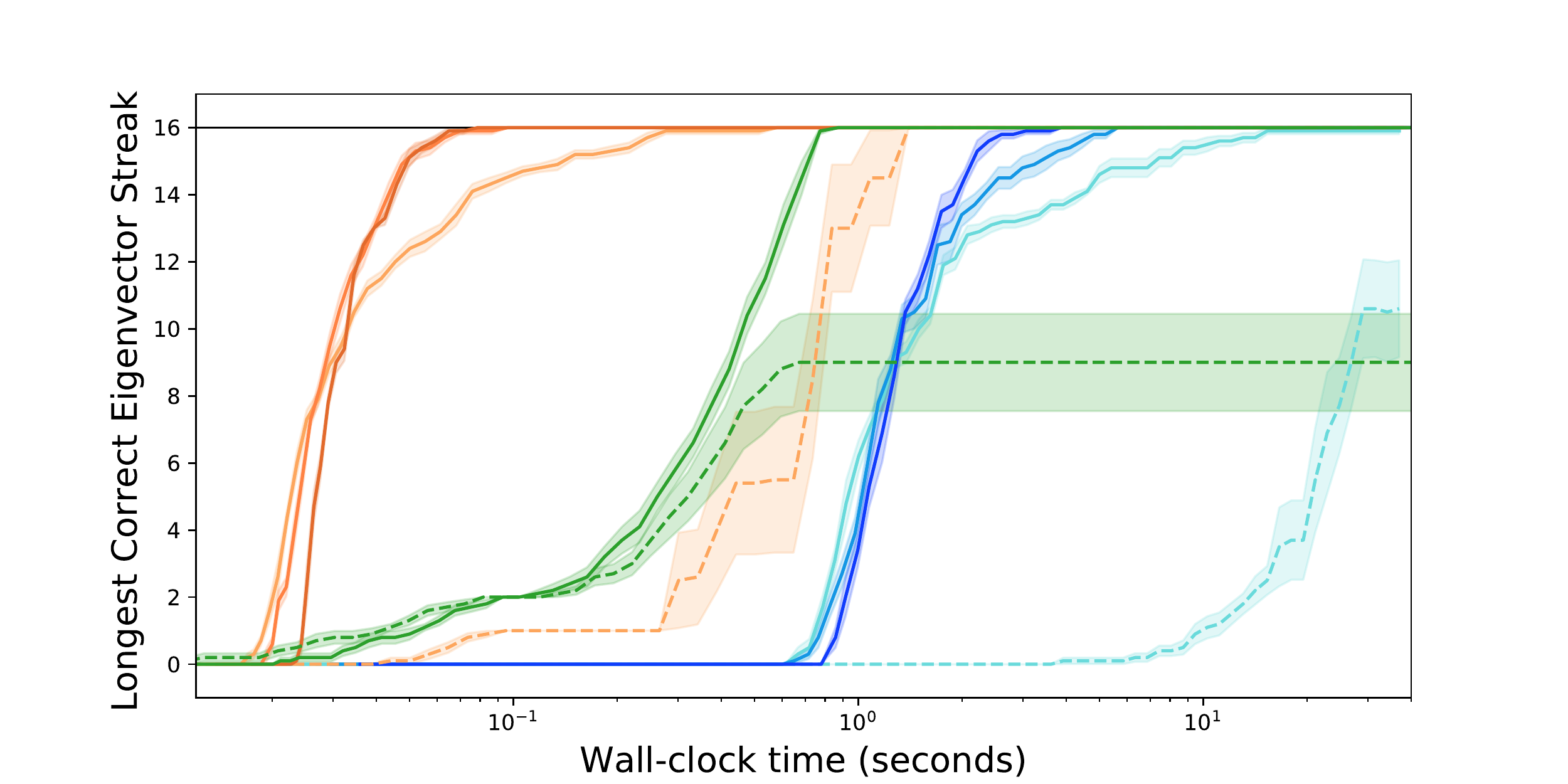}
      \caption{$V=\pi/64$}
    \end{subfigure}
    \begin{subfigure}[t]{0.5\textwidth}
      \includegraphics[width=\textwidth,trim=30 0 0 0]{./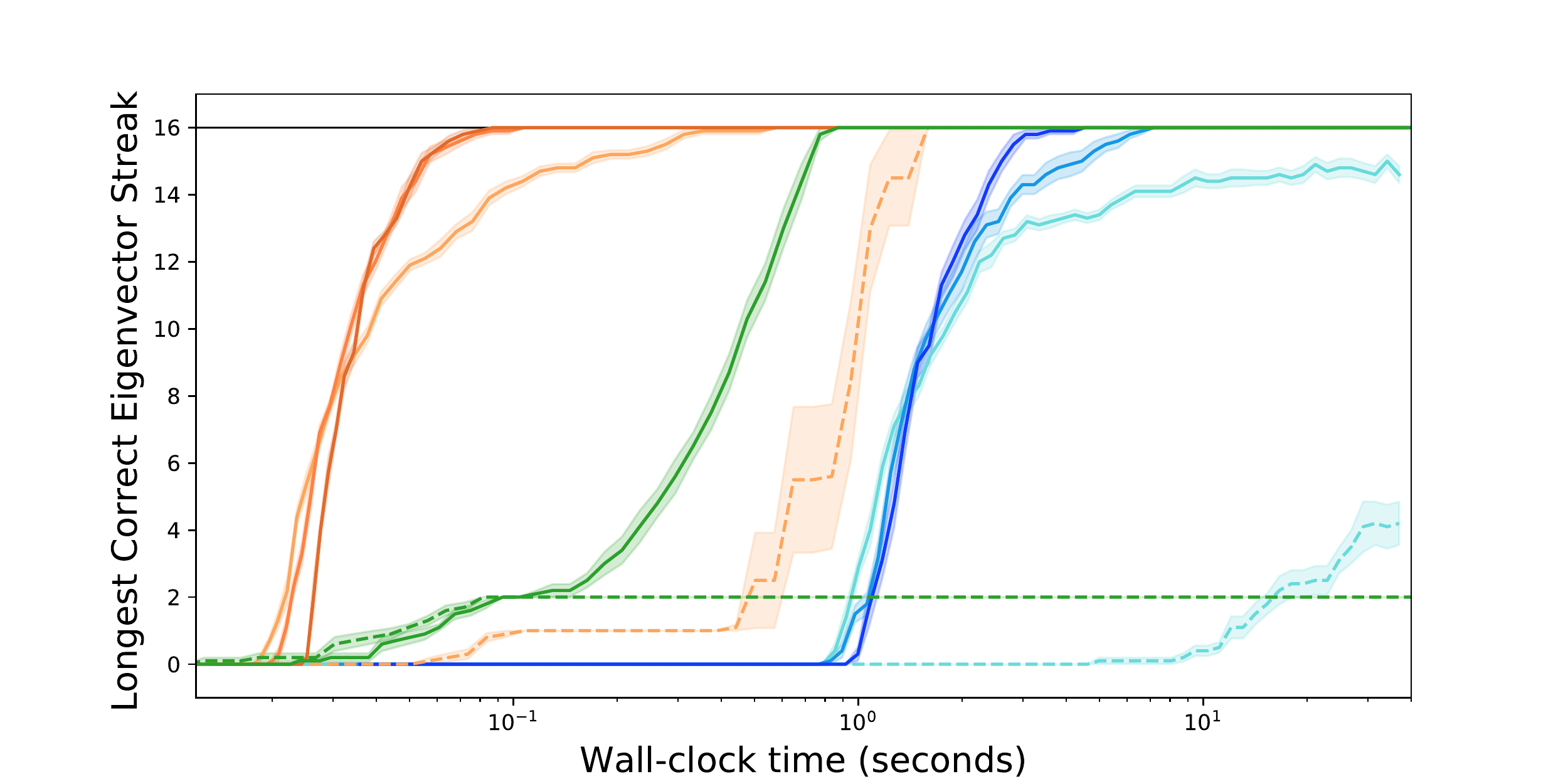}
      \caption{$V=\pi/128$}
    \end{subfigure}
    \begin{subfigure}[t]{0.5\textwidth}
      \includegraphics[width=\textwidth,trim=30 0 0 0]{./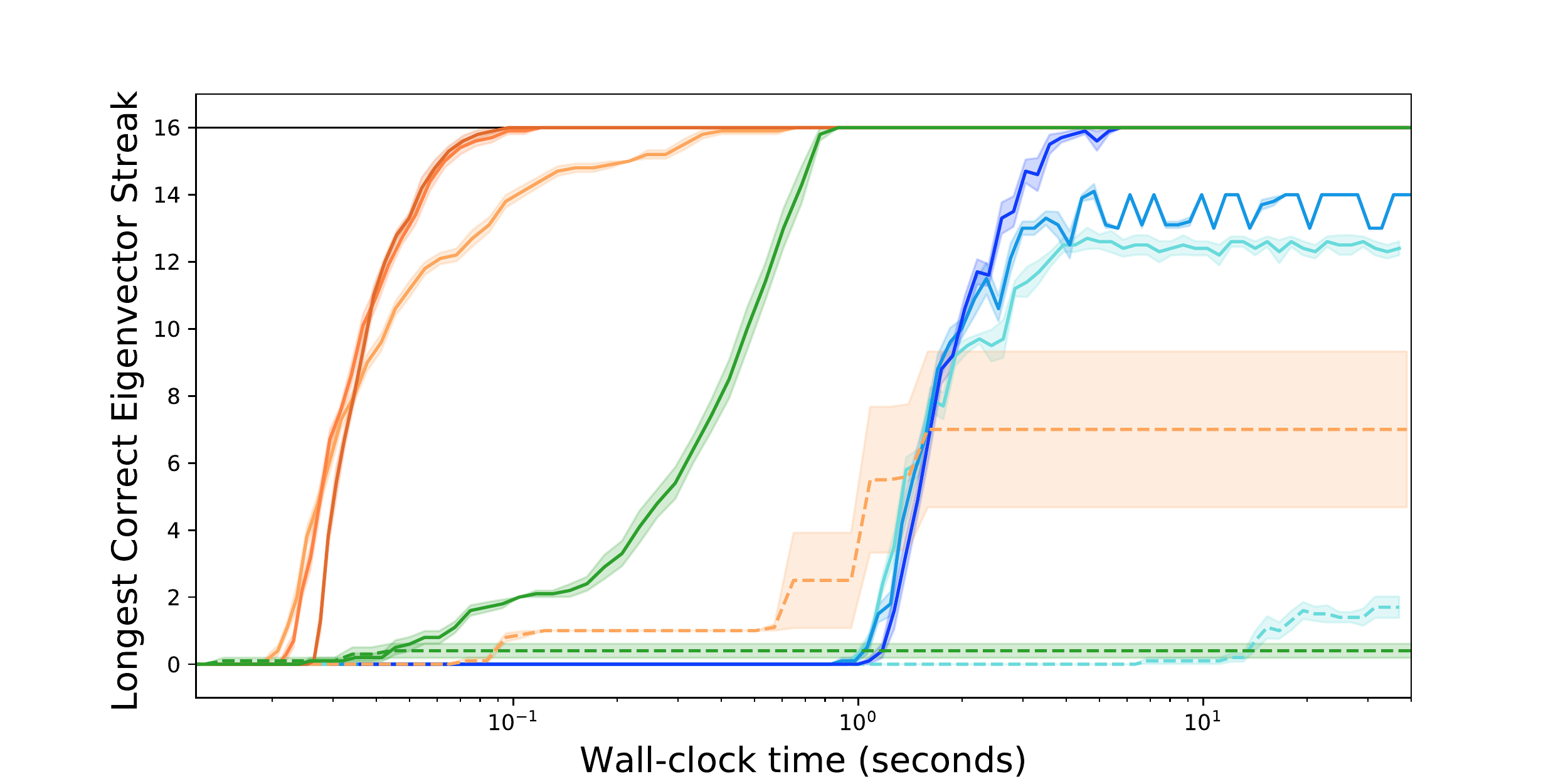}
     \caption{$V=\pi/256$}
    \end{subfigure}
    \begin{subfigure}[t]{0.5\textwidth}
      \includegraphics[width=\textwidth,trim=30 0 0 0]{./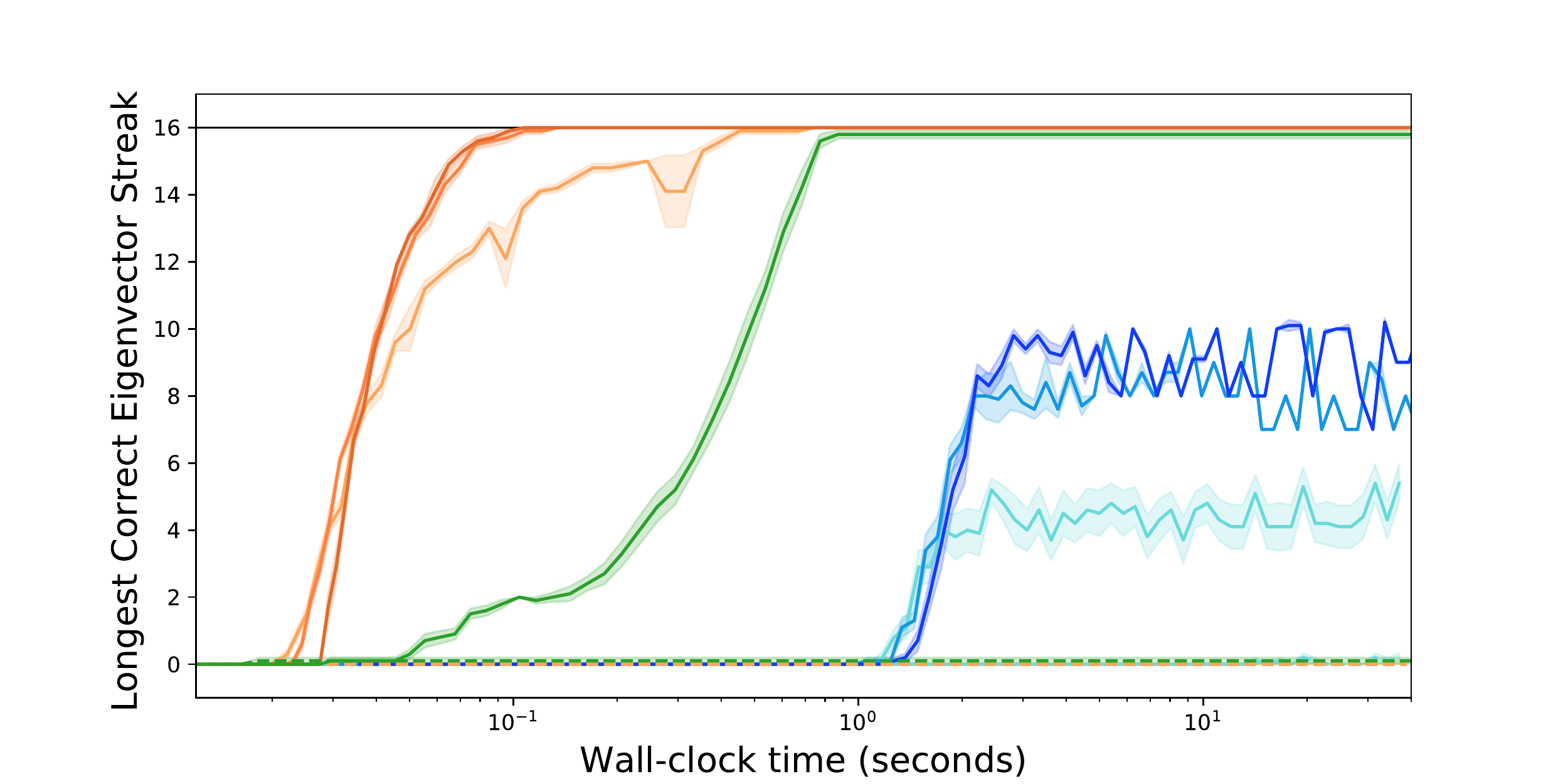}
      \caption{$V=\pi/512$}
    \end{subfigure}
    \begin{subfigure}[t]{0.5\textwidth}
      \includegraphics[width=\textwidth,trim=30 0 0 0]{./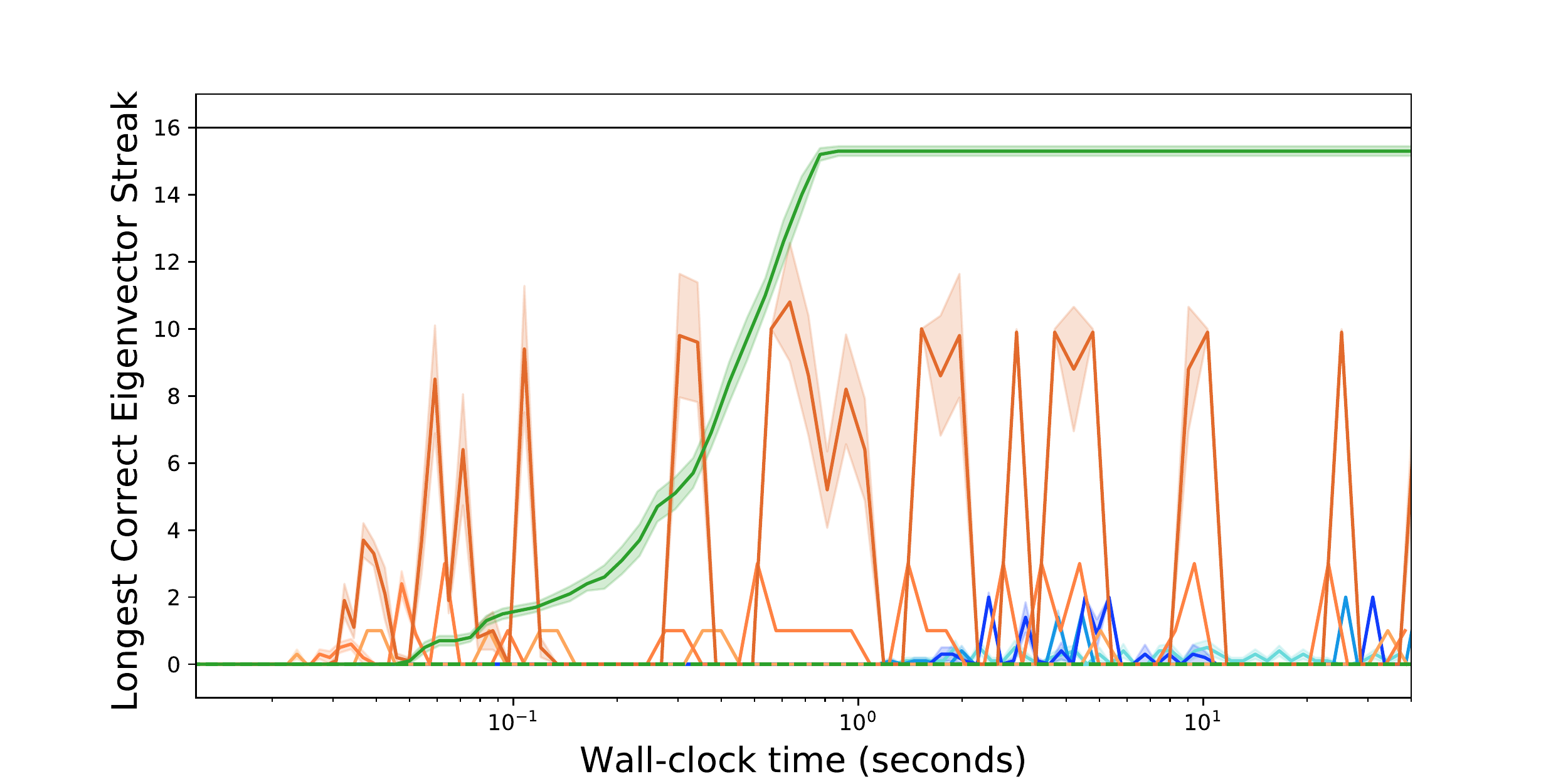}
      \caption{$V=\pi/1024$}
    \end{subfigure}
    \caption{Synthetic data with linear spectrum}
\end{figure*}

\begin{figure*}
  \begin{subfigure}[t]{0.5\textwidth}
    \centering
    \includegraphics[width=4.5cm,trim=30 0 0 0]{./plots/legend.pdf}
  \end{subfigure}
    \begin{subfigure}[t]{0.5\textwidth}
      \includegraphics[width=\textwidth,trim=30 0 0 0]{./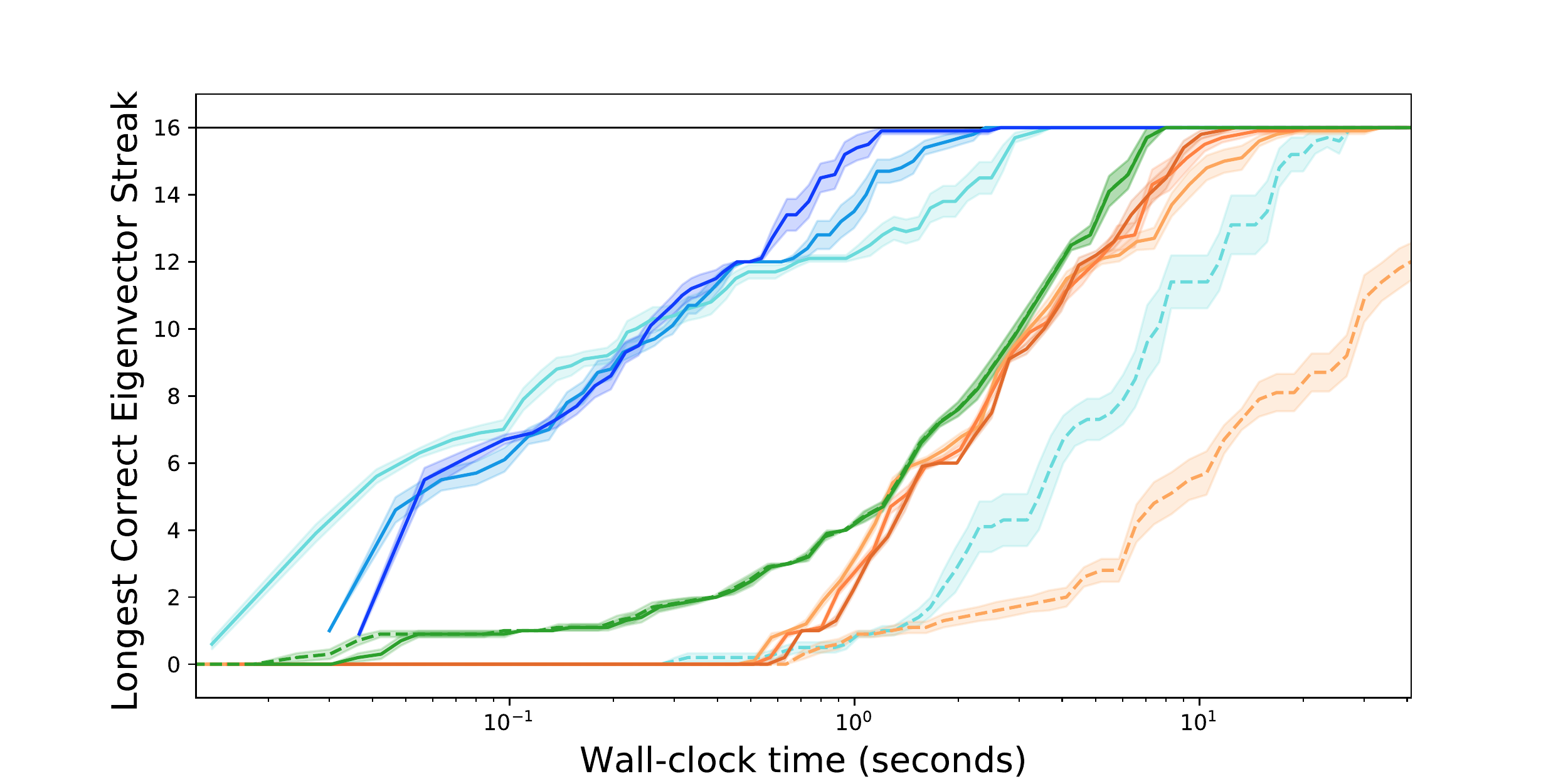}
      \caption{$V=\pi/16$}
    \end{subfigure}
    \begin{subfigure}[t]{0.5\textwidth}
      \includegraphics[width=\textwidth,trim=30 0 0 0]{./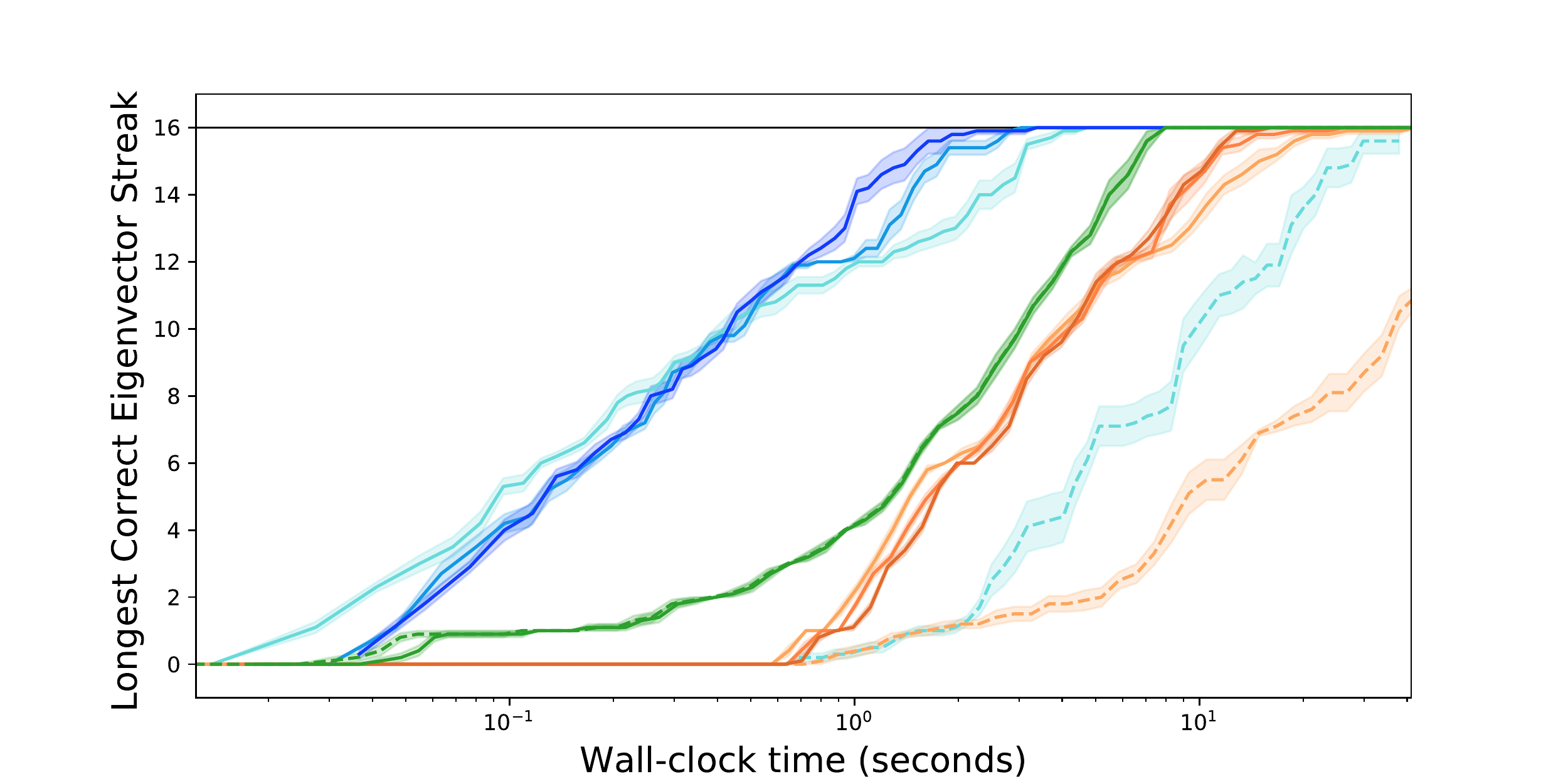}
      \caption{$V=\pi/32$}
    \end{subfigure}
    \begin{subfigure}[t]{0.5\textwidth}
      \includegraphics[width=\textwidth,trim=30 0 0 0]{./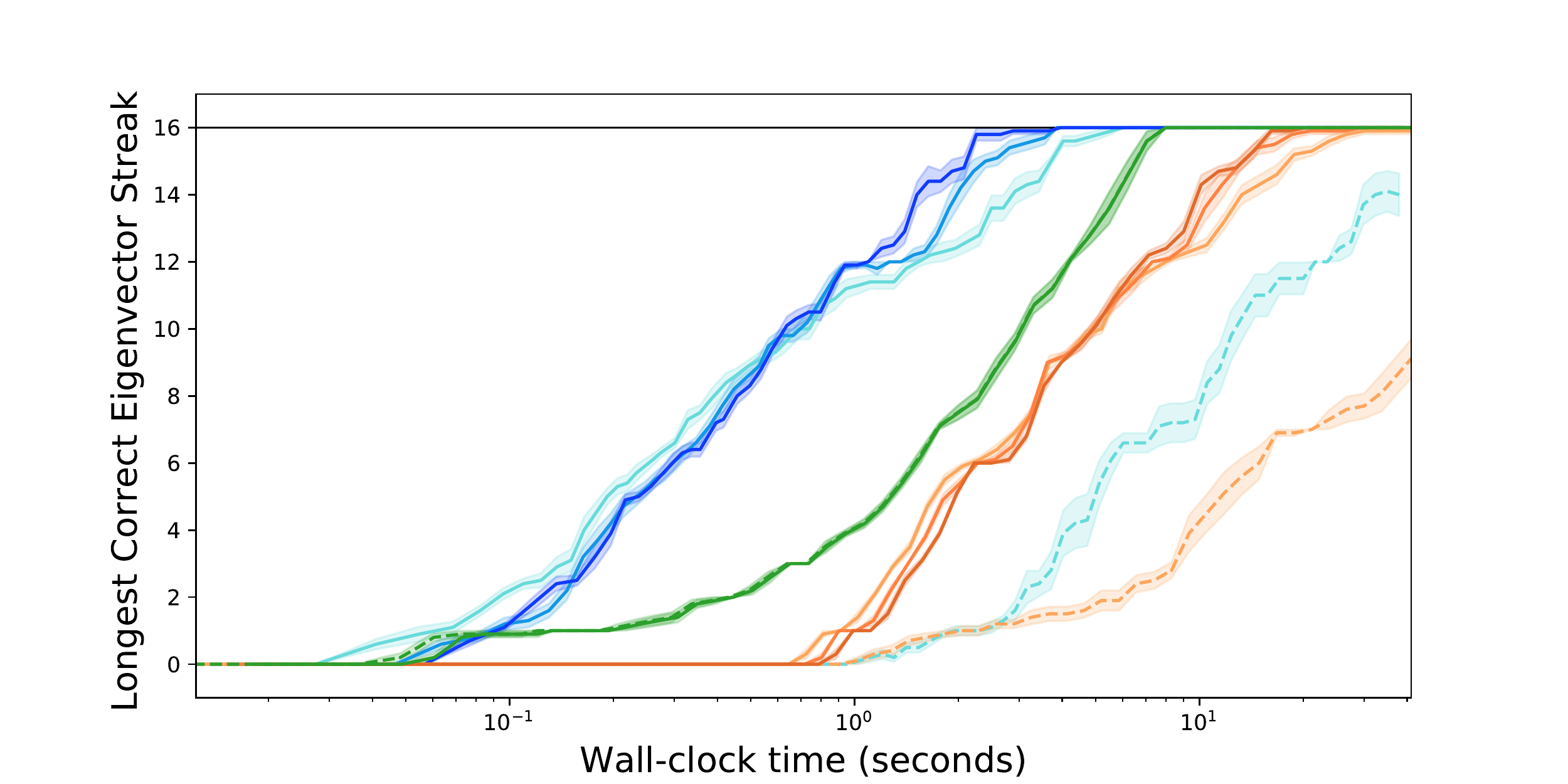}
      \caption{$V=\pi/64$}
    \end{subfigure}
    \begin{subfigure}[t]{0.5\textwidth}
      \includegraphics[width=\textwidth,trim=30 0 0 0]{./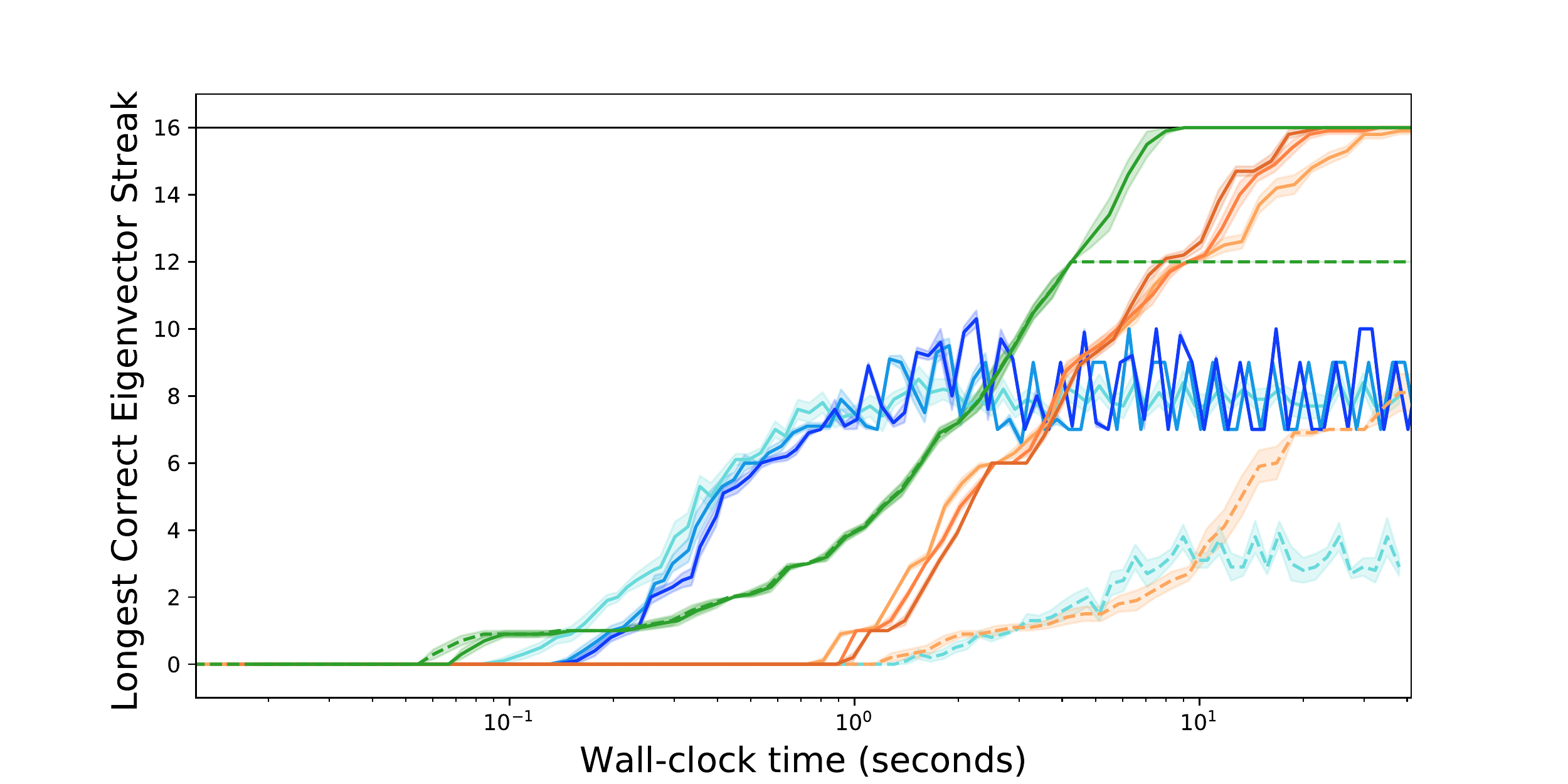}
      \caption{$V=\pi/128$}
    \end{subfigure}
    \begin{subfigure}[t]{0.5\textwidth}
      \includegraphics[width=\textwidth,trim=30 0 0 0]{./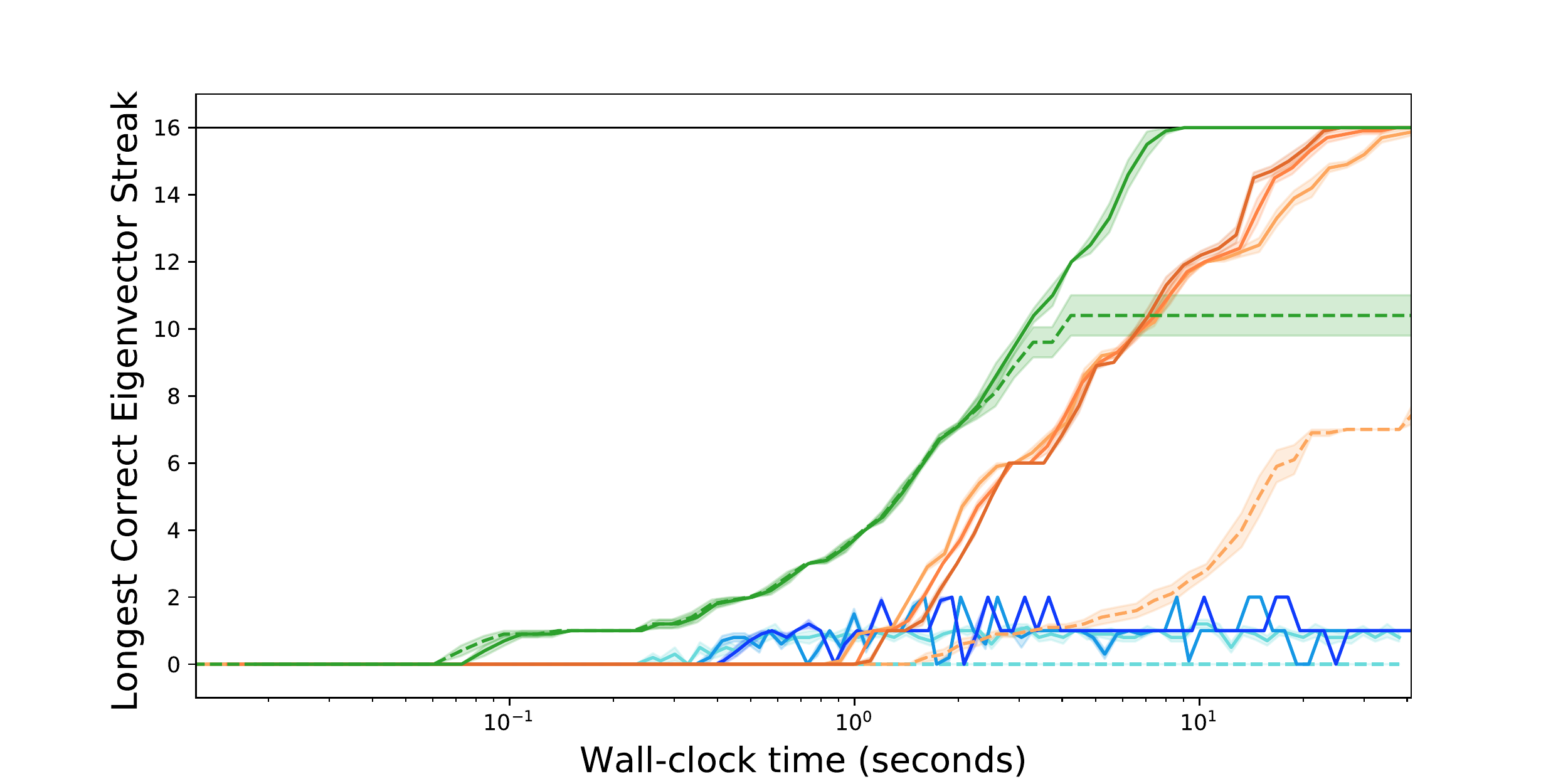}
     \caption{$V=\pi/256$}
    \end{subfigure}
    \begin{subfigure}[t]{0.5\textwidth}
      \includegraphics[width=\textwidth,trim=30 0 0 0]{./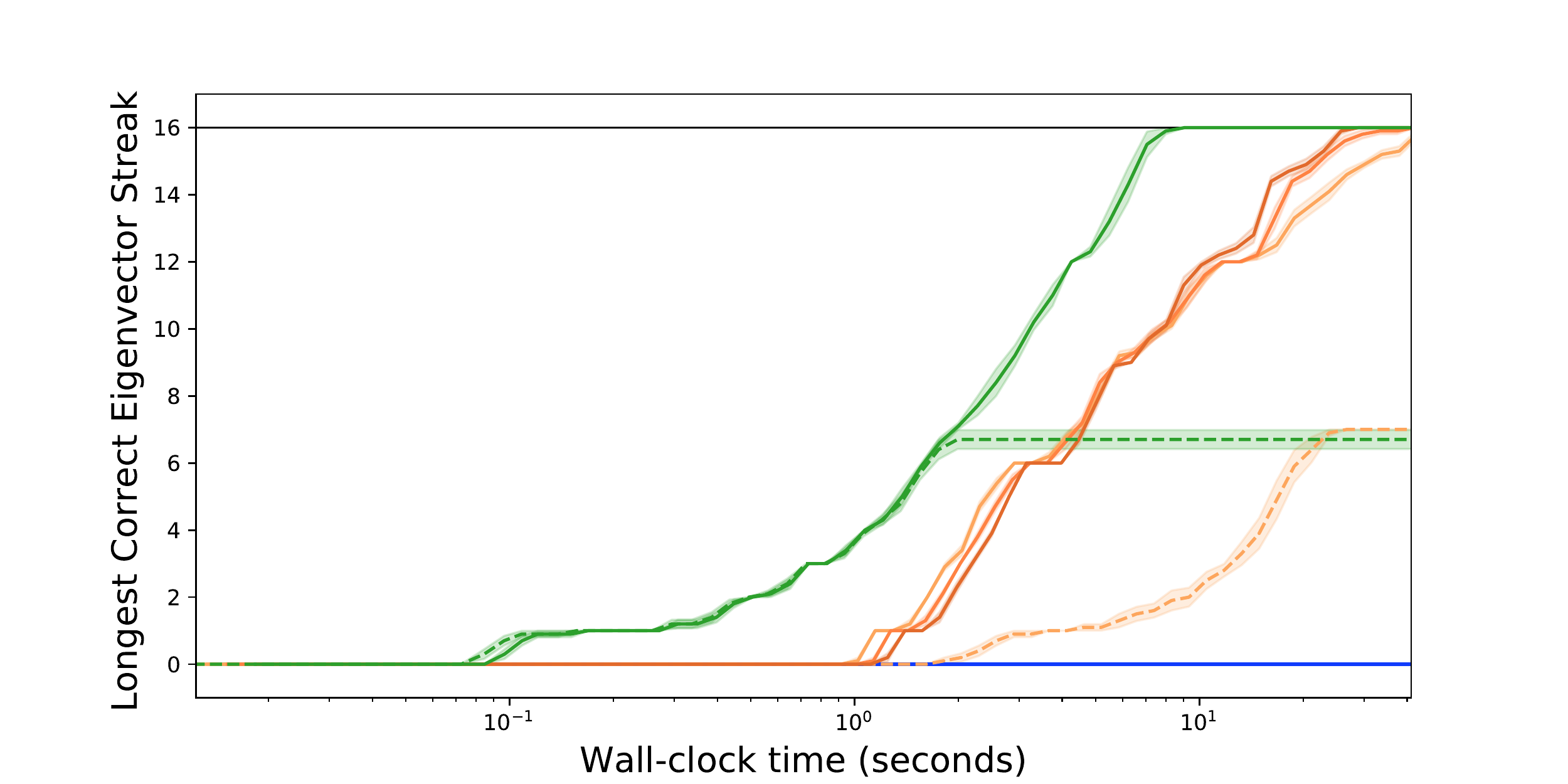}
      \caption{$V=\pi/512$}
    \end{subfigure}
    \begin{subfigure}[t]{0.5\textwidth}
      \includegraphics[width=\textwidth,trim=30 0 0 0]{./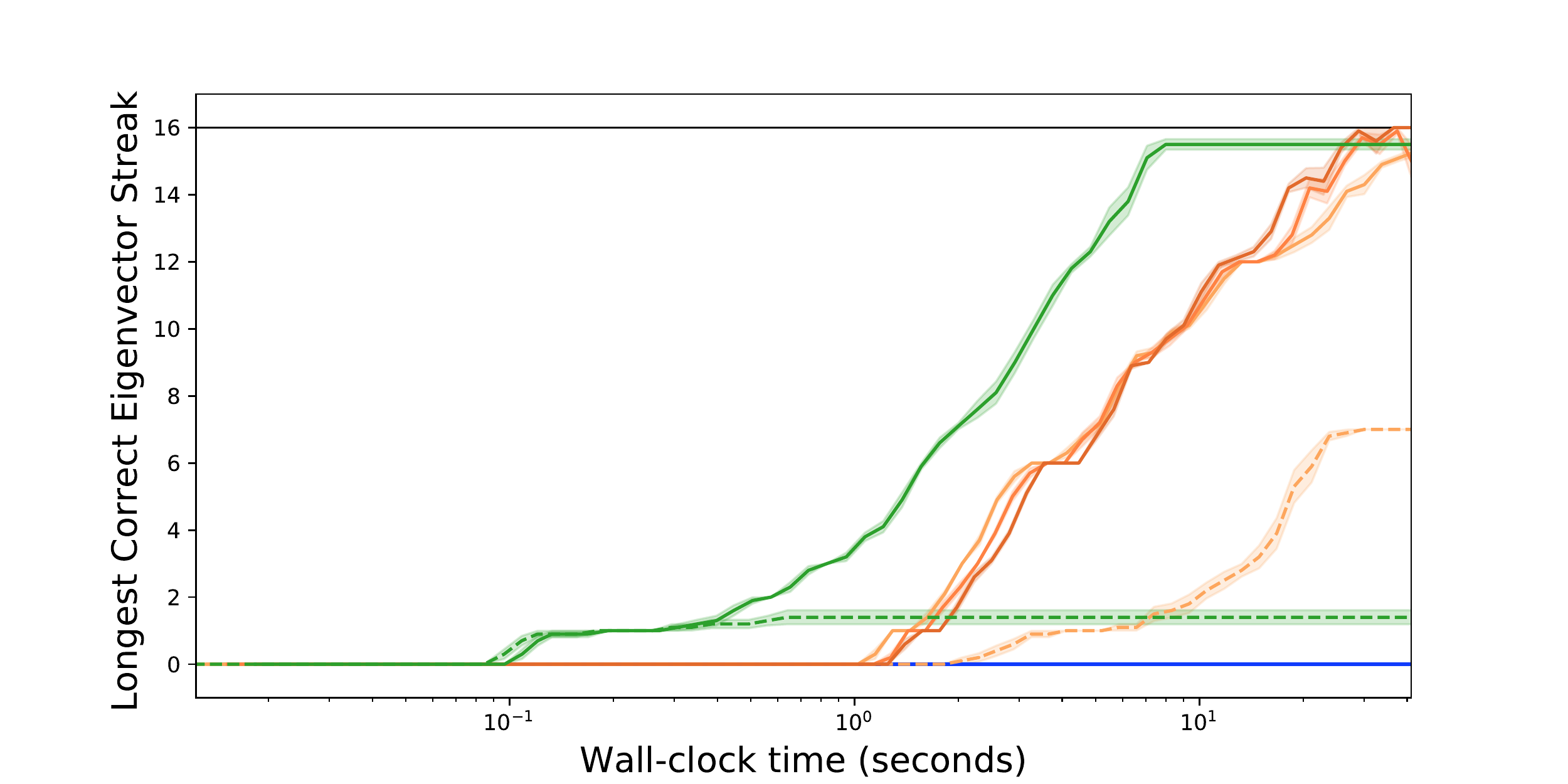}
      \caption{$V=\pi/1024$}
    \end{subfigure}
    \caption{MNIST}
\end{figure*}

\begin{figure*}
  \begin{subfigure}[t]{0.5\textwidth}
    \centering
    \includegraphics[width=4.5cm,trim=30 0 0 0]{./plots/legend.pdf}
  \end{subfigure}
    \begin{subfigure}[t]{0.5\textwidth}
      \includegraphics[width=\textwidth,trim=30 0 0 0]{./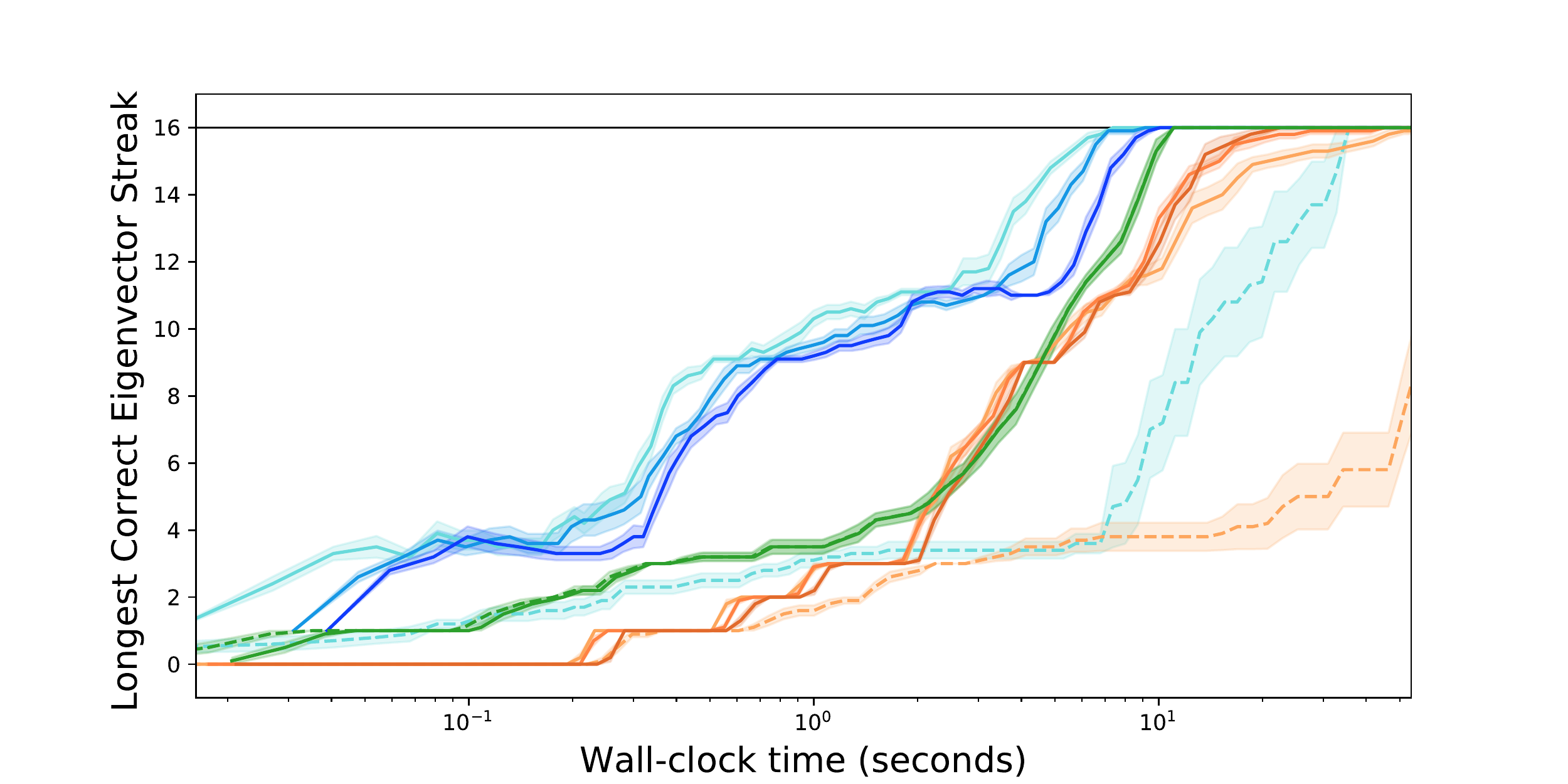}
      \caption{$V=\pi/16$}
    \end{subfigure}
    \begin{subfigure}[t]{0.5\textwidth}
      \includegraphics[width=\textwidth,trim=30 0 0 0]{./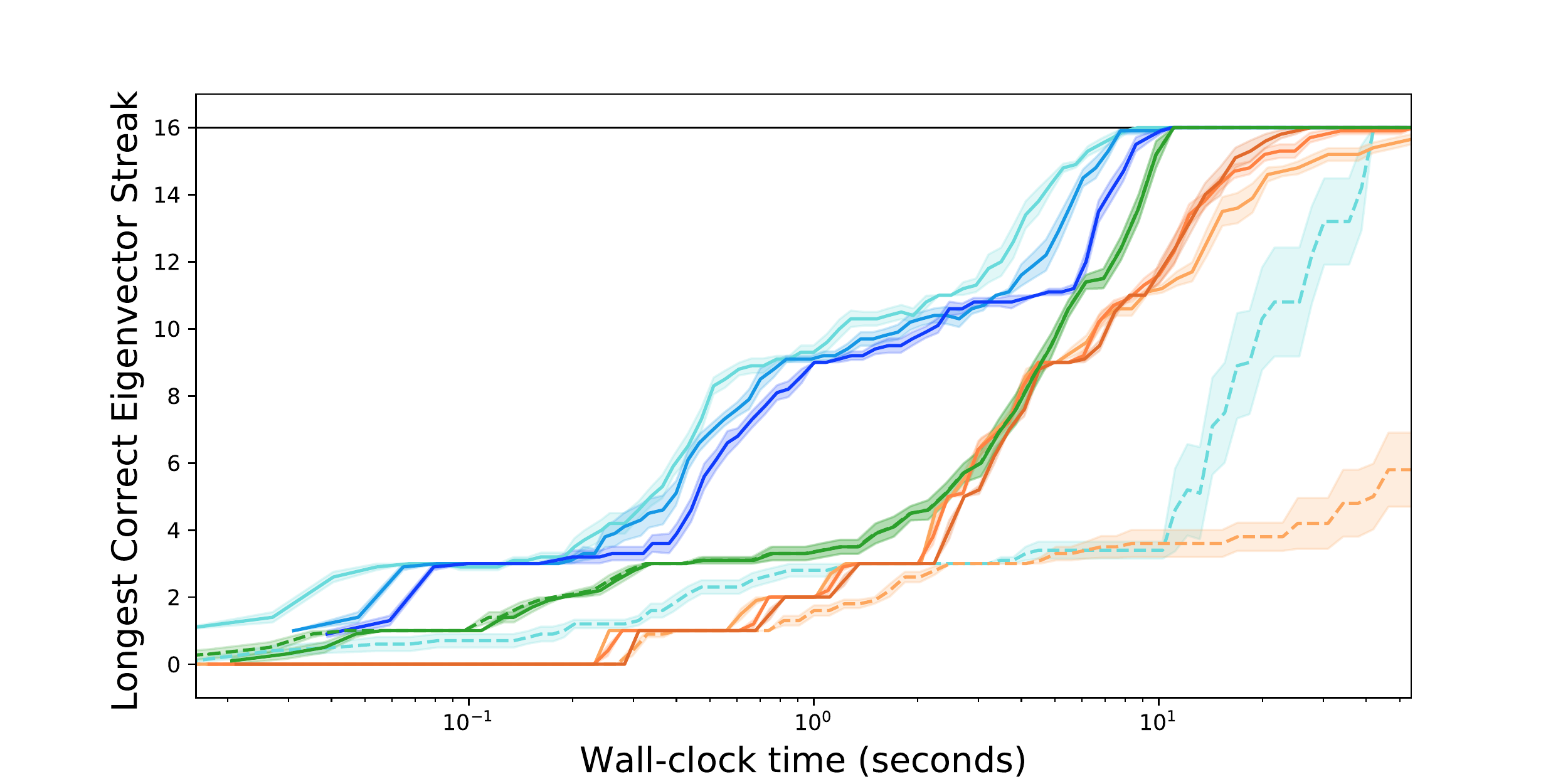}
      \caption{$V=\pi/32$}
    \end{subfigure}
    \begin{subfigure}[t]{0.5\textwidth}
      \includegraphics[width=\textwidth,trim=30 0 0 0]{./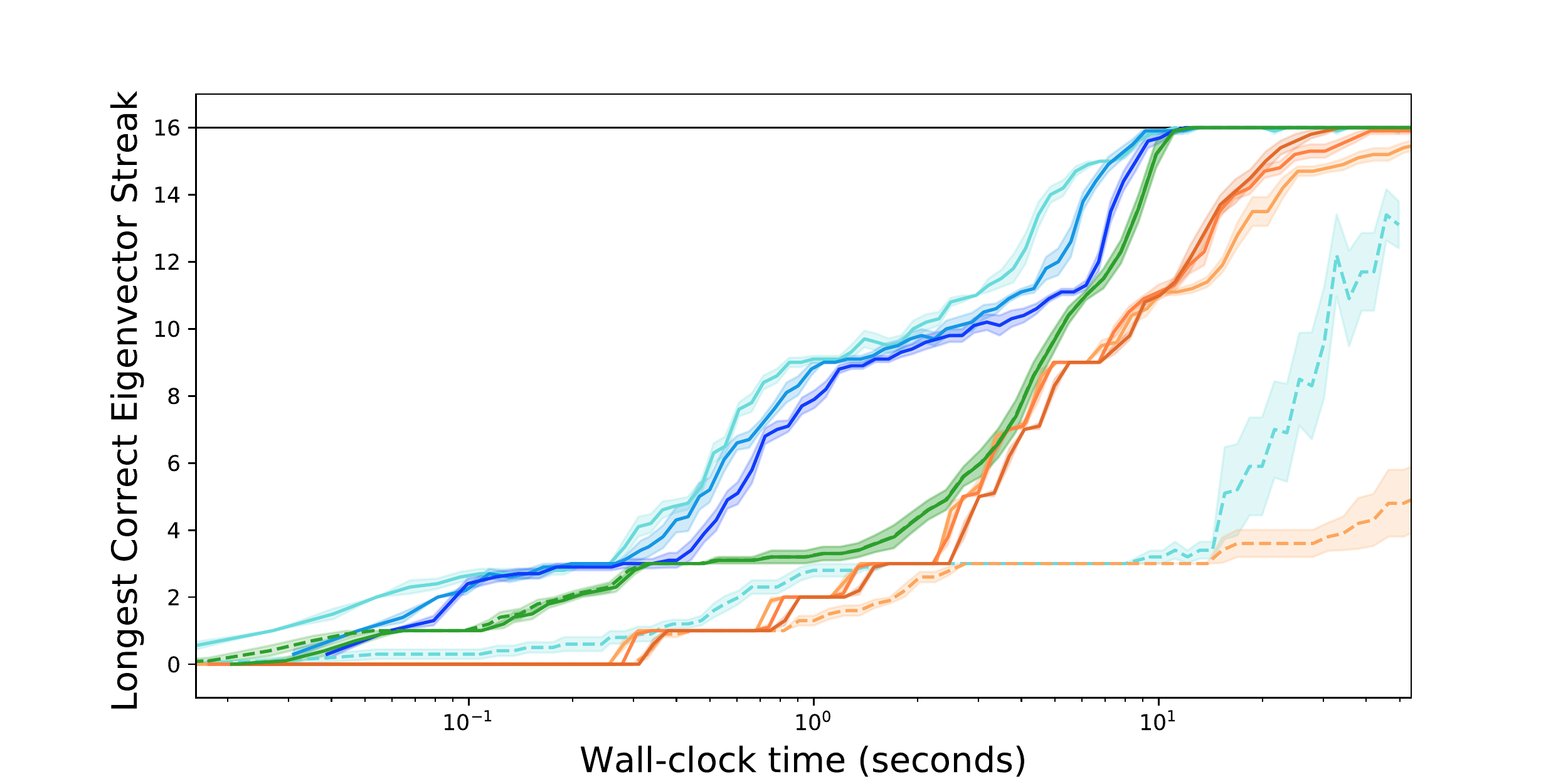}
      \caption{$V=\pi/64$}
    \end{subfigure}
    \begin{subfigure}[t]{0.5\textwidth}
      \includegraphics[width=\textwidth,trim=30 0 0 0]{./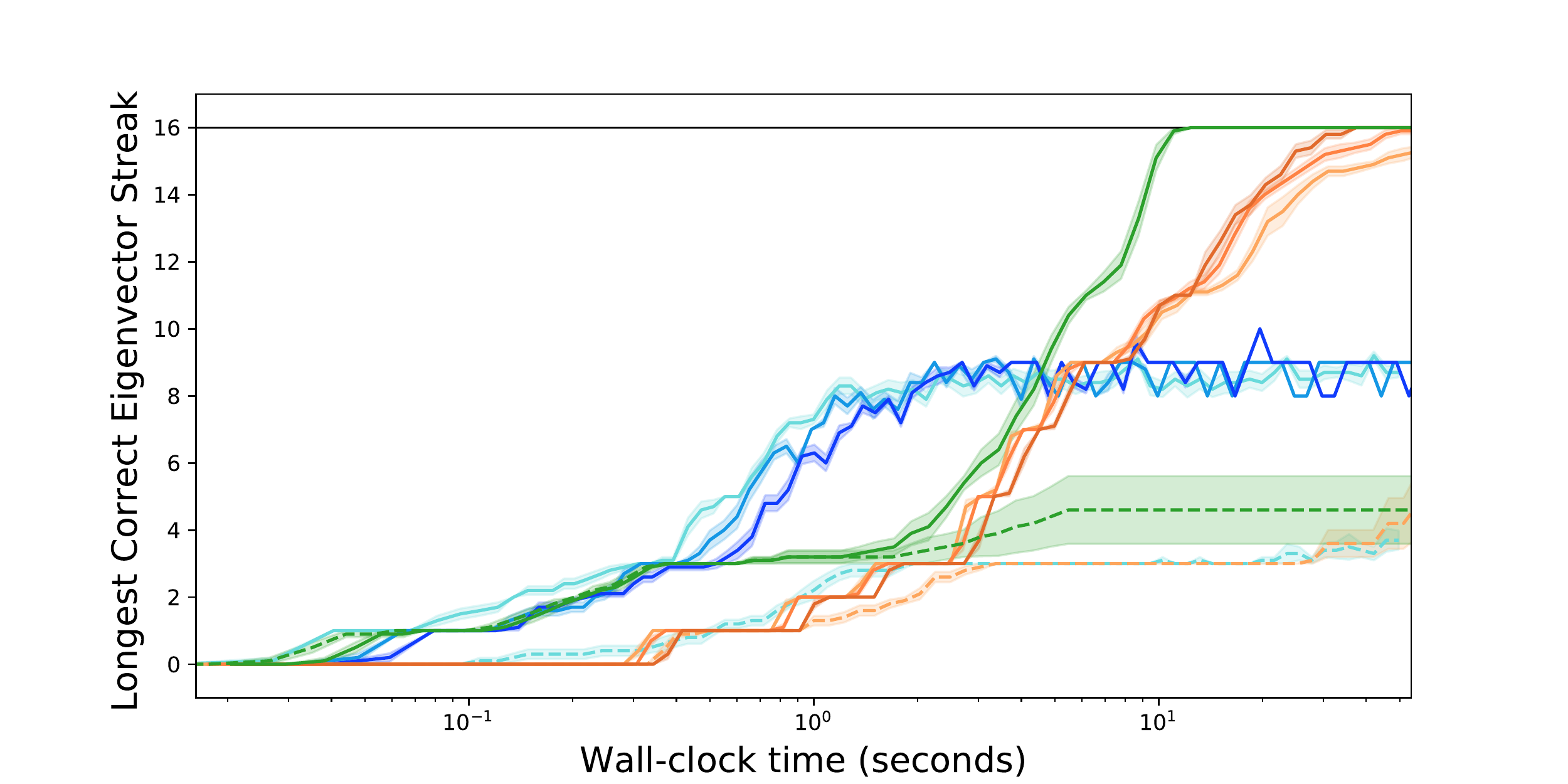}
      \caption{$V=\pi/128$}
    \end{subfigure}
    \begin{subfigure}[t]{0.5\textwidth}
      \includegraphics[width=\textwidth,trim=30 0 0 0]{./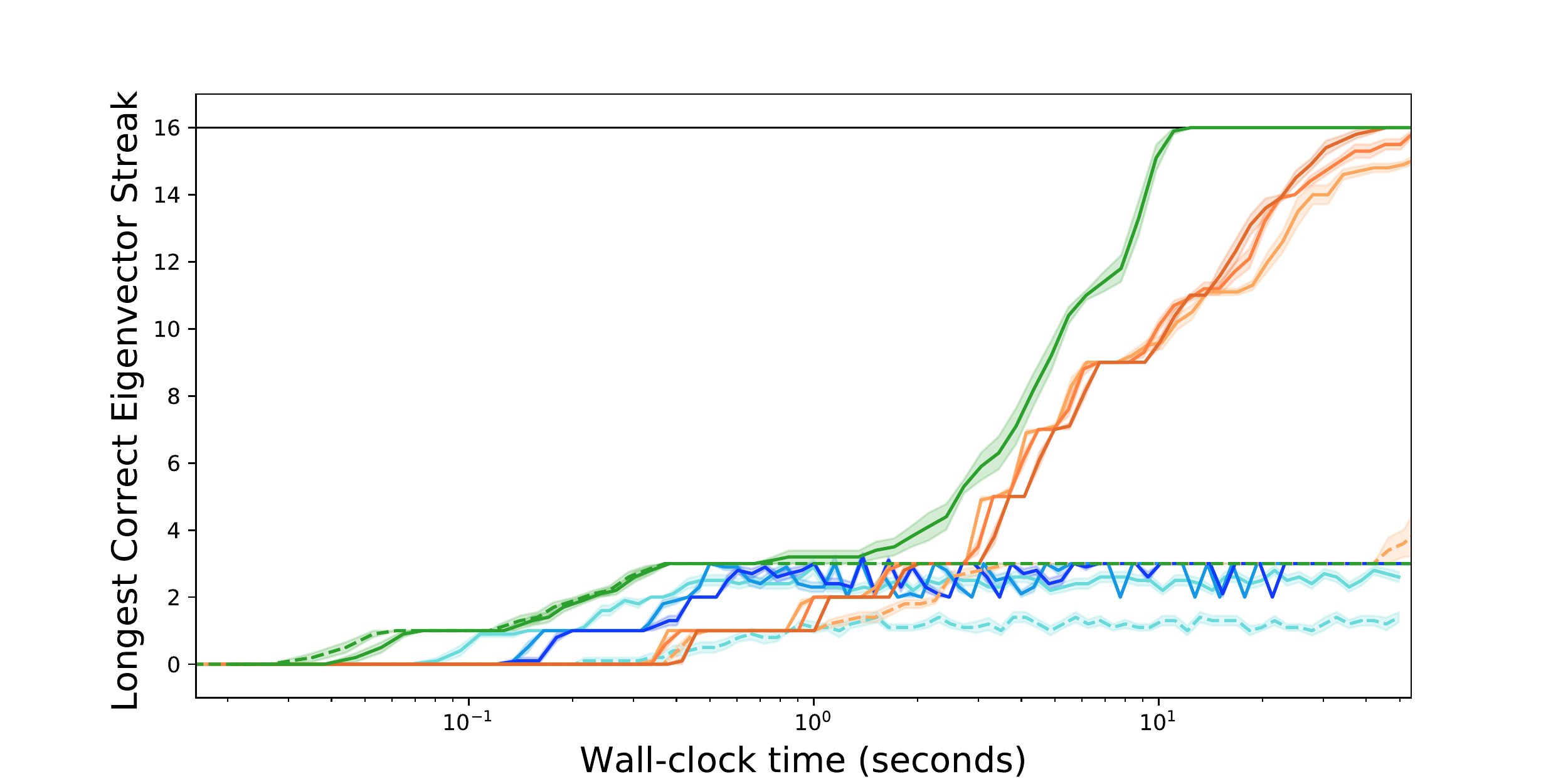}
     \caption{$V=\pi/256$}
    \end{subfigure}
    \begin{subfigure}[t]{0.5\textwidth}
      \includegraphics[width=\textwidth,trim=30 0 0 0]{./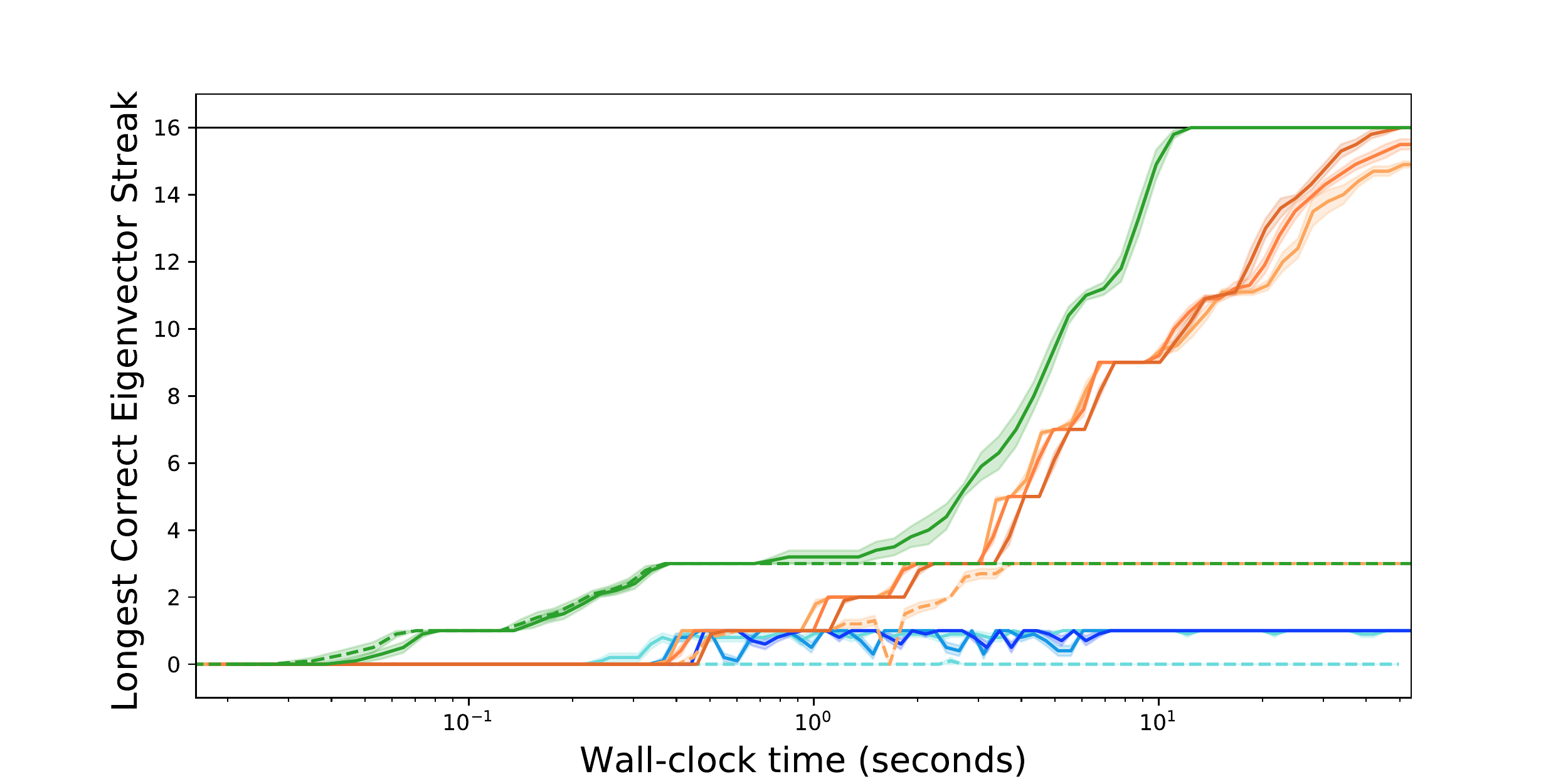}
      \caption{$V=\pi/512$}
    \end{subfigure}
    \begin{subfigure}[t]{0.5\textwidth}
      \includegraphics[width=\textwidth,trim=30 0 0 0]{./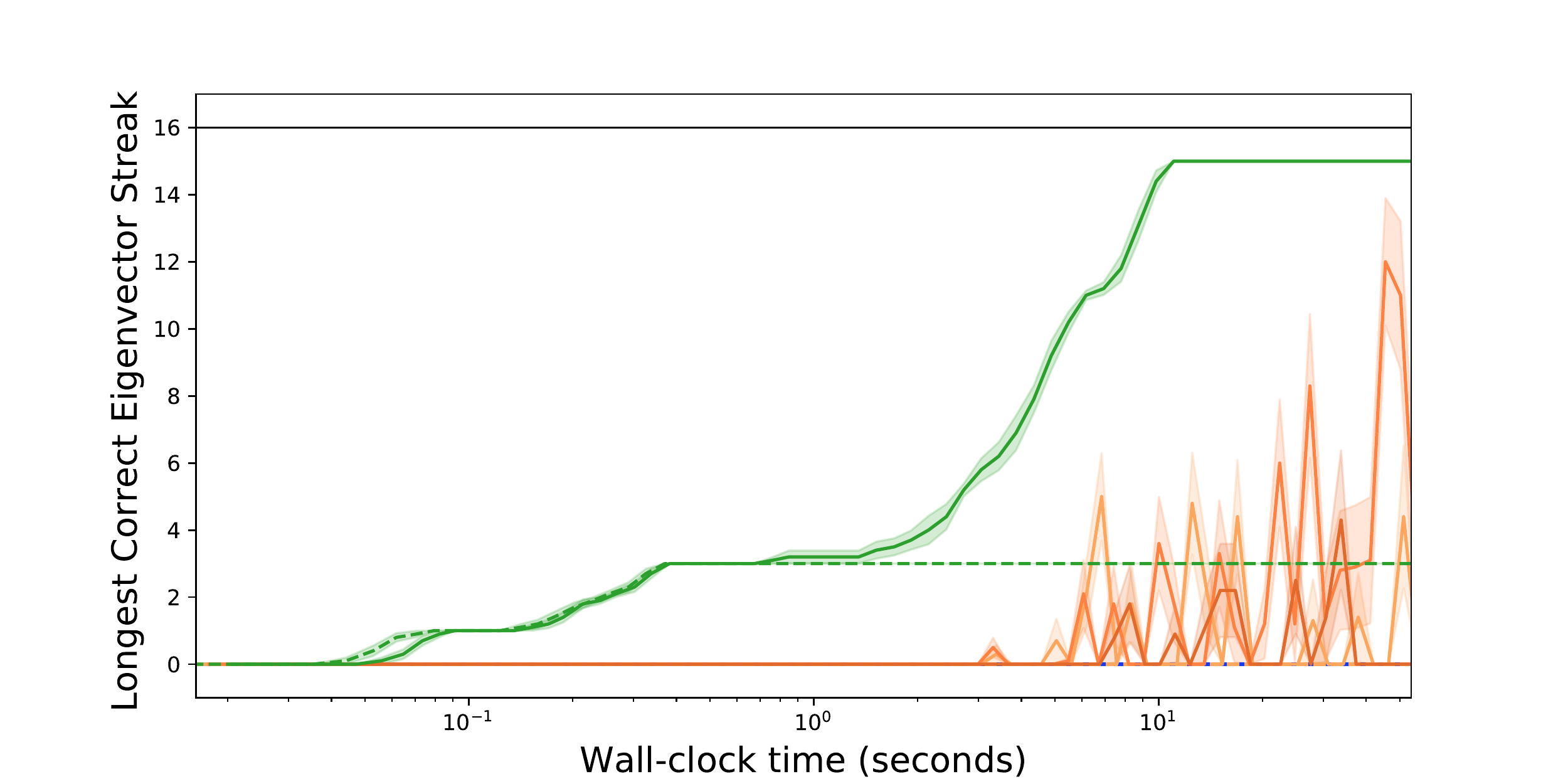}
      \caption{$V=\pi/1024$}
    \end{subfigure}
    \caption{CIFAR10}
\end{figure*}

\begin{figure*}
  \begin{subfigure}[t]{0.5\textwidth}
    \centering
    \includegraphics[width=4.5cm,trim=30 0 0 0]{./plots/legend.pdf}
  \end{subfigure}
    \begin{subfigure}[t]{0.5\textwidth}
      \includegraphics[width=\textwidth,trim=30 0 0 0]{./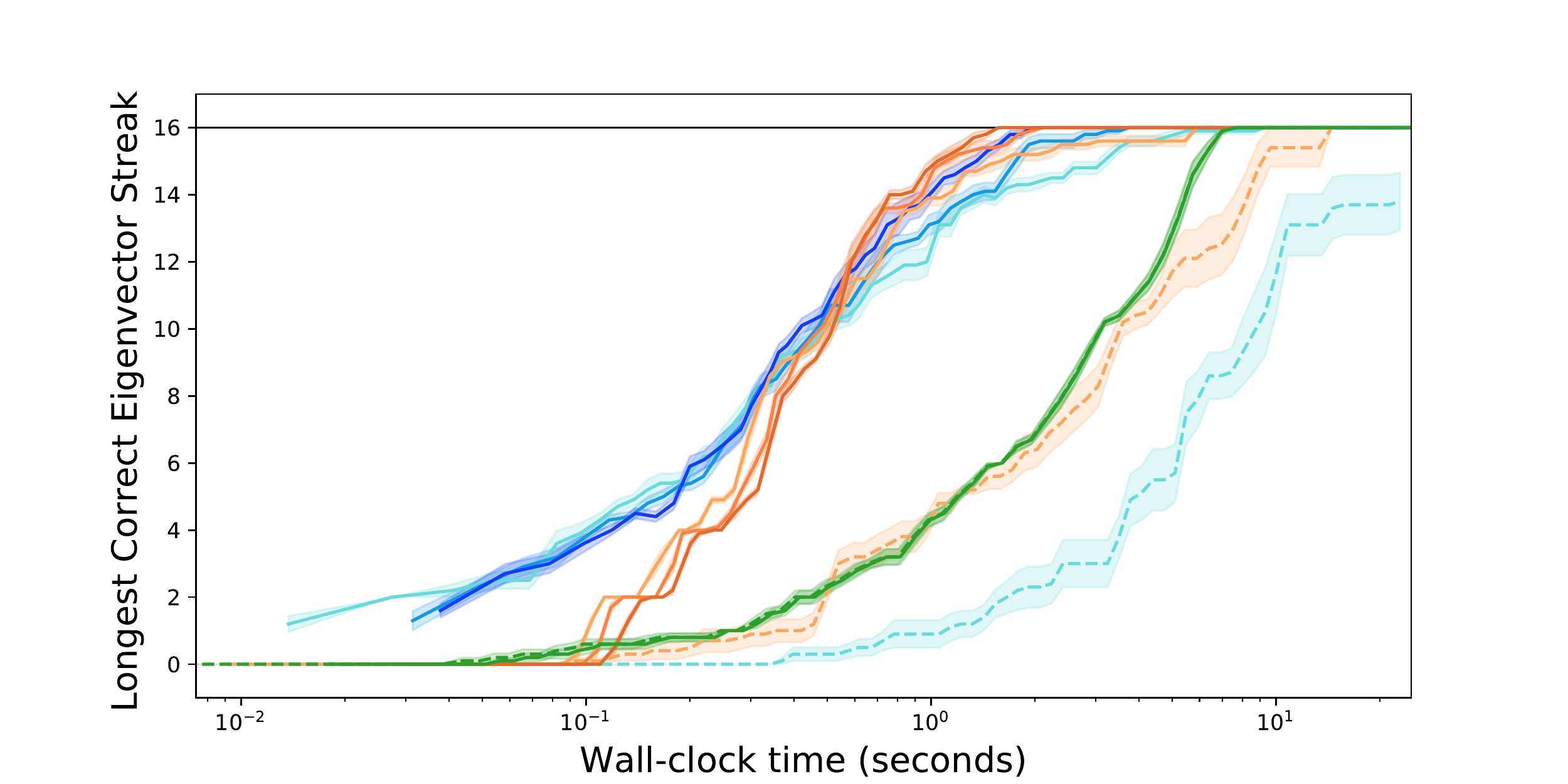}
      \caption{$V=\pi/16$}
    \end{subfigure}
    \begin{subfigure}[t]{0.5\textwidth}
      \includegraphics[width=\textwidth,trim=30 0 0 0]{./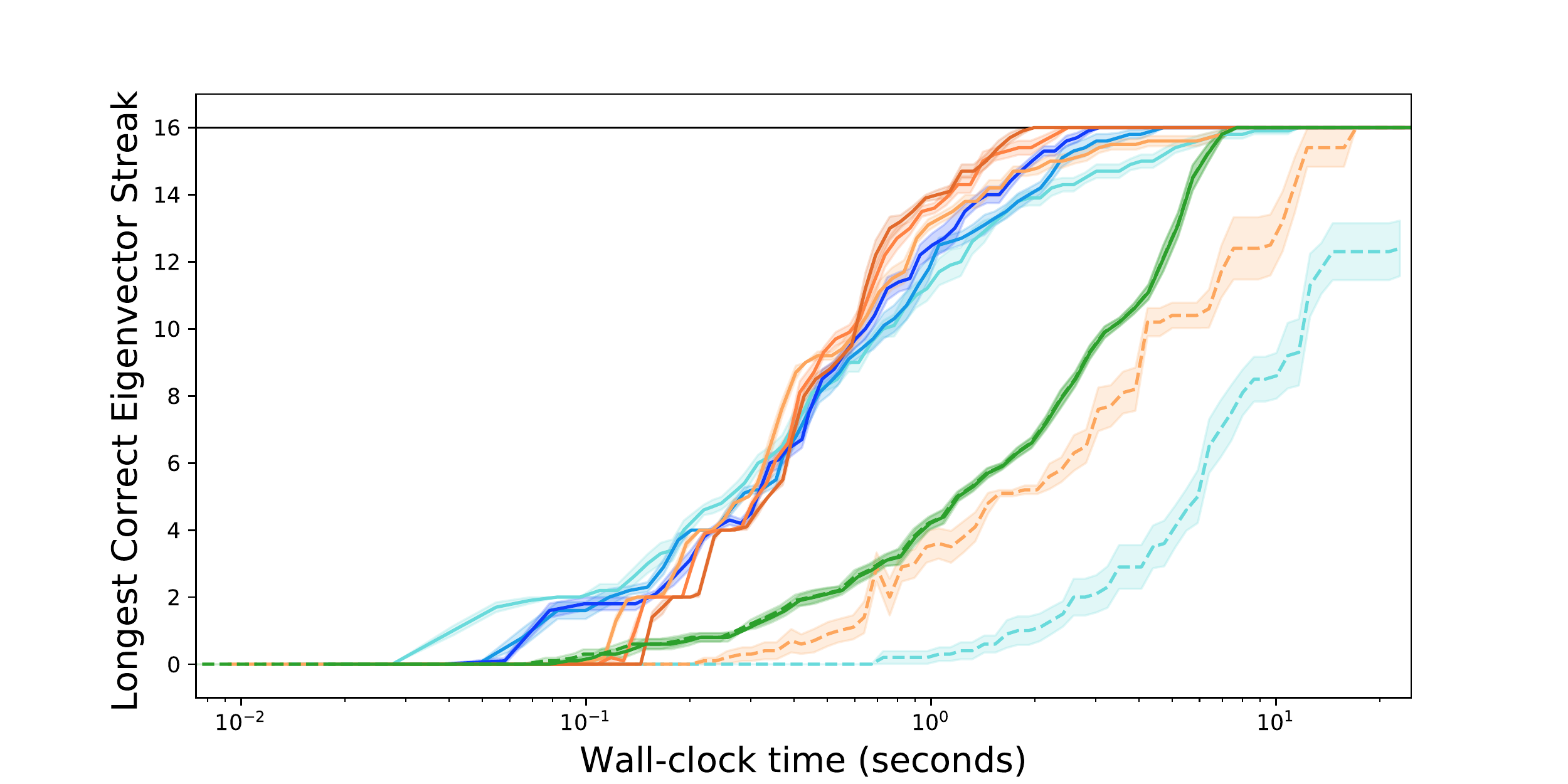}
      \caption{$V=\pi/32$}
    \end{subfigure}
    \begin{subfigure}[t]{0.5\textwidth}
      \includegraphics[width=\textwidth,trim=30 0 0 0]{./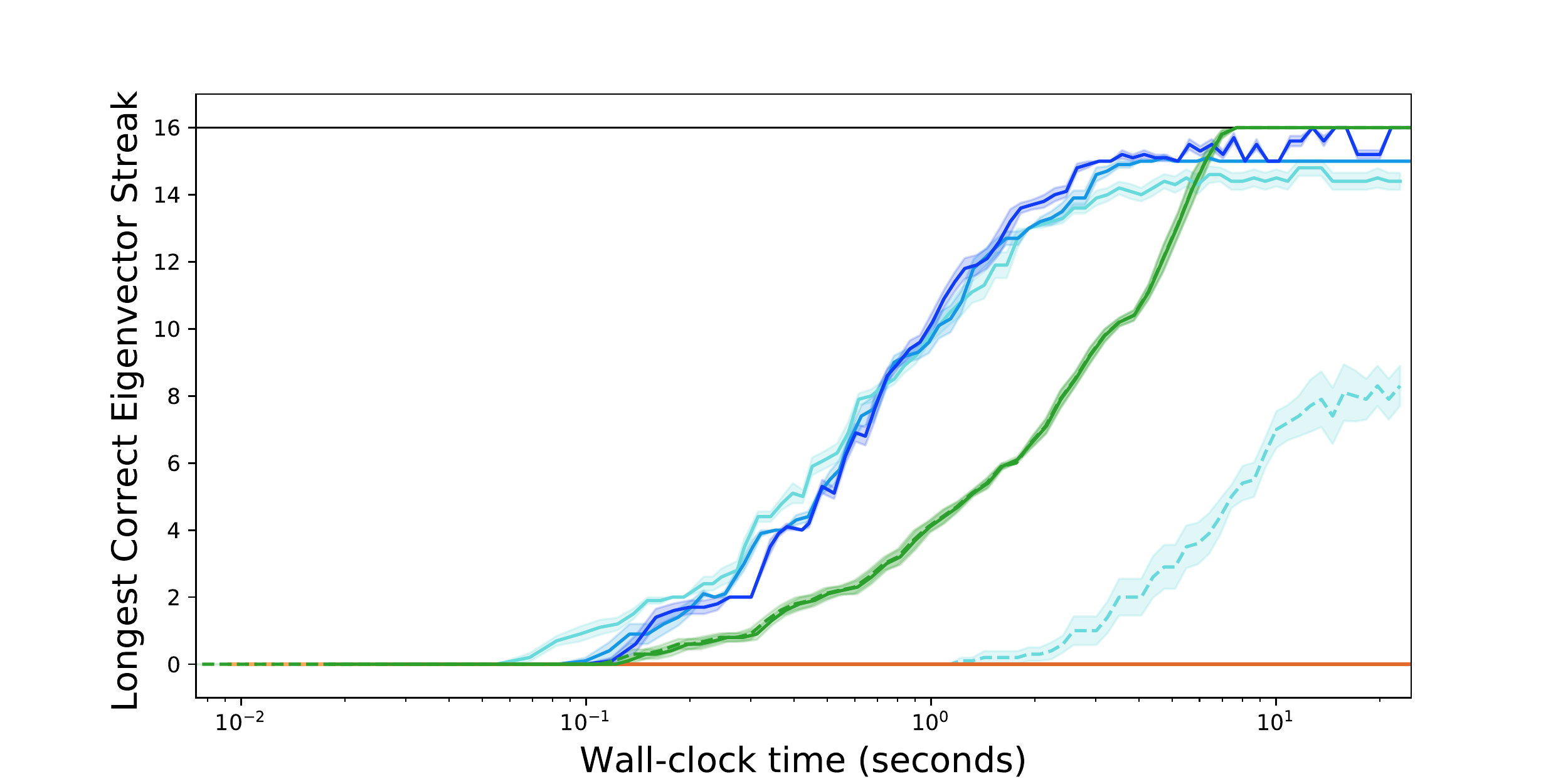}
      \caption{$V=\pi/64$}
    \end{subfigure}
    \begin{subfigure}[t]{0.5\textwidth}
      \includegraphics[width=\textwidth,trim=30 0 0 0]{./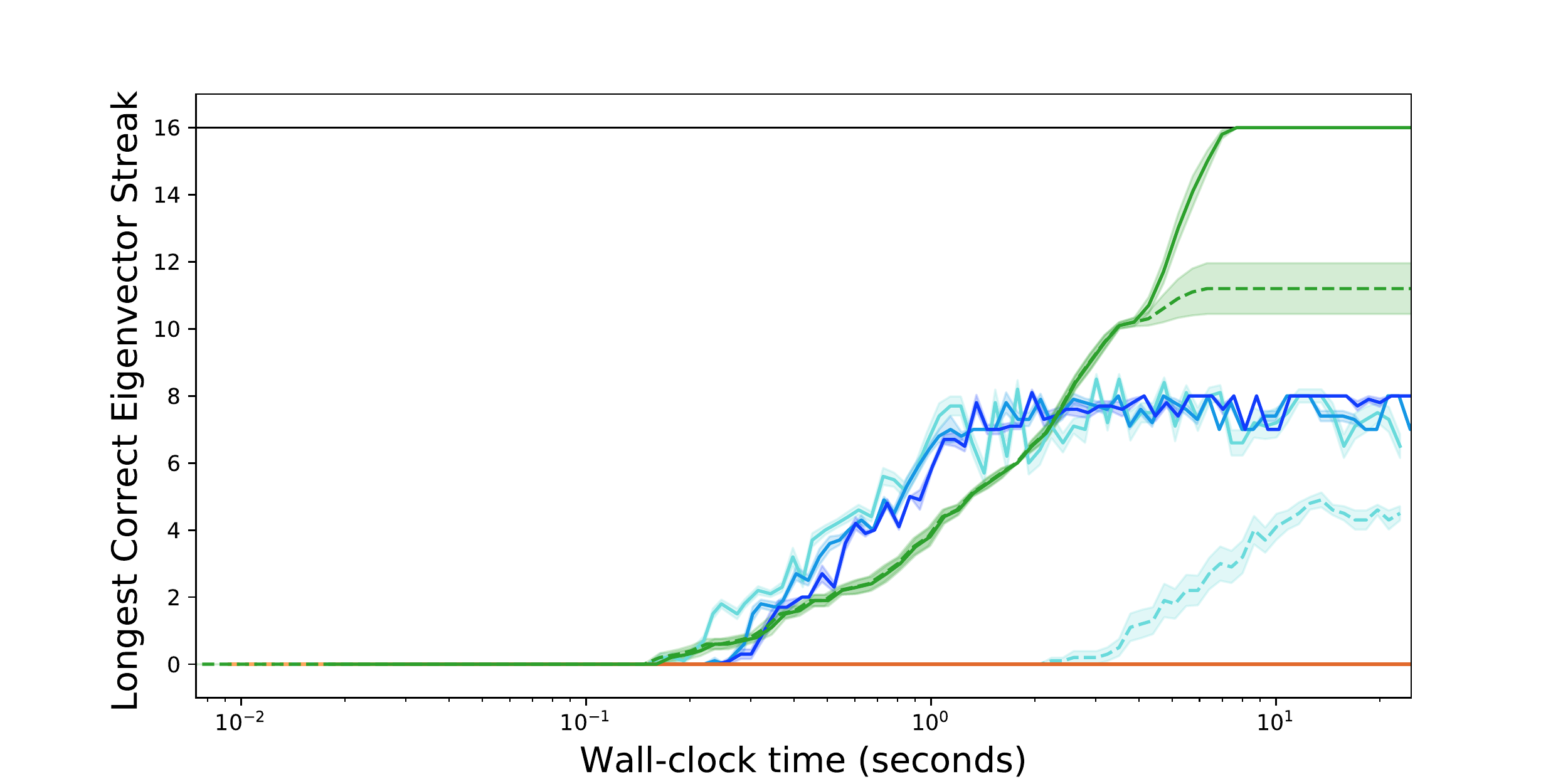}
      \caption{$V=\pi/128$}
    \end{subfigure}
    \begin{subfigure}[t]{0.5\textwidth}
      \includegraphics[width=\textwidth,trim=30 0 0 0]{./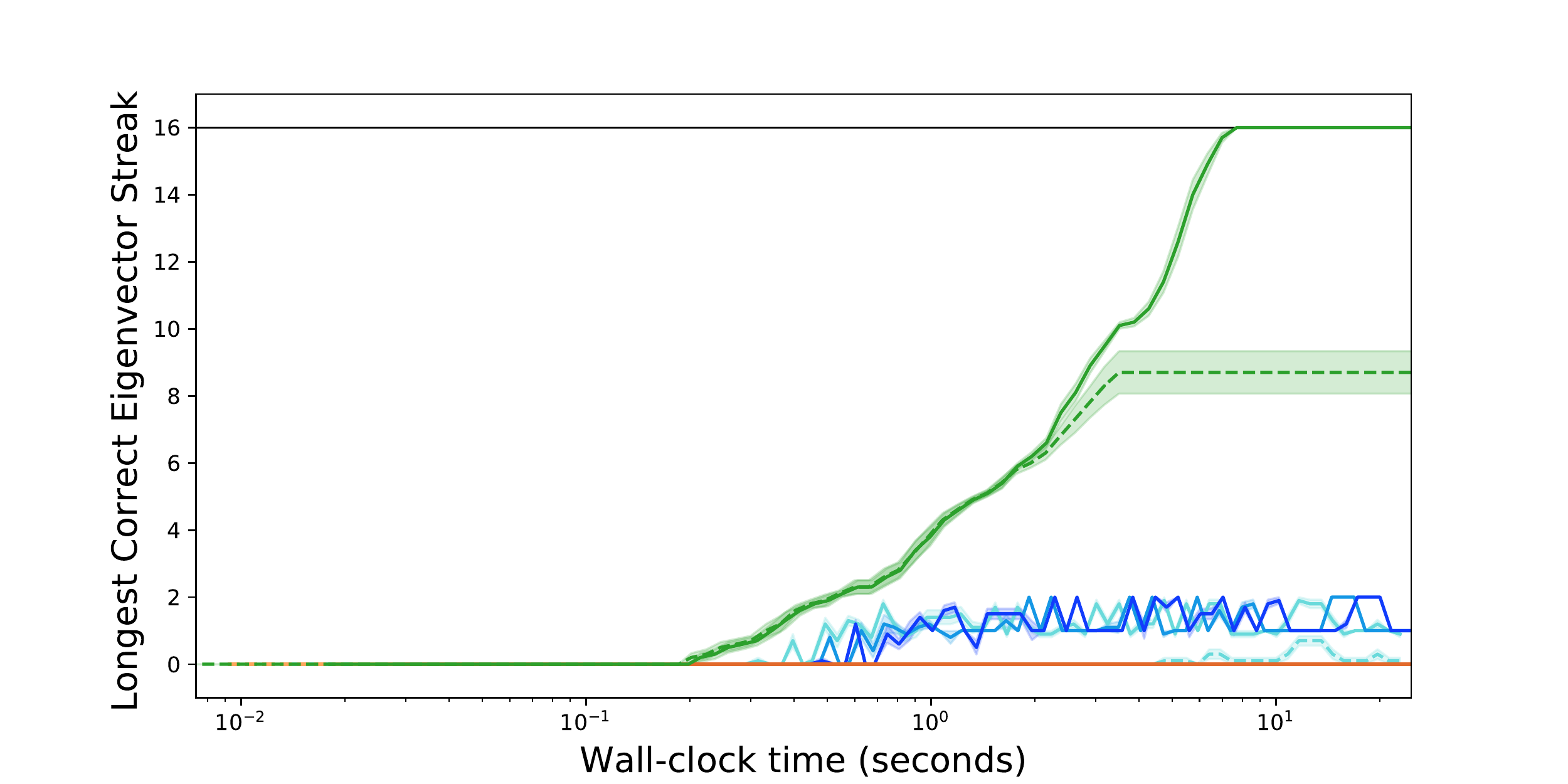}
     \caption{$V=\pi/256$}
    \end{subfigure}
    \begin{subfigure}[t]{0.5\textwidth}
      \includegraphics[width=\textwidth,trim=30 0 0 0]{./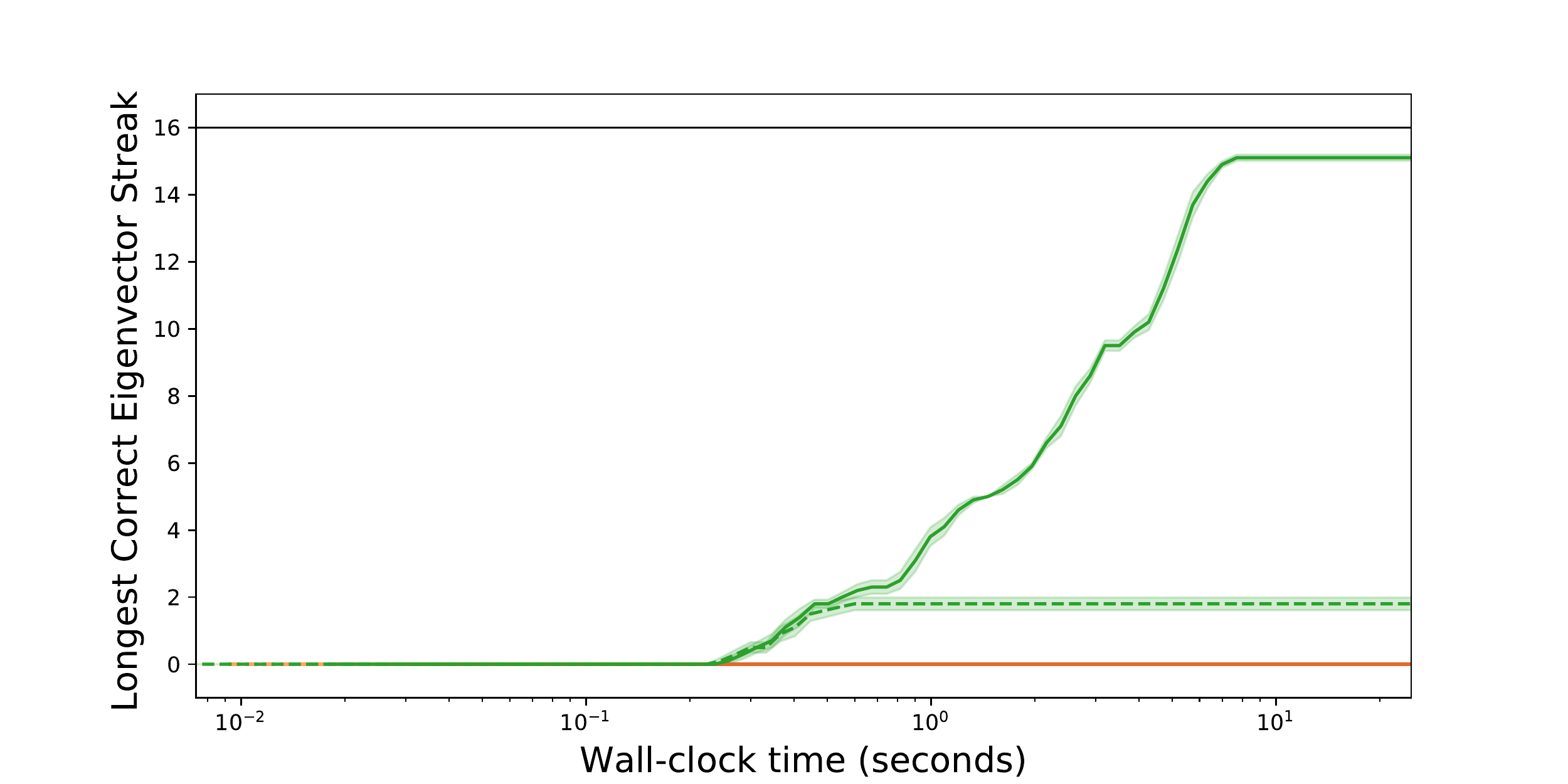}
      \caption{$V=\pi/512$}
    \end{subfigure}
    \begin{subfigure}[t]{0.5\textwidth}
      \includegraphics[width=\textwidth,trim=30 0 0 0]{./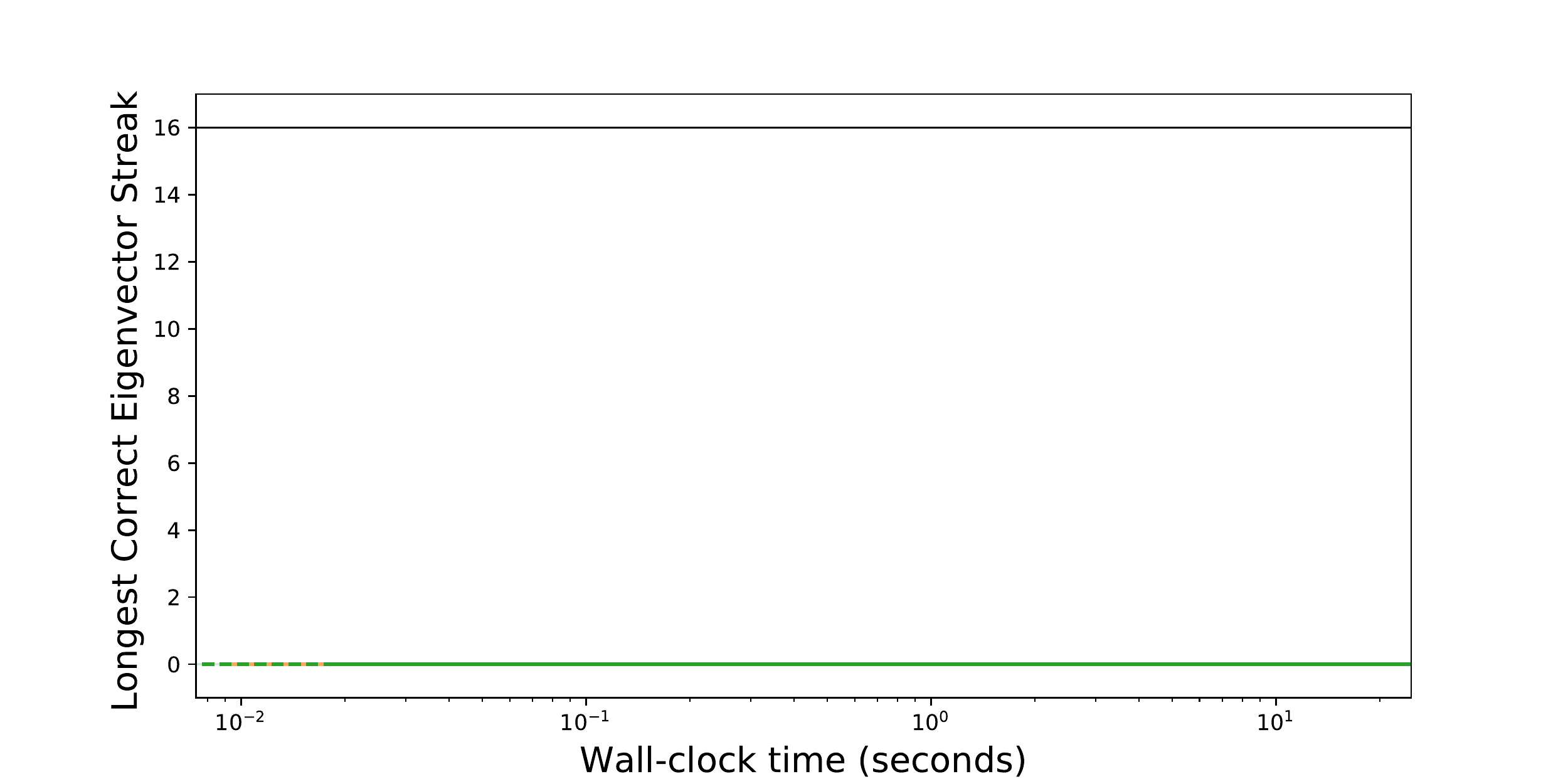}
      \caption{$V=\pi/1024$}
    \end{subfigure}
    \caption{NIPS bag of words}
\end{figure*}

\end{document}